\definecolor{CeruleanBlue}{rgb}{0,0.48,0.65}
\newtheoremstyle{break}
  {\topsep}{\topsep}%
  {\itshape}{}%
  {\bfseries}{}%
  {\newline}{}%
\tikzset{
  invisible/.style={opacity=0},
  visible on/.style={alt={#1{}{invisible}}},
  alt/.code args={<#1>#2#3}{%
    \alt<#1>{\pgfkeysalso{#2}}{\pgfkeysalso{#3}}
  },
}
\newtheorem{definition}{\textbf{Definition}}
\newtheorem{lemma}{\textbf{Lemma}}
\newtheorem{theorem}{\textbf{Theorem}}
\newtheorem*{insight*}{\textbf{Observation}}
\newtheorem*{proposition*}{\textbf{Proposition}}
\newtheorem*{lemmai*}{\textbf{Lemma (informal)}}
\newtheorem{remark}{\textbf{Remark}}
\newtheorem{example}{\textbf{Example}}[section]
\newcommand{\cL}{\mathcal{L}}
\newcommand{\cH}{\mathcal{H}}
\newcommand{\cT}{\mathcal{T}}
\newcommand{\cC}{\mathcal{C}}
\newcommand{\cO}{\mathcal{O}}
\renewcommand{\cite}[1]{\citep{#1}}
\newcommand{\vast}{\bBigg@{2.5}}
\newcommand{\Vast}{\bBigg@{5}}
\newenvironment{customthm}[1]
  {\innercustomthm}
  {\endinnercustomthm}
\newtheorem{examp}{\normalfont{\textbf{Example}}}[section]
\newtheorem{lemm}{\normalfont{\textbf{Lemma}}}[section]
\newtheorem{theo}{\normalfont{\textbf{Theorem}}}[section]
\newenvironment{prooflemma}{\textit{Proof of Lemma \ref{lemma:excitation_result}.}}{\hfill$\square$}
\newenvironment{proofregret}{\textit{Proof of Theorem \ref{thm:regret_bound_paper}.}}{\hfill$\square$}
\newenvironment{proofkwidth}{\textit{Proof of Theorem \ref{thm:kwidthresult}.}}{\hfill$\square$}
\DeclareMathOperator*{\E}{\mathbb{E}}
\DeclareMathOperator*{\Prob}{\mathbb{P}}
\DeclareMathOperator*{\cov}{\text{cov}}
\DeclareMathOperator*{\tr}{\text{tr}}
\def\polylog{\operatorname{polylog}}
\def\poly{\operatorname{poly}}
\title{SLIP: Learning to Predict in Unknown Dynamical Systems with Long-Term Memory}
\author{Paria Rashidinejad\thanks{\href{mailto:paria.rashidinejad@berkeley.edu}{\texttt{paria.rashidinejad@berkeley.edu}}} \quad Jiantao Jiao\thanks{\href{mailto:jiantao@eecs.berkeley.edu}{\texttt{jiantao@eecs.berkeley.edu}}} \quad Stuart Russell\thanks{\href{mailto:russell@cs.berkeley.edu}{\texttt{russell@cs.berkeley.edu}}}\\ { }\\
Department of Electrical Engineering and Computer Sciences\\
University of California, Berkeley}
\date{\today}
\begin{document}

\maketitle

\begin{abstract}
We present an efficient and practical (polynomial time) algorithm for online prediction in unknown and partially observed linear dynamical systems (LDS) under stochastic noise. When the system parameters are known, the optimal linear predictor is the Kalman filter.
However, the performance of existing predictive models is poor in important classes of LDS that are only marginally stable and exhibit long-term forecast memory.
We tackle this problem through bounding the generalized Kolmogorov width of the Kalman filter model by spectral methods and conducting tight convex relaxation. We provide a finite-sample analysis, showing that our algorithm competes with Kalman filter in hindsight with only logarithmic regret. Our regret analysis relies on Mendelson’s small-ball method, providing sharp error bounds without concentration, boundedness, or exponential forgetting assumptions. We also give experimental results demonstrating that our algorithm outperforms state-of-the-art methods. Our theoretical and experimental results shed light on the conditions required for efficient probably approximately correct (PAC) learning of the Kalman filter from partially observed data.

\end{abstract}

{
  \hypersetup{linkcolor=black}
  \tableofcontents
}

\newpage
\section{Introduction}
Predictive models based on linear dynamical systems (LDS) have been successfully used in a wide range of applications with a history of more than half a century. Example applications in AI-related areas range from control systems and robotics \cite{durrant2006simultaneous} to natural language processing \cite{belanger2015linear}, healthcare \Citep{parker1999model}, and computer vision \Citep{chen2011kalman, coskun2017long}. Other applications are found throughout the physical, biological, and social sciences in areas such as econometrics, ecology, and climate science.

The evolution of a discrete-time LDS is described by the following state-space model with $t\geq 1$:
\begin{align}\notag 
\begin{split}
    h_{t+1} & = Ah_t + Bx_t + \eta_t,\\
    y_t & = Ch_t + Dx_t + \zeta_t,    
\end{split}
\end{align}
where $h_t$ are the latent states, $x_t$ are the inputs, $y_t$ are the observations, and $\eta_t$ and $\zeta_t$ are process and measurement noise, respectively. 

When the system parameters are known, the optimal linear predictor is the Kalman filter. When they are unknown, a common approach for prediction is to first estimate the parameters of a Kalman filter and then use them to predict system evolution. Direct parameter estimation usually involves solving a non-convex optimization problem, such as in the expectation maximization (EM) algorithm, whose theoretical guarantees may be difficult \cite{yu2018identification}. Several recent works have studied finite-sample theoretical properties of LDS identification. 
For fully observed LDS, it has been shown that system identification is possible without a strict stability ($\rho(A) < 1$) assumption, where $\rho(A)$ is the spectral radius of $A$ \cite{simchowitz2018learning, sarkar2018near, faradonbeh2018finite}. 
For partially observed LDS, methods such as gradient descent \cite{hardt2018gradient} and subspace identification \cite{tsiamis2019finite} are developed, whose performances degrade polynomially when $\rho(A)$ is close to one.

We focus on constructing \emph{predictors} of an LDS without identifying the parameters. In the case of a stochastic LDS, the recent work of \citet{tsiamis2020online} is most related to our question. Their method performs linear regression over a fixed-length lookback window to predict the next observation $y_t$ given its causal history. Without using a mixing-time argument, \citet{tsiamis2020online} showed logarithmic regret with respect to the Kalman filter in hindsight even when the system is marginally stable ($\rho(A)\leq 1$). However, the prediction performance deteriorates if the true Kalman filter exhibits \emph{long-term forecast memory}.

To illustrate the notion of forecast memory, we recall the recursive form of the (stationary) Kalman filter for $1\leq t\leq T$, where $T$ is the final horizon
\citep[chap.~9]{kailath2000linear}:
\begin{align}
    \hat{h}_{t+1|t} & = A \hat{h}_{t|t-1} + B x_t + K (y_t - C \hat{h}_{t|t-1} - D x_t) \label{eqn.kalmanfilter} \\
    & = (A-KC) \hat{h}_{t|t-1} + Ky_t + (B-KD)x_t, \label{eqn.kalmanwithg}
\end{align}
where $\hat{h}_{t|t-1}$ denotes the optimal linear predictor of $h_t$ given all the observations $y_1,y_2,\ldots,y_{t-1}$ and inputs $x_1, x_2, \dots, x_{t-1}$. The matrix $K$ is called the (predictive) Kalman gain.\footnote{One can interpret the Kalman filter Equation~(\ref{eqn.kalmanfilter}) as linear combinations of optimal predictor given existing data $A \hat{h}_{t|t-1}$, known drift $Bx_t$, and amplified innovation $K(y_t - C \hat{h}_{t|t-1}-D x_t)$, where the term $y_t - C \hat{h}_{t|t-1}-D x_t$, called the \emph{innovation} of process $y_t$, measures how much additional information $y_t$ brings compared to the known information of observations up to $y_{t-1}$. } The Kalman predictor of $y_t$ given $y_1,y_2,\ldots,y_{t-1}$ and $x_1,x_2,\ldots,x_t$, denoted by $\hat{y}_{t|t-1}$, is $C \hat{h}_{t|t-1} + D x_t$. Assume that $\hat{h}_{1|0} = 0$. By expanding Equation~(\ref{eqn.kalmanwithg}), we obtain
\begin{align}\label{eqn.kalmanexpanded}
   m_t & \triangleq  \hat{y}_{t|t-1} = \sum\limits_{i=1}^{t-1} CG^{t-i-1}K y_i + \sum\limits_{i=1}^{t-1}  CG^{t-i-1}(B-KD)x_i + Dx_t, 
\end{align}
where $G = A-KC$. In an LDS, the transition matrix $A$ controls how fast the process mixes---i.e., how fast the marginal distribution of $y_t$ becomes independent of $y_1$. However, it is $G$ that controls how long the \emph{forecast} memory is. Indeed, it was shown in \citet[chap.~14]{kailath2000linear} that if the spectral radius $\rho(G)$ is close to one, then the performance of a linear predictor that uses only $y_{t-k}$ to $y_{t-1}$ for fixed $k$ in predicting $y_t$ would be substantially worse than that of a predictor that uses all information $y_1$ up to $y_{t-1}$ as $t\to \infty$. Conceivably, the sample size required by the algorithm of \citet{tsiamis2020online} explodes to infinity as $\rho(G)\to 1$, since the predictor uses a fixed-length lookback window to conduct linear regression. 

The primary reason to focus on long-term forecast memory is the ubiquity of long-term dependence in real applications, where it is often the case that not all state variables change according to a similar timescale\footnote{Indeed, a common practice is to set the timescale to be small enough to handle the fastest-changing variables.} \Citep{chatterjee2010dbns}. For example, in a temporal model of the cardiovascular system, arterial elasticity changes on a timescale of years, while the contraction state of the heart muscles changes on a timescale of milliseconds.

Designing provably computationally and statistically efficient algorithms in the presence of long-term forecast memory is challenging, and in some cases, impossible. A related problem studied in the literature is the prediction of auto-regressive model with order infinity: AR$(\infty)$. Without imposing structural assumptions on the coefficients of an AR$(\infty)$ model, there is no hope to guarantee vanishing prediction error. One common approach to obtain a smaller representation is to make an exponential forgetting assumption to justify finite-memory truncation. This approach has been used in approximating AR$(\infty)$ with decaying coefficients \Citep{goldenshluger2001nonasymptotic}, LDS identification \Citep{hardt2018gradient}, and designing predictive models for LDS \Citep{tsiamis2020online, kozdoba2019line}. Inevitably, the performance of these methods degrade by either losing long-term dependence information or requiring very large sample complexity as $\rho(G)$ (and sometimes, $\rho(A)$) gets closer to one.

However, the Kalman predictor in~(\ref{eqn.kalmanexpanded}) does seem to have a structure and in particular, the coefficients are geometric in $G$, which gives us hope to exploit it. Our main contributions are the following:

\textbf{1. Generalized Kolmogorov width and spectral methods:} We analyze the \emph{generalized Kolmogorov width}, defined in Section \ref{sec:generalized_kwidth}, of the Kalman filter coefficient set. In Theorem \ref{thm:kwidthresult}, we show that when the matrix $G$ is diagonalizable with \emph{real} eigenvalues, the Kalman filter coefficients can be approximated by a linear combination of $\polylog{(T)}$ \emph{fixed known} filters with $1/\poly{(T)}$ error. It then motivates the algorithm design of linear regression based on the \emph{transformed} features, where we first transform the observations $y_{1:t}$ and inputs $x_{1:t}$ for $1\leq t\leq T$ via these fixed filters. In some sense, we use the transformed features to achieve a good bias-variance trade-off: the small number of features guarantees small variance and the generalized Kolmogorov width bound guarantees small bias. We show that the fixed known filters can be computed efficiently via spectral methods. Hence, we choose spectral LDS improper predictor (SLIP) as the name for our algorithm.

\textbf{2. Difficulty of going beyond real eigenvalues:} We show in Theorem \ref{thm:kwidthresult} that if the dimension of matrix $G$ in~(\ref{eqn.kalmanexpanded}) is at least $2$, then without assuming real eigenvalues one has to use at least $\Omega(T)$ filters to approximate an arbitrary Kalman filter. In other words, the Kalman filter coefficient set is very difficult to approximate via linear subspaces in general. This suggests some inherent difficulty of constructing provable algorithms for prediction in an arbitrary LDS. 
    
\textbf{3. Logarithmic regret uniformly for $\boldsymbol{\rho(G)\leq 1, \rho(A)\leq 1}$:} When $\rho(A)$ or $\rho(G)$ is equal to one the process does not mix and common assumptions regarding boundedness, concentration, or stationarity do not hold. Recently, \citet{mendelson2014learning} showed that such assumptions are not required and learning is possible under a milder assumption referred to as the \textit{small-ball} condition. In Theorem \ref{thm:regret_bound_paper}, we leverage this idea as well as results on self-normalizing martingales and show a logarithmic regret bound for our algorithm uniformly for $\rho(G)\leq 1$ and $\rho(A)\leq 1$. A roadmap to our regret analysis method is provided in Section \ref{sec:proof_roadmap}.

\textbf{4. Experimental results:} We demonstrate in simulations that our algorithm performs better than the state-of-the-art in LDS prediction algorithms. In Section \ref{sec:experiments}, we compare the performance of our algorithm to wave filtering \cite{hazan2017learning} and truncated filtering \cite{tsiamis2020online}. 

\section{Related work}
Adaptive filtering algorithms are classical methods for predicting observations without the intermediate step of system identification \cite{ljung1978convergence, fuller1980predictors, fuller1981properties, wei1987adaptive, lai1991recursive, lorentz1996constructive}. However, finite-sample performance and regret analysis with respect to optimal filters are typically not studied in the classical literature. From a machine learning perspective, finite-sample guarantees are critical for comparing the accuracy and sample efficiency of different algorithms. 
In designing algorithms and analyses for learning from sequential data, it is common to use mixing-time arguments \cite{yu1994rates}. These arguments justify finite-memory truncation \cite{hardt2018gradient, goldenshluger2001nonasymptotic} and support generalization bounds analogous to those in i.i.d.~data \cite{mohri2009rademacher,kuznetsov2017generalization}. An obvious drawback of mixing-time arguments is that the error bounds degrade with increasing mixing time. Several recent works established that identification is possible for systems that do not mix \cite{simchowitz2018learning, faradonbeh2018finite, simchowitz2019learning}. For the problem of the linear quadratic regulator, where the state is fully observed, several results provided finite-sample regret bounds \cite{faradonbeh2017optimism, ouyang2017learning, dean2018regret, abeille2018improved, mania2019certainty,simchowitz2020naive}.

For prediction without LDS identification, \citet{hazan2017learning, hazan2018spectral} have proposed algorithms for the case of bounded adversarial noise. Similar to our work, they use spectral methods for deriving features. However, the spectral method is applied on a different set and connections with $k$-width and difficulty of approximation for the non-diagonalizable case are not studied. Moreover, the regret bounds are computed with respect to a certain fixed family of filters and competing with the Kalman filter is left as an open problem. Indeed, the predictor for general LDS proposed by \citet{hazan2018spectral} without the real eigenvalue assumption only uses a fixed lookback window. Furthermore, the feature norms are of order $\poly{(T)}$ in our formulation, which makes a naive application of online convex optimization theorems~\cite{hazan2019introduction} fail to achieve a sublinear regret.

We focus on a more challenging problem of learning to predict in the presence of unbounded stochastic noise and long-term memory, where the observation norm grows over time. The most related to our work are the recent works of \citet{tsiamis2020online} and \citet{ghai2020no}, where the performance of an algorithm based on a finite lookback window is shown to achieve logarithmic regret with respect to the Kalman filter. However, the performance of this algorithm degrades as the forecast memory increases. In fact, this algorithm can be viewed as a special case of our algorithm where the fixed filters are chosen to be standard basis vectors.

We investigate the possibility of conducting tight convex relaxation of the Kalman predictive model by defining a notion that generalizes Kolmogorov width. The Kolmogorov width is a notion from approximation theory that measures how well a set can be approximated by a low-dimensional linear subspace \cite{pinkus2012n}. Kolmogorov width has been used in a variety of problems such as minimax risk bounds for truncated series estimators \cite{donoho1990minimax, javanmard2012minimax}, minimax rates for matrix estimation \cite{ma2015volume}, density estimation \cite{hasminskii1990density}, hypothesis testing \cite{wei2020local, wei2020gauss}, and compressed sensing \cite{donoho2006compressed}. In Section \ref{sec:kolmogorov}, we present a generalization of Kolmogorov width, which facilitates measuring the convex relaxation approximation error.

\section{Preliminaries and problem formulation}

\subsection{Notation}
We denote by $x_{1:t} \in \mathbb{R}^{nt}$, the vertical concatenation of $x_1, \dots, x_t \in \mathbb{R}^n$. We use $x_t(i)$ to refer to the $i$-th element of the vector $x_t = [x_t(1), \dots, x_t(n)]^\top$. We denote by $\|.\|_2$, the Euclidean norm of vectors and the operator 2-norm of matrices. The spectral radius of a square matrix $A$ is denoted by $\rho(A)$. The eigenpairs of an $n \times n$ matrix are $\{(\sigma_j, \phi_j)\}_{j=1}^n$ where $\sigma_1 \geq \dots \geq \sigma_n$ and $\{\phi_j\}_{j=1}^k$ are called the top $k$ eigenvectors. We denote by $\phi_j(t:1) = [\phi_j(t), \dots, \phi_j(1)]$ the first $t$ elements of $\phi_j$ in a reverse order. The horizontal concatenation of matrices $a_1, \dots, a_n$ with appropriate dimensions, is denoted by $[a_i]_{i=1}^n = [a_1 | \dots | a_n]$. The Kronecker product of matrices $A$ and $B$ is denoted by $A \otimes B$. 
Identity matrix of dimension $n$ is represented by $I_n$. We write $x \lesssim_{b} y$ to represent $x \leq cy$, where $c$ is a constant that only depends on $b$. We use the notation $x \asymp_b y$ if $c_1, c_2 > 0$ exist that only depend on $b$ and $ c_1 |x| \leq |y| \leq c_2 |x|$. We define $M = (R_\Theta, m, \gamma, \kappa, \beta, \gamma, \delta)$ to be a shorthand for the PAC bound parameters (defined in Theorem \ref{thm:regret_bound_paper}). 
Given a function $f: \mathbb{N} \rightarrow R$, we write $x \lesssim_M f(T), x \asymp_M f(T)$ to specify the dependency only on the horizon $T$.

\subsection{Problem statement}\label{sec:problem_statement}

We consider the problem of predicting observations generated by the following linear dynamical system with inputs $x_t \in \mathbb{R}^n$, observations $y_t \in \mathbb{R}^m$, and latent states $h_t \in \mathbb{R}^d$: 
\begin{align}\label{eq:lds_definition}
\begin{split}
    h_{t+1} & = Ah_t + Bx_t + \eta_t,\\
    y_t & = Ch_t + Dx_t + \zeta_t,
\end{split}
\end{align}
where $A, B, C,$ and $D$ are matrices of appropriate dimensions. The sequences $\eta_t \in \mathbb{R}^d$ (process noise) and $\zeta_t \in \mathbb{R}^m$ (measurement noise) are assumed to be zero-mean, i.i.d.~random vectors with covariance matrices $Q$ and $R$, respectively. For presentation simplicity, we assume that $\eta_t$ and $\zeta_t$ are Gaussian; extension of our regret analysis to sub-Gaussian and hypercontractive noise is straightforward. We assume that the discrete Riccati equation of the Kalman filter for the state covariance has a solution $P$ and the initial state starts at this stationary covariance. This assumption ensures the existence of the stationary Kalman filter with stationary gain $K$; see \citet{kailath2000linear} for details.

Define the observation matrix $\cO_t$ and the control matrix $\mathcal{C}_t$ of a stationary Kalman filter as
\begin{align}\label{eq:O_t_and_C_t_definitions}
    \begin{split}
        \cO_t & = \begin{bmatrix} CG^{t-1}K & CG^{t-3}K & \dots & CK\end{bmatrix}, \\
        \mathcal{C}_t & = \begin{bmatrix}CG^{t-1}(B-KD) & CG^{t-3}(B-KD) & \dots & C(B-KD)\end{bmatrix}.
    \end{split}
\end{align}
where $G = A-KC$ is called the closed-loop matrix. The Kalman predictor~(\ref{eqn.kalmanexpanded}) can be written as
\begin{align}\label{eq:unrolled_kalman_predictions}
    m_{t+1} = \cO_t y_{1:t} + \mathcal{C}_t x_{1:t} + Dx_{t+1},
\end{align}
The prediction error $e_t = y_t - m_t$, also called the \textit{innovation}, is zero-mean with a stationary covariance $V$. Our goal is to design an algorithm $\hat{m}_t(y_{1:t-1},x_{1:t})$ such that the following regret
\begin{align}\label{eq:regret_definition}
    \text{Regret}(T) \triangleq \sum_{t=1}^T \|y_t - \hat{m}_t\|_2^2 - \|y_t - m_t\|_2^2 
\end{align}
is bounded by $\polylog(T)$ with high probability.

\subsection{Improper learning}
Most existing algorithms for LDS prediction include a preliminary system identification step, in which system parameters are first estimated from data, followed by the Kalman filter. However, the loss function (such as squared loss) over system parameters is non-convex, for which methods based on heuristics such as EM and subspace identification are commonly used. Instead, we aspire to an algorithm that optimizes a convex loss function for which theoretical guarantees of convergence and sample complexity analysis are possible. This motivates developing an algorithm based on \textit{improper learning.}

Instead of directly learning the model parameters in a hypothesis class $\cH$, improper learning methods reparameterize and learn over a different class $\widetilde{\cH}$. For example in system \eqref{eq:lds_definition}, proper learning hypothesis class $\cH$ contains possible values for parameters $A, B, C, D, Q$ and $R$. Improper learning is used for statistical or computational considerations when the original hypothesis class is difficult to learn. The class $\widetilde{\cH}$ is often a \textit{relaxation}: it is chosen in a way that is easier to optimize and more computationally efficient while being close to the original hypothesis class. Improper learning has been used to circumvent the proper learning lower bounds \cite{foster2018logistic}. 

In this paper, we use improper learning to conduct a tight \textit{convex relaxation}, i.e. we slightly overparameterize the LDS predictive model in such a way that the resulting loss function is convex. Designing an overparameterized improper learning class requires care as using a small number of parameters may result in a large bias whereas using too many parameters may result in high variance. Section \ref{sec:filter_approximation} presents our overparameterization approach based on spectral methods that enjoys a small approximation error with relatively few parameters.

\subsection{Systems with long forecast memory}
As discussed before, system \eqref{eq:lds_definition} exhibits long forecast memory when $\rho(G)$ is close to one. The closed-loop matrix $G$ itself is related to parameters $A, C, Q,$ and $R$. In the following example, we discuss when long forecast memory is instantiated in a scalar dynamical system.

\begin{example} Consider system \eqref{eq:lds_definition} with $d = m = 1$. The following holds for a stationary Kalman filter 
\begin{align*}
    KC = \frac{A C^2P^+  }{C^2P^+  + R} \Rightarrow 0 \leq KC \leq A \qquad \text{for } d = m = 1,
\end{align*}
where $P^+$ is the variance of state predictions $\hat{h}_{t|t-1}$ \cite{kailath2000linear}. The above constraint yields $G = A - KC \leq A$, which implies that the forecast memory can only be long in systems that mix slowly. We write
\begin{align*}
    G = A \big(1 - \frac{C^2P^+}{C^2 P^+ + R}\big), \qquad \text{for } d = m = 1.
\end{align*}
The above equation suggests if $R \gg C^2P^+$, then $G$ is close to $A$. In words, linear dynamical systems with small observed signal to noise ratio $C/\sqrt{R}$ have long forecast memory, provided that they mix slowly.

Another parameter that affects the forecast memory of a system is the process noise variance $Q$. When $Q$ is small and $A$ is close to one, latent state $h_t$ is almost constant. In this setting, the observations in the distant past are informative on $h_t$ and therefore should be considered when making predictions.
\end{example}

In multi-dimensional systems, the chance of encountering a system with long forecast memory is much higher as it suffices for only one variable or direction to exhibit long forecast memory. Systems represented in the discrete-time form of Equation \eqref{eq:lds_definition} are often obtained by discretizing differential equations and continuous dynamical systems, for which choosing a small time step results in a better approximation. However, reducing the time step directly increases the forecast memory. These types of issues has motivated a large body of research on alternative methods such as continuous models \Citep{nodelman2002continuous} and adaptive time steps \Citep{aleks2009probabilistic}. It is therefore desirable to have algorithms whose performance is not affected by the choice of time step, which is one of our goals in this paper.

\section{SLIP: Spectral LDS improper predictor}\label{sec:skf_paper}
In this section, we present the SLIP algorithm and the main regret theorem. The derivation of the algorithm and the sketch for regret analysis are respectively provided in Section \ref{sec:kolmogorov} and Section \ref{sec:proof_roadmap}.

Algorithm \ref{alg:slip} presents a pseudocode for the SLIP algorithm. Our algorithm is based on an online regularized least squares and a linear predictor $\hat{m}_t = \hat{\Theta}^{(t)}f_t$, where $f_t$ is an $l$-dimensional vector of features and $\hat{\Theta}^{(t)} \in \mathbb{R}^{m \times l}$ is a parameter matrix. The features are constructed from past observations and inputs using eigenvectors of a particular $T\times T$ Hankel matrix with entries
\begin{align}\label{eq:hankel_matrix}
    H_{ij} = \frac{1+(-1)^{i+j}}{2(i+j-1)}, \quad 1\leq i,j\leq T.
\end{align}
Let $\phi_1, \dots, \phi_k$ for $k \leq T$ be the top $k$ eigenvectors of matrix $H$, to which we refer as \textit{spectral filters}. At every time step, we obtain our feature vector by concatenating the current input $x_t$ to $k$ \textit{output features} based on $y_{1:t-1}$ and $k$ \textit{input features} based on $x_{1:t-1}$. More specifically, we have

\begin{align} \label{eq:feature_def}
        \begin{split}
        \widetilde{y}_{t-1}(j) & \triangleq (\phi^\top_j({t-1}:1) \otimes I_m) y_{1:{t-1}} = \phi_j(1) y_{t-1} + \dots +\phi_j({t-1})y_1 \quad \text{(output features)},\\
        \widetilde{x}_{t-1}(j) & \triangleq (\phi^\top_j({t-1}:1) \otimes I_n) x_{1:{t-1}} = \phi_j(1) x_{t-1} + \dots +\phi_j({t-1})x_1 \quad \text{(input features)},
        \end{split}
\end{align}
for $j \in \{1, \dots , k\}$, resulting in a feature vector $f_t$ with dimension $l = mk+nk+n$. 
Upon receiving a new observation, the parameter matrix is updated by minimizing the regularized loss 
\begin{align*}
    \sum_{i=1}^t \|\hat{\Theta} f_t - y_t\|^2 + \alpha \|\hat{\Theta}\|^2_2,
\end{align*}
for $\alpha > 0$, which yields the following update rule
\begin{align}\label{eq:theta_update_rule}
    \hat{\Theta}^{(t+1)} = \big(\sum_{i=1}^t y_i f_i^\top  \big) \big(\sum_{i=1}^t f_i f_i^\top  +\alpha I_l \big)^{-1}.
\end{align}

\begin{algorithm}[H]
\caption{SLIP: \textbf{S}pectral \textbf{L}DS \textbf{I}mproper \textbf{P}redictor}\label{alg:slip}
\begin{algorithmic}
\State \textbf{Inputs:} \hspace{0.5mm}  Time horizon $T$, number of filters $k$, regularization parameter $\alpha$, input dimension $n$, \\
\hspace{1.6cm} observation dimension $m$.
\State \textbf{Output:} One-step-ahead predictions $\hat{m}_t(x_{1:t}, y_{1:t-1})$. \\
\vspace{-0.2cm}
\State Compute the top $k$ eigenvectors $\{\phi_j\}_{j=1}^k$ of matrix $H$ with elements
\begin{align*}
    H_{ij} = \frac{(-1)^{i+j}+1}{2(i+j-1)}, \quad 1\leq i,j\leq T.
\end{align*}
\State Set vectors $\psi_i = [\phi_1(i), \dots, \phi_k(i)]^\top$ for $i \in \{1, \dots, T\}$, where $\phi_j(i)$ is the $i$-th element of $\phi_j$.

\State Initialize $\hat{\Theta}^{(1)} \in \mathbb{R}^{m \times l}$ with $l = (n+m)k +n$.
\For {$t = 1, \dots, T$}
\State Set $\Psi_{t-1} = [\psi_{t-1}, \dots, \psi_1]$, where $\Psi_0 = 0_{k}.$
\State Set $x_{1:t-1} = [x_1^\top, \dots, x_{t-1}^\top]^\top, y_{1:t-1} = [y_1^\top, \dots, y_{t-1}^\top]^\top$, $x_{1:0} = 0_n, y_{1:0} \triangleq 0_m$.
\State Compute $l$-dimensional feature vector $f_t$:\\
$\vcenter{
   \begin{align}\notag
     f_t = \begin{bmatrix}
     \widetilde{y}_{t-1}\\
     \widetilde{x}_{t-1}\\
     x_t
     \end{bmatrix} = 
     \begin{bmatrix}
     (\Psi_{t-1} \otimes I_m) y_{1:t-1}\\
     (\Psi_{t-1} \otimes I_n) x_{1:t-1}\\
     x_t
     \end{bmatrix}.
   \end{align}}$
\State Predict $\hat{m}_t = \hat{\Theta}^{(t)}f_t$.
\State Observe $y_t$ and update parameters $\hat{\Theta}^{(t+1)} = \big(\sum_{i=1}^t y_i f_i^\top  \big) \big(\sum_{i=1}^t f_i f_i^\top  +\alpha I_l \big)^{-1}$.
\EndFor
\end{algorithmic}
\end{algorithm}
Importantly, Algorithm  \ref{alg:slip} requires no knowledge of the system parameters, noise covariance, or state dimension and the predictive model is learned online only through sequences of inputs and observations. Note that the spectral filters are computed by conducting a single eigendecomposition and are fixed throughout the algorithm; matrix $\Psi_t$ merely selects certain elements of spectral filters used for constructing features. Computing eigenvectors when $T$ is large is possible by solving the corresponding second-order Sturm-Liouville equation, which allows using efficient ordinary differential equation solvers; see \citet{hazan2017learning} for details.

The next theorem analyzes the regret achieved by the SLIP algorithm. A proof sketch of the theorem is provided in Section \ref{sec:proof_roadmap} and a complete proof is deferred to Appendix \ref{app:regret_analysis}.

\begin{theorem}\label{thm:regret_bound_paper} \normalfont \textbf{(Regret of the SLIP algorithm)} \textit{Consider system \eqref{eq:lds_definition} without inputs with initial state covariance equal to the stationary covariance $P$. Let $m_t$ be the predictions made by the best linear predictor (Kalman filter) and $\hat{m}_t$ be the predictions made by Algorithm \ref{alg:slip}. Fix the failure probability $\delta > 0$ and make the following assumptions:}
\textit{\begin{enumerate}[(i)]
    \item
    There exists a finite $R_\Theta$ that $\|C\|_2,\|P\|_2, \|Q\|_2, \|R\|_2, \|V\|_2 \leq R_\Theta$ and $\|\cO_t\|_2 \leq R_\Theta t^{\beta}$ for a bounded constant $\beta \geq 0$. Let $\kappa$ be the maximum condition number of $R$ and $Q$.
    \item The system is marginally stable with $\rho(A) \leq 1$ and $\|A^t\|_2 \leq \gamma t^{\log (\gamma)}$ for a bounded constant $\gamma \geq 1$.
     Furthermore, the closed-loop matrix $G$ is diagonalizable with real eigenvalues.
    \item The regularization parameter $\alpha$ and the number of filters $k$ satisfy the following
    \begin{align*}
        k \asymp \log^2(T) \polylog (m,\gamma ,R_{\Theta},\frac{1}{\delta}), \qquad \alpha \asymp \frac{1}{R_\Theta k T^\beta}
    \end{align*}
    \item There exists $s \lesssim_{R_\Theta, m, \gamma, \beta, \delta} t/(k\log k)$ 
    and $t_0$ such that for all $t \geq t_0$
    \begin{align}\label{eq:filter_condition}
     t \Omega_{s/2}(A; \psi) - \Omega_{t+1}(A; \psi) \succeq 0.
    \end{align}
    $\Omega_t(A; \psi)$ is called the \normalfont{filter quadratic function} \textit{of $\psi$ with respect to $A$ defined as}
    \begin{align*}
        \Omega_t(A; \psi) & = (\psi^{(d)}_1) (\psi^{(d)}_1)^\top + (\psi^{(d)}_2 + \psi^{(d)}_1 A) (\psi^{(d)}_2 + \psi^{(d)}_1 A)^\top + \dots\\
        & + (\psi^{(d)}_{t-1} + \dots + \psi^{(d)}_1 A^{t-2}) (\psi^{(d)}_{t-1}  + \dots + \psi^{(d)}_1 A^{t-2})^\top
    \end{align*} 
    \textit{where $\psi^{(d)}_i = [\phi_1(i), \dots, \phi_k(i)]^\top \otimes I_d$.}
\end{enumerate}}
\textit{Then, for all $T \geq \max\{10,t_0\}$, the following holds with probability at least $1-\delta$,
}
\begin{align*}
    \text{Regret}(T) \leq \polylog(T, \gamma, \frac{1}{\delta}) \kappa \poly(R_\Theta, \beta, m).
\end{align*}
\end{theorem}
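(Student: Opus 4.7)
The plan is to decompose the regret into a bias term (how well the improper linear class approximates the Kalman predictor) and a variance term (excess online-regression loss against the best fixed element of that class), then bound each using the tools set up in the paper. Using the innovation $e_t = y_t - m_t$, which is a martingale difference sequence with stationary covariance $V$, I would first rewrite
\[
\text{Regret}(T) = \sum_{t=1}^T \|m_t - \hat m_t\|_2^2 + 2\sum_{t=1}^T \langle e_t,\, m_t - \hat m_t\rangle.
\]
The cross term is a vector martingale, so a self-normalized concentration inequality absorbs it into a small multiple of the first sum plus a $\polylog(T)$ residual; it therefore suffices to control $\sum_t \|m_t - \hat m_t\|_2^2$.

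To do so I would insert the oracle parameter $\Theta^\star$ supplied by the Kolmogorov width bound. By Theorem~\ref{thm:kwidthresult}, once $G$ is diagonalizable with real eigenvalues and $k \asymp \log^2 T$, there exists $\Theta^\star$ whose coefficients match those of $m_t = \mathcal{O}_{t-1} y_{1:t-1}$ up to $1/\poly(T)$ error. Splitting $m_t - \hat m_t = (m_t - \Theta^\star f_t) + (\Theta^\star - \hat\Theta^{(t)}) f_t$ separates a bias contribution and a variance contribution. The bias $\sum_t \|m_t - \Theta^\star f_t\|_2^2$ is $\polylog(T)$ once one combines the Kolmogorov guarantee with the crude polynomial bound $\|y_{1:t}\|_2 \le \poly(T)$ implied by $\|A^t\|_2 \le \gamma t^{\log \gamma}$, $\|\mathcal{O}_t\|_2 \le R_\Theta t^\beta$, and Gaussian tail bounds on the noise; the polynomial-in-$T$ factors are swallowed by choosing the filter budget large enough for the Kolmogorov error to decay like $T^{-c}$ for a sufficiently large constant $c$.

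For the variance piece I would use the closed form \eqref{eq:theta_update_rule}. Substituting $y_t = \Theta^\star f_t + (m_t - \Theta^\star f_t) + e_t$ writes $\hat\Theta^{(t)} - \Theta^\star$ as a self-normalized martingale in the innovations $e_t$ plus a deterministic bias-propagation term. An Abbasi-Yadkori style argument then gives, with high probability,
\[
\|\hat\Theta^{(t)} - \Theta^\star\|_{V_{t-1}}^2 \lesssim_M \kappa \log\det(V_{t-1}/\alpha I) + \polylog(T),
\]
where $V_t = \sum_{i\le t} f_i f_i^\top + \alpha I$. Converting this self-normalized guarantee into per-round $\ell_2$ control uses $\|(\hat\Theta^{(t)}-\Theta^\star) f_t\|_2^2 \le \|f_t\|_{V_{t-1}^{-1}}^2 \|\hat\Theta^{(t)} - \Theta^\star\|_{V_{t-1}}^2$, after which the elliptical potential lemma contributes another $\log\det V_T$ factor when summed over $t$.

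The main obstacle, and the crux of the proof, is showing that $V_T$ is well enough conditioned that $\log\det V_T = \polylog(T)$ and the self-normalized bound is not vacuous. Because the process never mixes when $\rho(A)=1$ or $\rho(G)=1$, standard concentration is unavailable and the features $f_t$ are neither bounded nor stationary. This is exactly why hypothesis (iv) is imposed: the filter quadratic function $\Omega_t(A;\psi)$ is, up to the process-noise covariance, the population Gram matrix $\mathbb{E}[\sum_{i\le t} f_i f_i^\top]$, and the inequality $t\,\Omega_{s/2}(A;\psi) - \Omega_{t+1}(A;\psi) \succeq 0$ is a smoothness-in-time condition saying that typical windows of length $s$ carry a constant fraction of the full-horizon information. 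Combining this with Mendelson's small-ball method — which only uses a one-sided anti-concentration of the Gaussian innovations — upgrades the population inequality to an empirical lower bound $\lambda_{\min}(V_T) \gtrsim T/\polylog(T)$ with probability at least $1-\delta$. Feeding this back into the ridge analysis produces the desired $\polylog(T,\gamma,1/\delta)\,\kappa\,\poly(R_\Theta,\beta,m)$ regret; verifying the small-ball condition for the spectral-filtered features without mixing, boundedness, or stationarity is what makes this step the technical heart of the proof.
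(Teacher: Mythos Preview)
Your overall architecture matches the paper's: the same regret decomposition into $\cL(T)=\sum_t\|\hat m_t-m_t\|_2^2$ plus the martingale cross term, the same bias--variance split through an oracle $\Theta^\star$ coming from the Kolmogorov-width bound, the same use of the Abbasi--Yadkori self-normalized inequality for $\|E_{t-1}Z_{t-1}^{-1/2}\|_2$, and the correct identification of Mendelson's small-ball method as the tool replacing mixing-time arguments. Where your sketch goes wrong is in the last step, and it is not a cosmetic issue.

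You write that the elliptical potential lemma bounds $\sum_t\|f_t\|_{V_{t-1}^{-1}}^2$ by $\log\det V_T$, and that the hard part is making $\log\det V_T$ polylogarithmic via a small-ball lower bound $\lambda_{\min}(V_T)\gtrsim T/\polylog(T)$. Both claims are off. First, $\log\det V_T\le l\log(\alpha+\sum_t\|f_t\|_2^2)$ is \emph{easy}: it needs only an upper bound on $\|f_t\|_2$, which follows from $\|y_{1:t}\|_2\le\poly(t)$ and $l=\polylog(T)$ by the choice of $k$. Second, and more seriously, the elliptical potential lemma in the Lai--Wei form controls $\sum_t f_t^\top Z_t^{-1}f_t$, not $\sum_t f_t^\top Z_{t-1}^{-1}f_t$; converting between the two requires a \emph{per-step} bound $f_t^\top Z_{t-1}^{-1}f_t\le c_T$. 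A bare eigenvalue bound $\lambda_{\min}(Z_{t-1})\gtrsim t/\polylog(T)$ cannot deliver this, because $\|f_t\|_2^2$ grows like $t^{3+2\log\gamma}$ with high probability, so $\|f_t\|_2^2/\lambda_{\min}(Z_{t-1})$ is still polynomial in $t$.

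The paper closes this gap by a \emph{directional} comparison rather than a scalar one. It shows (i) $f_tf_t^\top\preceq c\,\Sigma_t$ via a sub-Gaussian tail bound, where $\Sigma_t$ is essentially the conditional covariance $\Gamma_t$; (ii) $Z_{t-1}\succeq c'\,t\,\Gamma_{s/2}$ via the block-martingale small-ball condition; and (iii) uses hypothesis~(iv) --- the filter quadratic inequality $t\,\Omega_{s/2}(A;\psi)\succeq\Omega_{t+1}(A;\psi)$ --- to conclude $t\,\Gamma_{s/2}\succeq\kappa^{-1}\Gamma_{t+1}$. Chaining these gives $Z_{t-1}\succeq c_T^{-1}f_tf_t^\top$ in the L\"owner order with $c_T\asymp\kappa\,\polylog(T)$, which by a Schur-complement argument is exactly $\|Z_{t-1}^{-1/2}f_t\|_2^2\le c_T$. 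Only then does the Lai--Wei lemma finish the job. So the role of assumption~(iv) is not to certify $\lambda_{\min}(V_T)$; it is to align the small-ball lower bound on $Z_{t-1}$ with the direction of the next feature $f_t$, and without that alignment the argument you outline does not close.
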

Theorem \ref{thm:regret_bound_paper} states that if $G$ is diagonalizable with real eigenvalues, provided that the number of filters $k \asymp_M \log^2(T)$, the regret is $\polylog(T)$ with high probability and the regret bound is independent of both transition matrix spectral radius $\rho(A)$ (related to mixing rate) and closed-loop matrix spectral radius $\rho(G)$ (related to forecast memory). 

\begin{remark}
\normalfont Note that for any matrix $A$, there exists a constant $\gamma\geq 1$ such that $\|A^t\|_2 \leq \gamma t^{\log (\gamma)}$ \cite{kozyakin2009accuracy}. We justify our assumption on diagonalizable $G$ with real eigenvalues in the following section. The filter quadratic condition is easily verified for $s > 2(k+1)$ and $t_0 \gtrsim_{R_\Theta, m, \gamma, \beta, \delta} k^2\log(k)$ for all $A$ with $\rho(A) \leq 1$ for the filters corresponding to truncated observations (a.k.a. basis vectors) such as in \citet{tsiamis2020online}. When $A$ is symmetric, this condition can be further simplified to $t \Omega_{s/2}(D; \psi) - \Omega_{t+1}(D; \psi) \succeq 0$ for all diagonal matrices $D$ with $|D_{ii}| \leq 1$.
\end{remark}

\section{Approximation error: Generalized Kolmogorov width}\label{sec:kolmogorov}

\subsection{Width of a subset}\label{sec:generalized_kwidth}
The SLIP algorithm is based on approximating the Kalman predictive model. In this section, we start by introducing a generalization of \textit{Kolmogorov $k$-width of a subset}, which is a criterion to assess the quality of a function approximation method. We then present our approximation technique which gives the SLIP algorithm.

\begin{definition} \label{def:k-width}\normalfont{\textbf{(Generalized Kolmogorov $\mathbf{k}$-width)}} \textit{Let $W$ be a subset in a normed linear space with norm $\|.\|$ whose elements are $d \times n$ matrices. Given $d \times n$ matrices $u_1, \dots, u_k$ for $k \geq 1$, let
\begin{align*}
    U(u_1, \dots, u_k) \triangleq \Big\{y \; \Big| \; y = \sum_{i=1}^k a_i u_i, \; \forall a_i \in \mathbb{R}^{d \times d}\Big\}
\end{align*}
be the subset constructed by linear combinations of $u_1, \dots, u_k$ with coefficient matrices $a_1, \dots, a_k$. For a fixed $k \geq 1$, denote by $\mathcal{U}_k$ the set of $U(u_1, \dots, u_k)$ for all possible choices of $u_1, \dots, u_k$:
\begin{align*}
    \mathcal{U}_k \triangleq \big\{ U(u_1, \dots, u_k) \; \big| \; \forall u_i \in \mathbb{R}^{d \times n} \big\}.
\end{align*}
The generalized $k$-width of $W$ is defined as 
\begin{align*}
    d_k(W) \triangleq \inf_{U \in \mathcal{U}_k} \sup_{x\in W} \text{dist}(x;U) = \inf_{U \in \mathcal{U}_k} \sup_{x\in W}\inf_{y\in U} \|x-y\|,
\end{align*}
where $\text{dist}(x;U)$ is the distance of $x$ to subset $U$ and the first infimum is taken over all subsets $U \in \mathcal{U}_k$.}
\end{definition}
Here, we are interested in approximating $W$ with the ``best'' subset in the set $\mathcal{U}_k$: the subset that would minimize the worst case projection error of $x \in W$ among all subsets in $\mathcal{U}_k$. This minimal error is given by the generalized $k$-width of $W$. Figure \ref{fig:kolmogorov_width} illustrates an example in which $W$ is an ellipsoid in $\mathbb{R}^3$ and we are interested in approximating it with a 2-dimensional plane $(k = 2)$. In this example, $\mathcal{U}_2$ is the set of all planes and plane $U$ offers the smallest worst-case projection error $d_2(W)$ for approximating $W$.

Definition \ref{def:k-width} generalizes the original Kolmogorov $k$-width definition in two ways. First, in our definition $W$ is allowed to be a subset of matrices whereas in the original Kolmgorov width, $W$ is a subset of vectors. This generalization is necessary as we wish to approximate the coefficient set of the Kalman predictive model whose elements $\cO_t$ and $\cC_t$ are matrices. Second, we allow the coefficients $a_i$ to be matrices, generalizing over the scalar coefficients used in the original definition of Kolmogorov width. When constructing a reparameterization, a linear predictive model yields a convex objective regardless of whether the coefficients are matrices or scalars. Allowing coefficients to be matrices as opposed to restricting them to be scalars gives flexibility to find a reparameterization with small approximation error, as demonstrated in Theorem \ref{thm:kwidthresult}.

\subsection{From a small width to an efficient convex relaxation}
Before stating our approximation technique, we briefly describe how a small generalized $k$-width allows for an efficient convex relaxation. The ideas presented in this section will be made more concrete in subsequent sections.

To understand the main idea, consider system \eqref{eq:lds_definition} with no inputs whose predictive model can be written as $m_{t+1} = \cO_t y_{1:t}$. Matrix $\cO_t$ belongs to a subset in $\mathbb{R}^{m \times mt}$ restricted by the constraints on system parameters. A naive approach for a convex relaxation is learning $\cO_t$ in the linear predictive model $\cO_t y_{1:t}$ directly. However in this approach, the total number of parameters is $m^2 t$, which hinders achieving sub-linear regret.

Now suppose that there exists $k \ll t$ for which the generalized $k$-width is small, i.e. there exist fixed known matrices $u_1, \dots, u_k \in \mathbb{R}^{m \times mt}$ that approximate any $\cO_t$ with a small error $\cO_t \approx \sum_{i=1}^k a_i u_i,$ where $a_1, \dots, a_k \in \mathbb{R}^{m \times m}$ are coefficient matrices. The predictive model can be approximated by
\begin{align*}
    m_{t+1} \approx \sum_{i=1}^k a_i u_i y_{1:t},
\end{align*}
provided that norm of $y_{1:t}$ (compared to the approximation error of $\cO_t$) is controlled with high probability. Since $u_i$ and $y_{1:t}$ are known, we only need to learn coefficients $a_1, \dots, a_k$ resulting in a total of $m^2k$ parameters which is much smaller than the naive approach with $m^2 t$ parameters.
\begin{figure}
  \begin{center}
    \includegraphics[width=0.45\textwidth]{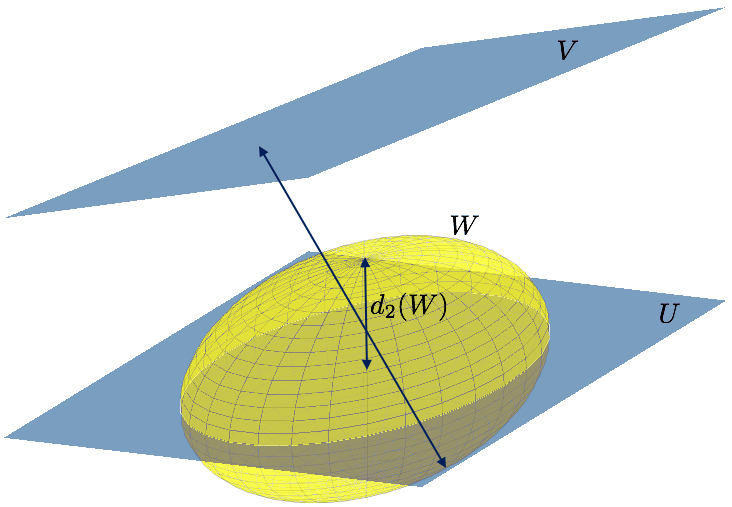}
    \caption{Approximating $W$, a 3D ellipsoid, by a 2D plane $U(u_1, u_2)$ among $\mathcal{U}_2$, the set of all planes. In this example, $U$ has the smallest worst-case projection error that is equal to the $2$-width of $W$ denoted by $d_2(W)$.}
   \label{fig:kolmogorov_width}
  \end{center}
\end{figure}

\subsection{Filter approximation}\label{sec:filter_approximation}
Consider the matrix 
\begin{align}\notag 
    \mu(G) \triangleq [I,G,G^2,\ldots,G^{T-1}],
\end{align}
where $G \in \mathbb{R}^{d\times d}$ is a real square matrix with spectral radius $\rho(G)\leq 1$. We seek to approximate $\mu(G) \approx \widetilde{\mu}(G) = \sum_{i=1}^k a_i u_i$ by a linear combination of $k$ matrices $u_1, \dots, u_k \in \mathbb{R}^{d \times Td}$ and coefficient matrices $\{a_1, \dots, a_k\} \in \mathbb{R}^{d \times d}$. We evaluate the quality of approximation in operator 2-norm $\|\mu(G) - \widetilde{\mu}(G)\|_2$ by studying the generalized $k$-width of $\mu(G)$.

We demonstrate a sharp phase transition. Precisely, we show that when $G$ is diagonalizable with real eigenvalues, the width $d_k(W)$ decays exponentially fast with $k$, but for a general $G$ with $d \geq 2$ it decays only polynomially fast. In other words, when $d \geq 2$ the inherent structure of the set $W$ is not easily exploited by linear subspaces.

\begin{theorem}\normalfont{\textbf{(Kalman filter $\mathbf{k}$-width)}}\label{thm:kwidthresult}
\textit{Let 
\begin{align}\notag 
     W \triangleq \Big\{\mu(G) = [I,G,G^2,\ldots,G^{T-1}] \; \Big| \; G\in \mathbb{R}^{d\times d}, \rho(G)\leq 1\Big\}
\end{align}
and endow the space of $W$ with the 2-norm. The following bounds hold on the generalized $k$-width of the set $W$.
\begin{enumerate}[1.]
    \item If $d\geq 2$, then for $1\leq k\leq T$,
    \begin{align}\notag 
        d_k(W) \geq \sqrt{T-k}. 
    \end{align}
    \item Restrict $G$ to be diagonalizable with real eigenvalues. If $T\geq 10$, then for any $d \geq 1$
    \begin{align}\notag 
        d_k(W) \leq C_0 d \sqrt{T} (\log T)^{1/4} c^{-k/\log T},
    \end{align}
    where $c = \exp(\pi^2/16)$ and $C_0 = \sqrt{43}$. Moreover, there exists an efficient spectral method to compute a $k$-dimensional subspace that satisfies this upper bound.
\end{enumerate}}
\end{theorem}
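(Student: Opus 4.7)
I would prove the two parts separately.

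\textbf{Part 1 ($d_k(W) \geq \sqrt{T-k}$ for $d \geq 2$).} The plan is to exhibit a rich rotation family inside $W$ that no generalized $k$-subspace can approximate uniformly. For $\theta \in [0, 2\pi)$, let $G(\theta) \in \mathbb{R}^{d \times d}$ be the planar rotation by $\theta$ on the first two coordinates and the identity on the rest; then $G(\theta)$ is orthogonal, so $\mu(G(\theta))\mu(G(\theta))^\top = T\,I_d$ and every singular value of $\mu(G(\theta))$ equals $\sqrt{T}$ (in particular $\|\mu(G(\theta))\|_2 = \sqrt{T}$). Fix any $U(u_1,\ldots,u_k)\in\mathcal{U}_k$. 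The set of approximators $\{\sum_i a_i u_i : a_i\in\mathbb{R}^{d\times d}\}$ is a linear subspace of $\mathbb{R}^{d\times dT}$ of dimension at most $d^2 k$. On the other hand, the entries of $\mu(G(\theta))$, viewed as functions of $\theta$, lie in the span of the $2T-1$ Fourier modes $\{1,\cos(j\theta),\sin(j\theta)\}_{j=1}^{T-1}$. Averaging the squared approximation error $\|\mu(G(\theta)) - P_U \mu(G(\theta))\|^2$ over $\theta\in[0,2\pi)$ and applying Parseval in this Fourier basis produces a Frobenius-squared lower bound of order $T-k$. Because the singular spectrum of $\mu(G(\theta))$ is perfectly flat (all singular values equal $\sqrt{T}$), the Frobenius/operator ratio is controlled and converting to operator norm yields the $\sqrt{T-k}$ lower bound. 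The main obstacle I foresee is the Frobenius-to-operator conversion, which hinges on the flat singular spectrum of rotations and the block structure imposed by the matrix coefficients $a_i$.

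\textbf{Part 2 (upper bound, real diagonalizable $G$).} I would explicitly construct the approximating subspace from the top eigenvectors of $H$. Let $\phi_1,\ldots,\phi_k$ be the top-$k$ eigenvectors of the Hankel matrix $H$ in~(\ref{eq:hankel_matrix}) and set $u_j \triangleq \phi_j^\top \otimes I_d \in \mathbb{R}^{d\times dT}$, so that the $t$-th $d\times d$ block of $u_j$ is $\phi_j(t)\,I_d$. Write $G = PDP^{-1}$ with $D = \mathrm{diag}(d_1,\ldots,d_d)$ and $|d_l|\leq 1$. Choosing coefficients of the form $a_j = P\,\mathrm{diag}(b_{j,1},\ldots,b_{j,d})\,P^{-1}$ decouples the matrix approximation into $d$ independent scalar problems: for each $l$, approximate the power vector $v(d_l)\triangleq (1,d_l,d_l^2,\ldots,d_l^{T-1})^\top$ by $\sum_{j=1}^k b_{j,l}\,\phi_j$. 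It thus suffices to prove the scalar lemma
\begin{align*}
\sup_{d\in[-1,1]}\;\min_{b\in\mathbb{R}^k}\;\Big\|v(d) - \sum_{j=1}^k b_j \phi_j\Big\|_2 \;\leq\; C\,\sqrt{T}\,(\log T)^{1/4}\, c^{-k/\log T}.
\end{align*}

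The scalar lemma rests on the crucial identity $H = \tfrac{1}{2}\int_{-1}^{1} v(d) v(d)^\top\,dd$, immediate from $H_{ij} = \tfrac{1}{2}\int_{-1}^{1} s^{i+j-2}\,ds$, which identifies $H$ with a positive integral operator on $[-1,1]$. Two ingredients drive the bound. First, a Beckermann--Townsend-type spectral decay: the Zolotarev/rational-approximation structure of $H$ yields $\sigma_k(H)\lesssim c^{-k/\log T}$ with $c = e^{\pi^2/16}$. Second, the exact identity $\int_{-1}^{1}\langle v(d),\phi_j\rangle^2\,dd = 2\sigma_j$, which controls the approximation error \emph{on average in} $d$ by $\sum_{j>k}\sigma_j$. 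A Markov--Bernstein-type polynomial inequality for the degree-$(T-1)$ polynomial $d \mapsto \langle v(d),\phi_j\rangle$ then lifts the $L^2(d)$ bound to a uniform bound in $d$ at the cost of the stated $(\log T)^{1/4}$ factor. Lifting back from scalar to matrix re-assembles the $d$ scalar residuals into the $d\times dT$ block structure, incurring the factor $d$ in the final bound.

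The efficient spectral computation of the filters $\phi_j$ for large $T$ follows from reducing the eigenvalue problem for $H$ to a tridiagonal Sturm--Liouville problem, as in \cite{hazan2017learning}. The main obstacle I foresee is the pointwise-to-average polynomial lifting in the scalar lemma: the spectral decay is naturally an $L^2(d)$ estimate, and obtaining the sharp $(\log T)^{1/4}$ exponent in the uniform bound --- rather than a crude $\sqrt{T}$ or $\sqrt{\log T}$ loss --- requires a carefully tuned Bernstein-type inequality rather than a union bound over a net of points $d$.
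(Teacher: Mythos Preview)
Your Part~2 plan is essentially the paper's own argument: the choice $u_j=\phi_j^\top\otimes I_d$, the diagonalization $a_j=P\,\mathrm{diag}(\langle v(d_l),\phi_j\rangle)\,P^{-1}$, the identity $H=\tfrac12\int_{-1}^1 v(d)v(d)^\top\,dd$, and the Beckermann--Townsend decay are exactly what the paper uses. The only difference is the average-to-uniform lifting: the paper bounds the Lipschitz constant of $\lambda\mapsto\|r(\lambda)\|_2^2$ by $2T^2$ (via the Jacobian of $\mu$) and uses an elementary ``triangle'' argument to pass from $\int\|r\|_2^2\le 2\sum_{j>k}\sigma_j$ to $\sup\|r\|_2^2\le T\sqrt{2\sum_{j>k}\sigma_j}$. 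Your proposed Nikolskii/Bernstein route on each coordinate polynomial $\lambda\mapsto\langle v(\lambda),\phi_j\rangle$ gives the same $T\sqrt{\sum\sigma_j}$ scaling, so it works equally well; the Lipschitz argument is just a bit more self-contained.

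Part~1 has a genuine gap. The step ``because the singular spectrum of $\mu(G(\theta))$ is perfectly flat, the Frobenius/operator ratio is controlled'' does not go through: flatness of the singular values of $\mu(G(\theta))$ says nothing about the singular values of the residual $R(\theta)=\mu(G(\theta))-P_U\mu(G(\theta))$, which can be rank-one. If you fall back to the generic inequality $\|R\|_2\ge\|R\|_F/\sqrt{d}$, then (for $d=2$) your vectorized PCA argument uses a subspace of dimension $d^2k=4k$ against a covariance whose nonzero eigenvalues are one copy of $2$ and $2(T-1)$ copies of $1$, so the averaged Frobenius residual is only $2T-4k-1$, and you end up with $d_k(W)\ge\sqrt{T-2k-1/2}$ rather than $\sqrt{T-k}$.

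The paper avoids this conversion entirely by first passing to a \emph{single row}: since $\|M\|_2\ge\|e_1^\top M\|_2$, it suffices to lower-bound the approximation of the first row of $\mu(G(\theta))$, which is a vector in $\mathbb{R}^{2T}$. The first row of any $\sum_i a_i u_i$ lies in a $2k$-dimensional subspace (only the first row of each $a_i$ matters, contributing $d=2$ scalars per $u_i$), so one is computing an ordinary vector $2k$-width. Averaging over the uniform measure on the unit circle gives covariance $\tfrac12 I_{2T}$ for blocks $l\ge 1$, and the weighted squared width is exactly the tail eigenvalue sum $T-k$, hence $d_k(W)\ge\sqrt{T-k}$. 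The row-restriction is the missing idea in your outline; once you insert it, your Fourier/Parseval computation becomes the paper's PCA computation verbatim and no Frobenius-to-operator conversion is needed.
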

\begin{proof}
Here, we only provide a proof sketch; see Appendix \ref{app:kolmogorov_width} for a complete proof.

Let $\lambda_1, \dots \lambda_d \in [-1,1]$ be the eigenvalues of $G$. Let $v_i$ be the right eigenvectors of $G$ and $w^\top_i$ be the left eigenvectors of $G$ and write
\begin{align*}
    \mu(G) = \sum_{i=1}^d v_i w^\top_i ([1, \lambda_i, \dots, \lambda_i^{T-1}]\otimes I_d) = \sum_{i=1}^d v_i w^\top_i (\mu(\lambda_i) \otimes I_d).
\end{align*}

We approximate the row vector $\mu(\lambda)$ for any $\lambda \in [-1,1]$ using principal component analysis (PCA). The covariance matrix of $\mu(\lambda)$ with respect to a uniform measure is given by 
\begin{align}\notag 
    H = \int_{\lambda = -1}^1 \frac{1}{2} \mu(\lambda)^\top\mu(\lambda)  d\lambda \quad \Rightarrow \quad H_{ij} = \int_{-1}^1 \frac{1}{2} \lambda^{i-1} \lambda^{j-1} d \lambda = \frac{(-1)^{i+j}+1}{2(i+j-1)}.
\end{align}
Let $\{\phi_j\}_{j=1}^k$ be the top $k$ eigenvectors of $H$. We approximate $\mu(\lambda)$ by $\widetilde{\mu}(\lambda) = \sum_{j=1}^k \langle \mu^\top(\lambda), \phi_j \rangle \phi_j^\top$ and thus obtain
\begin{align*}
    \mu(G) \approx \widetilde{\mu}(G) = \sum_{j=1}^k \Big[\sum_{i=1}^d \langle \mu^\top(\lambda_i), \phi_j \rangle v_i w^\top_i \Big] (\phi^\top_j \otimes I_d) = \sum_{j=1}^k a_j u_j.
\end{align*}

We show a uniform bound on $\|\mu(G) - \widetilde{\mu}(G)\|$ by first analyzing the PCA approximation error which depends on the spectrum of matrix $H$. Matrix $H$ is a positive semi-definite Hankel matrix, a square matrix whose $ij$-th entry only depends on the sum $i+j$. We leverage a recent result by \Citet{beckermann2017singular} who proved that the spectrum of positive semi-definite Hankel matrices decays exponentially fast. 

This result, however, only guarantees a small \textit{average} error but we need to prove that the \textit{maximum} error is small to ensure a uniform bound on regret. Observe that the PCA error $r(\lambda) = \mu(\lambda) - \widetilde{\mu}(\lambda)$ is defined over a finite interval $[-1,1]$ with a small average. Thus, by computing the Lipschitz constant of $r(\lambda)$, we show that the maximum approximation error is small, resulting in an upper bound on $d_k(W)$.

For the first claim, we lower bound the generalized $k$-width of $W$ by relaxing the sup-norm by a weighted average, resulting in a \emph{weighted} version of generalized $k$-width. We observe that the weighted $k$-width can be computed using PCA. We compute the approximation error of PCA showing that this error is large.
\end{proof}

The approximation technique used in the above theorem can readily be applied to approximate the coefficients of the Kalman predictive model by
\begin{align*}
    \widetilde{\cO}_t & = \sum_{j=1}^k \Big[ \sum_{i=1}^d \langle \mu(\lambda_i)^\top, \phi_j \rangle C v_i w_i^\top K\Big] (\phi^\top_j(t:1) \otimes I_m),\\
    \widetilde{\mathcal{C}}_t  & = \sum_{j=1}^k \Big[ \sum_{i=1}^d \langle \mu(\lambda_i)^\top, \phi_j \rangle C v_i w_i^\top (B-KD)\Big] (\phi^\top_j(t:1) \otimes I_n),
\end{align*}
where we used the fact that $[\lambda_i^{t-1}, \dots, \lambda_i, 1]$ can be approximated by truncated eigenvectors $\{\phi_j(t:1)\}_{j=1}^k$. The relaxed model $\widetilde{m}_{t} \triangleq \widetilde{\cO}_ty_{1:t-1} + \widetilde{\mathcal{C}}_t x_{1:t-1} + Dx_{t}$ can be written in the form $\widetilde{m}_{t} = \widetilde{\Theta} f_{t}$. 
The feature vector $f_{t}$ is defined in \eqref{eq:feature_def} and the parameter matrix $\widetilde{\Theta}$ is obtained by concatenating the corresponding coefficient matrices as described below
\begin{align}\label{eq:pca_parameters}
    \widetilde{\Theta} = \vast[ \underbrace{\Big[ \sum_{i=1}^d \langle \mu(\lambda_i)^\top, \phi_j \rangle C v_i w_i^\top K \Big]_{j=1}^k}_{\substack{\in \mathbb{R}^{m \times mk} \\ \text{for output features}}} \; \vast| \; \underbrace{\Big[ \sum_{i=1}^d \langle \mu(\lambda_i)^\top, \phi_j \rangle C v_i w_i^\top (B-KD) \Big]_{j=1}^k}_{\substack{\in \mathbb{R}^{m \times nk} \\ \text{for input features}}} \; \vast| \; \underbrace{ \vphantom{\Big[ \sum_{i=1}^d \langle \mu(\lambda_i)^\top, \phi_j \rangle C v_i w_i^\top (B-KD) \Big]} D}_{\substack{\in \mathbb{R}^{m \times n} \\ \text{for } x_t}} \vast]_{m \times l}
\end{align}
A complete derivation of convex relaxation along with an approximation error analysis is provided in Appendix \ref{sec:convex_relaxation_bias}.

\section{Proof roadmap of Theorem \ref{thm:regret_bound_paper}}\label{sec:proof_roadmap}

In this section we present a proof sketch for Theorem \ref{thm:regret_bound_paper}; the complete proof is deferred to Appendix \ref{app:regret_decomposition} and Appendix \ref{app:regret_analysis}. Let $e_t = y_t - m_t$ denote the innovation process and $b_t = \widetilde{m}_t - m_t$ denote the bias due to convex relaxation. Define
\begin{align}\label{eq:risk_definition_L_t}
    \cL(T) \triangleq \sum_{t=1}^T \|\hat{m}_t - m_t\|_2^2.
\end{align}
$\cL(T)$ measures the difference between Algorithm \ref{alg:slip} predictions and the Kalman predictions in hindsight. Regret defined in \eqref{eq:regret_definition} can be written as 
\begin{align}\label{eq:regret_decomposition_to_squared_error_cross_term}
    \text{Regret}(T) = \sum_{t=1}^T \|\hat{m}_t - m_t\|_2^2 - \sum_{t=1}^T 2e_t^\top (\hat{m}_t - m_t) = \cL(T) - \sum_{t=1}^T 2e_t^\top (\hat{m}_t - m_t).
\end{align}
Using an argument based on self-normalizing martingales, the second term is shown to be of order $\sqrt{\cL(T)}$ and thus, it suffices to establish a bound on $\cL(T)$.
Define 
\begin{align}\label{eq:definitions_of_Z_E_B}
    Z_t \triangleq \alpha I + \sum_{i=1}^t f_if_i^\top, \qquad E_t \triangleq \sum_{i=1}^t e_i f_i^\top, \qquad  B_t \triangleq \sum_{i=1}^{t} b_i f_i^\top.
\end{align}
A straighforward decomposition of loss gives
\begin{align}\label{eq:regret_decomp_roadmap}
    \cL(T) \leq \underbrace{3 \sum_{i=1}^T \|E_{t-1} Z_{t-1}^{-1} f_t\|_2^2}_{\text{least squares error}} + \underbrace{3 \sum_{i=1}^T \|B_{t-1} Z_{t-1}^{-1} f_t+ b_t\|_2^2}_{\text{improper learning bias}} + \underbrace{3  \sum_{i=1}^T \|\alpha \tilde{\Theta} Z_{t-1}^{-1} f_t\|_2^2}_{\text{regularization error}}.
\end{align}
\subsection{Least squares error}
Among all, it is most difficult to establish a bound on the least squares error. Consider the following upper bound
\begin{align*}
    \sum_{t=1}^T \|E_{t-1} Z_{t-1}^{-1} f_t\|_2 \leq \max_{1 \leq t \leq T} \|E_{t-1} Z_{t-1}^{-1/2}\|_2 \sum_{t=1}^T \| Z_{t-1}^{-1/2} f_t\|_2.
\end{align*}
We show the first term is bounded by $\polylog(T)$ for any $\delta \geq 0$. In particular,
\begin{align*}
    \max_{1 \leq t \leq T} \|E_{t-1} Z_{t-1}^{-1/2}\|_2 & \lesssim_{R_\Theta, m, \gamma, \beta, \delta} \max_{1 \leq t \leq T} \log \Big( \frac{\det(Z_t)\det(\alpha I)^{-1}}{ \delta}\Big) \lesssim_{R_\Theta, m, \gamma, \beta, \delta} k \log(T). 
\end{align*}
Our argument is based on vector self-normalizing martingales, a similar technique used by \citet{abbasi2011improved, sarkar2018near, tsiamis2020online}. $\det(Z_t)$ is bounded by $\poly(T)$ for two reasons. First, the feature dimension, which is linear in the number of filters $k$, is $\polylog(T)$ on account of Theorem \ref{thm:kwidthresult}. Second, the marginal stability assumption ($\rho(A) \leq 1$) ensures that features and thus $Z_t$ grow at most polynomially in $t$. 

It remains to prove that the summation $\sum_{t=1}^T \|Z_{t-1}^{-1/2} f_t\|_2^2$ is bounded by $\polylog(T)$ with high probability. We use an argument inspired by Lemma 2 of \citet{lai1982least} and Schur complement lemma \cite{zhang2006schur} to conclude that 
\begin{align*}
    \sum_{t=1}^T \|Z_{t-1}^{-1/2} f_t\|_2^2 \asymp_M \polylog(T) \quad \Leftrightarrow \quad Z_t - \frac{1}{c_T} f_t f_t^\top \succeq 0 \quad \text{for} \quad c_T \asymp_M \polylog (T).
\end{align*}
Therefore, it suffices to prove the right-hand side. We show a high probability L\"owner upper bound on $f_t f_t^\top$ based on the feature covariance $\cov(f_t)$ using sub-Gaussian quadratic tail bounds \Citep{vershynin2018high}. To capture the excitation behavior of features, we establish a L\"owner lower bound on $Z_t$ by proving that the process $\{f_t\}_{t \geq 1}$ satisfies a \textit{martingale small-ball condition} \cite{mendelson2014learning, simchowitz2018learning}. We leverage the small-ball condition lower tail bounds and prove the following lemma.

\begin{lemma}\label{lemma:excitation_result} \normalfont{\textbf{(Martingale small-ball condition)}} \textit{Let $\phi_1, \dots, \phi_k \in \mathbb{R}^T$ be orthonormal and fix $\delta > 0$. Given system \eqref{eq:lds_definition}, let $\mathcal{F}_t = \sigma\{\eta_0,\dots, \eta_{t-1}, \zeta_1, \dots, \zeta_t\}$ be a filteration and for all $t \geq 1$ define
\begin{align*}
    f_t = \psi_1 \otimes y_{t-1} + \dots + \psi_{t-1} \otimes y_1, \quad \text{where} \quad \psi_i = [\phi_1(i), \dots, \phi_k(i)]^\top.
\end{align*}
Let $\Gamma_i = \cov(f_{t+i}|\mathcal{F}_t)$.
\begin{enumerate}[1.]
    \item For any $1 \leq s \leq T$, the process $\{f_t\}_{t \geq 1}$ satisfies a $(s, \Gamma_{s/2}, p = 3/20)$-block martingale small-ball (BMSB) condition, i.e. for any $t \geq 0$ and any fixed $\omega$ in unit sphere $\mathcal{S}^{l-1}$
    \begin{align*}
        \frac{1}{s}\sum_{i=1}^s \mathbb{P}\Big(|\omega^\top f_{t+i}|\geq \sqrt{\omega^\top \Gamma_{s/2} \omega} \mid \mathcal{F}_t \Big) \geq p.
    \end{align*}
    \item Under the assumptions of Theorem \ref{thm:regret_bound_paper}, the following holds with probability at least $1-\delta$
    \begin{align*}
        \sum_{t=1}^T \|Z_{t-1}^{-1/2}f_t\|_2^2 \leq \kappa k^2 \log(T) \poly(R_\Theta, \beta, m, \log(\gamma), \log\Big(\frac{1}{\delta}\Big)).
    \end{align*}
\end{enumerate}}
\end{lemma}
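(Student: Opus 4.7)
Both parts exploit that all noises are Gaussian. For Part~1 I will unroll the dynamics to exhibit $f_{t+i}$ as an $\mathcal{F}_t$-measurable drift plus a Gaussian in the future noises, read off $\Gamma_i$ as a growing sum of positive semidefinite blocks, and finish by Gaussian anti-concentration. For Part~2 I will combine Part~1 with a small-ball matrix-Chernoff lower bound on $Z_t$, a Hanson--Wright upper bound on $f_t f_t^\top$, the filter-quadratic hypothesis~(iv) of Theorem~\ref{thm:regret_bound_paper}, and a Lai--Wei log-determinant identity.

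\textbf{Part 1 (BMSB).} First I would substitute the recursion $y_{t+l}=CA^l h_t+C\sum_{r=0}^{l-1}A^{l-1-r}\eta_{t+r}+\zeta_{t+l}$ into the definition of $f_{t+i}$ and subtract the $\mathcal{F}_t$-measurable piece, obtaining after a change of indices
\begin{equation*}
f_{t+i}-\mathbb{E}[f_{t+i}\mid\mathcal{F}_t]=\sum_{q=1}^{i-1}N_q\,\eta_{t+i-1-q}+\sum_{j=1}^{i-1}(\psi_j\otimes I_m)\,\zeta_{t+i-j},\qquad N_q:=\sum_{r=1}^{q}\psi_r\otimes CA^{q-r}.
\end{equation*}
Since the future noises are mutually independent and independent of $\mathcal{F}_t$, the conditional law is Gaussian with
\begin{equation*}
\Gamma_i=\sum_{q=1}^{i-1}N_q\,Q\,N_q^\top+\sum_{j=1}^{i-1}(\psi_j\psi_j^\top)\otimes R,
\end{equation*}
so $\Gamma_{i+1}-\Gamma_i\succeq 0$, and hence $\omega^\top\Gamma_i\omega\geq \omega^\top\Gamma_{s/2}\omega$ for every $i\geq s/2$ and every unit $\omega$. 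The Gaussian anti-concentration bound $\mathbb{P}(|\mathcal{N}(\mu,\sigma^2)|\geq\sigma_0)\geq 2\overline{\Phi}(\sigma_0/\sigma)$, which is minimized over the mean $\mu$ at $\mu=0$, then gives probability at least $2\overline{\Phi}(1)>0.31$ for each such $i$. Averaging $1/s$ over $i=1,\ldots,s$ retains the last $s/2$ contributions and produces $p\geq\tfrac{1}{2}\cdot 2\overline{\Phi}(1)>3/20$.

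\textbf{Part 2 (excitation bound).} Feeding Part~1 into a Simchowitz--Tsiamis style small-ball matrix-Chernoff bound with a dyadic union bound over $t$, I would obtain
\begin{equation*}
Z_t\succeq c_1\,t\,\Gamma_{s/2}\quad\text{simultaneously for every }t\geq t_0
\end{equation*}
with probability at least $1-\delta/2$. On the upper side, the stationary-$P$ initialization makes $f_t$ a centered Gaussian, so Hanson--Wright concentration delivers $\|f_t\|_2^2\lesssim \tr(\mathbb{E}[f_t f_t^\top])\polylog(T/\delta)$ with probability at least $1-\delta/(2T)$; the filter-quadratic hypothesis~(iv), combined with $\kappa$ controlling the condition numbers of $Q$ and $R$, translates $t\,\Omega_{s/2}(A;\psi)\succeq\Omega_{t+1}(A;\psi)$ into the operator sandwich $\mathbb{E}[f_t f_t^\top]\preceq c_2\kappa\,t\,\Gamma_{s/2}$. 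Chaining the three ingredients yields $f_t^\top Z_{t-1}^{-1}f_t\lesssim_M \polylog(T)$ uniformly in $t$, after which the Lai--Wei identity
\begin{equation*}
\sum_{t=1}^T\log\bigl(1+f_t^\top Z_{t-1}^{-1}f_t\bigr)=\log\det(Z_T)-\log\det(\alpha I),
\end{equation*}
combined with the elementary bound $x\leq\tfrac{x_{\max}}{\log(1+x_{\max})}\log(1+x)$ on $[0,x_{\max}]$, $\log\det(Z_T)\leq l\log\lambda_{\max}(Z_T)$ with $l=O(k)$, and the polynomial bound $\lambda_{\max}(Z_T)\leq\poly(T)$ implied by $\rho(A)\leq 1$, delivers the claimed $\kappa k^2\log(T)\cdot\poly(\cdots)$ bound. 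A final union bound over the three high-probability events completes the argument.

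\textbf{Main obstacle.} The hardest step is producing the two-sided operator sandwich $c_1 t\,\Gamma_{s/2}\preceq Z_t$ and $\mathbb{E}[f_t f_t^\top]\preceq c_2\kappa t\,\Gamma_{s/2}$ with the \emph{same} matrix $\Gamma_{s/2}$, uniformly in $t$. Hypothesis~(iv) is engineered precisely so that $\Omega_t(A;\psi)$ governs the process-noise contribution to both quantities once dressed with $(I_k\otimes C)$ and pinched between $\kappa^{-1}$ and $\kappa$. Carrying the matrix-Chernoff lower bound uniformly over $t\geq t_0$ (not merely at $t=T$) while preserving $\polylog(T)$ dependence requires a careful epoching of time into dyadic blocks, and this, together with the chain from the $\Omega$-inequality to the operator sandwich, constitutes the technical heart of the proof.
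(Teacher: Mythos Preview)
Your proposal is correct and follows essentially the same route as the paper: for Part~1 the paper establishes monotonicity of $\Gamma_i$ by the same noise-unrolling and then applies Paley--Zygmund (yielding $3/10$) where you use direct Gaussian anti-concentration ($2\overline{\Phi}(1)$), and for Part~2 the paper likewise combines a Simchowitz-style small-ball lower bound on $Z_t$, a sub-Gaussian tail bound turned into $f_tf_t^\top\preceq c\,\Sigma_t$ via Schur complement, the filter-quadratic hypothesis to dominate $\Gamma_{t+1}$ by $c_0\,t\,\Gamma_{s/2}$, and the Lai--Wei log-determinant inequality. Your only substantive departures are the explicit dyadic epoching for uniformity over $t$ (which the paper leaves implicit) and your phrasing of the upper bound as Hanson--Wright on $\|f_t\|_2^2$, which as stated controls only the trace rather than the needed L\"owner relation---the fix is exactly the paper's move of applying the Gaussian tail to $\Sigma_t^{-1/2}f_t$ and invoking Schur complement.
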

Provided that the number of filters is $\polylog(T)$, the above lemma ensures that $\sum_{t=1}^T \|Z_{t-1}^{-1/2}f_t\|_2^2$ is also $\polylog(T)$, which is the desired result.

\subsection{Improper learning bias}
We characterize the improper learning bias term in \eqref{eq:regret_decomp_roadmap} by first showing a uniform high probability bound on the convex relaxation error stated in the theorem below. The proof can be found in Appendix \ref{sec:convex_relaxation_bias}. 

\begin{theorem} \label{thm:convex_relaxation_informal} \normalfont{\textbf{(Convex relaxation error bound, informal)}} \textit{Consider system \eqref{eq:lds_definition} with bounded inputs $\|x_t\|_2 \leq R_x$ and assume conditions (i)-(ii) of Theorem \ref{thm:regret_bound_paper} holds. Then for any $\epsilon, \gamma \geq 0$, if the number of filters $k$ satisfies $ k  \gtrsim_M \log(T) \log(T/\epsilon)$,
then the following holds for $\widetilde{\Theta}$ as defined in \eqref{eq:pca_parameters}
\begin{align*}
    \Prob\Big[ \|\widetilde{\Theta} f_t - m_t\|_2^2 \geq \epsilon \Big] \leq \delta.
\end{align*}}
\end{theorem}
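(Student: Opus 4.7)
The plan is to split the statement into a deterministic operator-norm bound on the approximation error of the Kalman filter coefficients (via Theorem~\ref{thm:kwidthresult}) and a high-probability polynomial envelope on the observation and input histories. I begin by decomposing the relaxation error. By construction of $\widetilde{\Theta}$ in~\eqref{eq:pca_parameters} and $f_t$ in~\eqref{eq:feature_def}, we have $\widetilde{\Theta} f_t = \widetilde{\cO}_{t-1} y_{1:t-1} + \widetilde{\mathcal{C}}_{t-1} x_{1:t-1} + D x_t$, and the $Dx_t$ terms cancel against the Kalman representation~\eqref{eq:unrolled_kalman_predictions}. Cauchy--Schwarz then gives
$$\|\widetilde{\Theta} f_t - m_t\|_2 \leq \|\widetilde{\cO}_{t-1} - \cO_{t-1}\|_2\, \|y_{1:t-1}\|_2 + \|\widetilde{\mathcal{C}}_{t-1} - \mathcal{C}_{t-1}\|_2\, \|x_{1:t-1}\|_2,$$
reducing the task to controlling four scalar factors.

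Next, I would lift Theorem~\ref{thm:kwidthresult} to an operator-norm bound on $\widetilde{\cO}_{t-1}-\cO_{t-1}$. Diagonalizing $G = \sum_{i=1}^d \lambda_i v_i w_i^\top$ (real eigenvalues by assumption~(ii)), each block of $\cO_{t-1}$ equals $CG^{t-1-i}K$, and the PCA construction replaces the row $[\lambda_i^{t-2},\dots,\lambda_i,1]$ by its projection onto $\{\phi_j(t-1{:}1)\}_{j=1}^k$. Factoring $C$ on the left and $K$ on the right and applying Theorem~\ref{thm:kwidthresult} yields
$$\|\widetilde{\cO}_{t-1} - \cO_{t-1}\|_2 \leq \|C\|_2\,\|K\|_2 \cdot C_0\, d\, \sqrt{T}\,(\log T)^{1/4}\, c^{-k/\log T},$$
with an analogous bound on $\widetilde{\mathcal{C}}_{t-1}-\mathcal{C}_{t-1}$ once $K$ is replaced by $B-KD$. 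Assumption~(i) controls $\|C\|_2$, and $\|K\|_2$, $\|B-KD\|_2$ are polynomially bounded in $R_\Theta$ via the stationary Riccati solution $P$. Because the decay is exponential in $k/\log T$, the choice $k \gtrsim_M \log(T)\log(T/\epsilon)$ drives both operator norms below $\sqrt{\epsilon}/\poly(T)$ for any polynomial required below.

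For the stochastic side, I would unroll the state-space recursion to write $y_s = \sum_{j=1}^{s-1} CA^{s-1-j}(Bx_j+\eta_j) + CA^{s-1}h_1 + Dx_s + \zeta_s$. The marginal-stability envelope $\|A^t\|_2 \leq \gamma t^{\log \gamma}$ together with $\|x_t\|_2 \leq R_x$ bounds the deterministic contribution by $\poly(T,R_\Theta,\gamma,R_x)$; the Gaussian contribution is a quadratic form in $(\eta_j,\zeta_s)$ whose trace scales as $\poly(T,R_\Theta,\gamma)$, so the Hanson--Wright inequality yields $\|y_{1:t-1}\|_2^2 \leq \poly(T,R_\Theta,\gamma,R_x)\log(T/\delta)$ with probability at least $1-\delta$, uniformly in $t\leq T$ after a union bound. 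The input envelope $\|x_{1:t-1}\|_2 \leq R_x \sqrt{T}$ is immediate. Substituting both estimates back into the Cauchy--Schwarz bound and squaring delivers the claim.

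The main obstacle is the tail bound on $\|y_{1:t-1}\|_2$: because assumption~(ii) only gives $\rho(A)\leq 1$ rather than $\rho(A)<1$, no mixing argument is available, and the variance of $y_t$ may grow polynomially in $t$. One must therefore rely on the polynomial envelope $\|A^t\|_2 \leq \gamma t^{\log\gamma}$ together with a uniform-in-$t$ concentration inequality to avoid super-polynomial blowup. This polynomial growth is precisely what imposes the $\log(T/\epsilon)$ factor in the required $k$: the exponential decay in $k/\log T$ from Theorem~\ref{thm:kwidthresult} must cancel a $\poly(T)$ factor, which costs exactly $\log(T)\log(T/\epsilon)$ filters.
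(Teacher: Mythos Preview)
Your proposal is correct and follows essentially the same approach as the paper: decompose the bias into a deterministic coefficient-approximation factor (controlled by the spectral/PCA error bound underlying Theorem~\ref{thm:kwidthresult}) times a stochastic observation-norm factor (controlled by a Gaussian quadratic-form tail bound), then choose $k$ so the exponential decay $c^{-k/\log T}$ kills the resulting $\poly(T)$ envelope. The only cosmetic differences are that the paper writes out the eigendecomposition sum $\sum_i Cv_iw_i^\top K\, r_{t-1:1}(\lambda_i)$ explicitly rather than packaging it as $\|\widetilde{\cO}_{t-1}-\cO_{t-1}\|_2$, and invokes the Hsu--Kakade--Zhang tail bound (their Lemma~\ref{lemma:observation_norm_high_prob_bound}) in place of Hanson--Wright; both routes yield the same $\log(T)\log(T/\epsilon)$ requirement on $k$.
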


In Appendix \ref{app:proof_thm_1}, the result of the above theorem is followed by an application of a vector self-normalizing martingale theorem to prove a $\polylog(T)$ bound on the improper learning bias.

\begin{remark} \normalfont While the algorithm derivation, convex relaxation approximation error, and most of the regret analysis consider a system with control inputs, the excitation result of Lemma \ref{lemma:excitation_result} is given without inputs. We believe that extending our analysis for LDS with inputs is possible by characterizing input features and in light of the experiments. However, such an extension requires some care. For instance, one needs to characterize the covariance between features constructed from observations and features constructed from inputs to demonstrate a small-ball condition.
\end{remark}

\subsection{Regularization error}
Lastly, we demonstrate an upper bound on the regularization error in \eqref{eq:regret_decomp_roadmap}. We write the following bound
\begin{align*}
    \sum_{t=1}^T \|\alpha \tilde{\Theta} Z_{t-1}^{-1} f_t\|_2^2 \leq \alpha^2 \frac{1}{\alpha}\|\tilde{\Theta}\|_2^2 \sum_{t=1}^T \|Z_{t-1}^{-1/2} f_t\|_2^2 \leq \sum_{t=1}^T \|Z_{t-1}^{-1/2} f_t\|_2^2.
\end{align*}
The first inequality is based on $Z_t \succeq \alpha I$ and the submultiplicative property of norm. The second inequality uses the fact that $\|\widetilde{\Theta}\|_2^2 \leq 1/\alpha$ for $\alpha \asymp_M (R_\Theta k T^\beta)^{-1}$ as shown in Appendix \ref{app:regularization_term}. The last term is bounded as result of Lemma \ref{lemma:excitation_result}.

\section{Experiments}\label{sec:experiments}

We carry out experiments to evaluate the empirical performance of our provable method in three dynamical systems with long-term memory. We compare our results against those yielded by the wave filtering algorithm \cite{hazan2017learning} implemented with follow the regularized leader and the truncated filtering algorithm \cite{tsiamis2020online}. We consider $\|\hat{m}_t - m_t\|^2$, the squared error between algorithms predictions and predictions by a Kalman filtering algorithm that knows system parameters, as a performance measure. For all algorithms, we use $k = 20$ filters and run each experiment independently 100 times and present the average error with 99\% confidence intervals.
    
\begin{figure}[h]
    \centering
    \includegraphics[scale = 0.33]{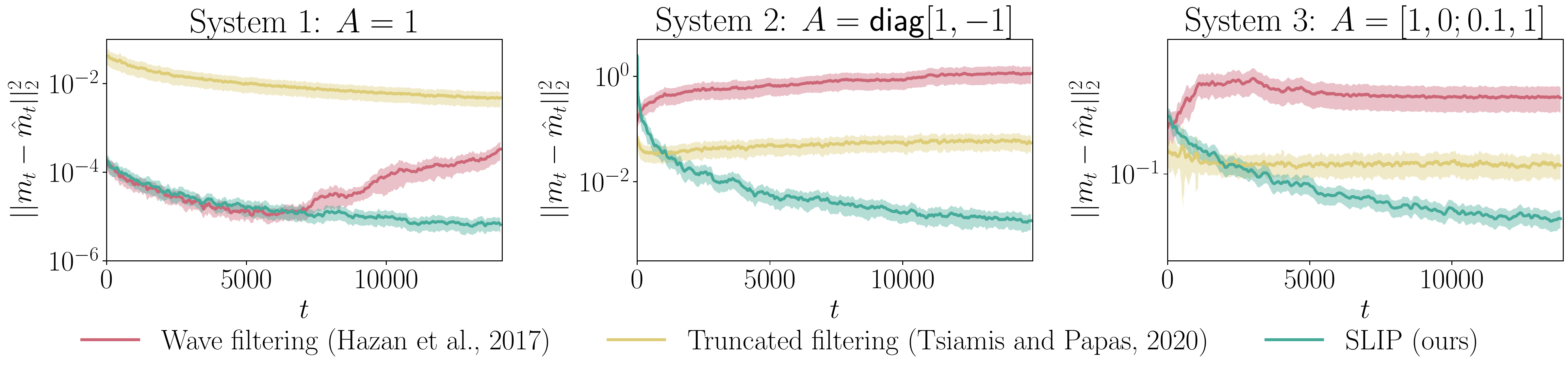}
    \caption{Performance of our algorithm compared with wave filtering and truncated filtering. System 1 is an scalar LDS with $A = B = D = 1$, $C = Q = R = 0.001$, and $x_t \sim \mathcal{N}(0,2)$. System 2 is a multi-dimensional LDS with no inputs and $A = \mathsf{diag}[-1,1]$, $C = [0.1,0.5]$, $R = 0.5$, and $Q = [4, 6; 6,10]\times 10^{-3}$. System 3 is another multi-dimensional LDS with non-symmetric $A = [1, 0; 0.1, 1]$, $x_i \sim \mathcal{U}(-0.01,0.01)$, $Q = 10^{-3}I$, $R = I$, $C = [0,0.1;0.1,1]$, and $B, D$ are matrices of all ones.}
    \label{fig:experiments}
\end{figure}

In the first example (Figure \ref{fig:experiments}, left), we consider a scalar marginally stable system with $A=1$ and Gaussian inputs. This system exhibits long forecast memory with $G \approx 0.999$. Observe that the truncated filter suffers from a large error which is due to ignoring long-term dependencies. The wave filter predictions also deviates from optimal predictions as it only considers $y_{t-1}, x_{1:t}$ for predicting $y_t$. The middle plot in Figure \ref{fig:experiments} presents the results for a multi-dimensional system with $A = \mathsf{diag}[-1,1]$ and no inputs. This system also has a long forecast memory ($G$ has eigenvalues $\approx \{0.991, -0.932\}$), resulting in poor performance of the truncated filter. The wave filter also performs poorly in this system as it is only driven by stochastic noise. For the last example, we consider another multi-dimensional system where $A$ is a lower triangular matrix (Figure \ref{fig:experiments}, right). This is a difficult example where $\rho(A) = 1$ but $\|A\|_2 > 1$, resulting in a polynomial growth of the observations over time. The results show that our algorithm outperforms both the wave filter, which requires a symmetric $A$, and the truncated filter in the case of fast-growing observations.

Experiments on hyperparameter sensitivity of our algorithm and comparison with the EM algorithm are provided in Appendix \ref{app:extra_experiments}.

\section{Discussion and future work}
We presented the SLIP algorithm, an efficient algorithm for learning a predictive model of an unknown LDS. Our algorithm provably and empirically converges to the optimal predictions of the Kalman filter given the true system parameters, even in the presence of long forecast memory. We analyzed the generalized $k$-width of the Kalman filter coefficient set with closed-loop matrix $G$ and obtained a low-dimensional linear approximation of the Kalman filter when $G$ is diagonalizable with real eigenvalues. We proved that without assuming real eigenvalues, the Kalman filter coefficient set is difficult to approximate by linear subspaces. Our approach of studying $k$-width as a measure for the possibility of an efficient convex relaxation may be of independent interest. Important future directions are to design efficient algorithms that handle arbitrary $G$ and to provide theoretically guaranteed uncertainty estimation for prediction.

\section*{Acknowledgements}
The authors would like to thank the anonymous reviewers for their comments and suggestions, which helped improve the quality and clarity of the manuscript.  This work is supported by the Scalable Collaborative Human-Robot Learning (SCHooL) Project, an NSF National Robotics Initiative Award 1734633. The work of Jiantao Jiao was partially supported by NSF Grants IIS-1901252 and CCF-1909499.

\bibliographystyle{plainnat}
\bibliography{references}
\newpage

\appendix
\section*{Guide to the appendix}
\addcontentsline{toc}{section}{\protect Guide to the Appendix}

The appendix is organized as follows.

In Appendix \ref{app:matrix_representations}, we present a matrix representation of system \eqref{eq:lds_definition} describing aggregated observations $y_{1:t}$ in terms of past inputs and noise. We also restate our matrix representation of the Kalman predictive model.

In Appendix \ref{app:norm_bounds}, we provide upper bounds on the matrix coefficients used in the aggregated system representation as well as a high probability upper bound on the norm of observations $\|y_{1:t}\|_2$. We also discuss our assumption on the 2-norm of the Kalman coefficient matrices (control matrix $\cC_t$ and observation matrix $\cO_t$) and present two examples providing bounds on the 2-norm of these coefficients.

In Appendix \ref{app:kolmogorov_width}, we first analyze the error of approximating $\mu(\lambda)$ by spectral methods, considering the spectrum of the Hankel covariance matrix. A proof of Theorem \ref{thm:kwidthresult} is presented in Appendix \ref{app:proof_kwidth_theorem}.

In Appendix \ref{sec:convex_relaxation_bias}, we analyze convex relaxation approximation error and show that the convex relaxation bias is small with high probability, provided that the number of filters $k \gtrsim_M \log^2 (T)$.

In Appendix \ref{app:regret_decomposition}, we write a bound on regret decomposed into least squares error, improper learning bias, regularization error, and innovation error. We further extract the term $\|Z_{t-1}^{-1/2} f_t\|_2^2$ making the bound ready for analysis in subsequent sections.

In Appendix \ref{app:regret_analysis}, we provide our regret analysis. In Appendix \ref{app:det_z_t_bound}, we present a high probability bound on $\det(Z_t)$ that appears multiple times throughout our analysis. In Appendix \ref{app:self_normalizing_martingales}, we derive a result on self-normalizing vector martingales that assists bounding several terms. In Appendix \ref{app:bound_Zt_inv_ft}, we provide a bound on $\|Z_{t-1}^{-1/2} f_t\|_2^2$ using sub-Gaussian tail properties, a block-martingale small-ball condition, and a filter quadratic function condition. The proof of Lemma \ref{lemma:excitation_result} is given in Appendix \ref{app:proof_lemma_1}. The regularization term and innovation error are analyzed in Appendix \ref{app:regularization_term} and Appendix \ref{app:innovation_error}, respectively. The proof of the regret theorem is presented in Appendix \ref{app:proof_thm_1}.

A few technical lemmas are presented in Appendix \ref{app:technical_lemmas}. Additional experiments are presented in Appendix \ref{app:extra_experiments}.
\newpage 

\section{Aggregated representations}\label{app:matrix_representations}
We start by introducing an aggregated notation for representing linear dynamical systems and the Kalman predictive model. 

\subsection{Linear dynamical systems}\label{app:matrix_rep_LDS}
For the linear dynamical system of \eqref{eq:lds_definition}, define the following matrices
\begin{align}\label{eq:lds_definition_of_matrices}
\begin{split}
    \mathcal{T}_t & = \begin{bmatrix}
    C & 0 & 0 & \dots & 0\\
    CA & C & 0 & \dots & 0\\
    CA^2 & CA & C & \dots & 0\\
    \vdots & \vdots & \vdots & \ddots & \vdots\\
    CA^{t-1} & CA^{t-2} & CA^{t-3} & \dots & C
    \end{bmatrix} \begin{bmatrix}
    AP^{1/2} & 0 & 0 & \dots & 0\\
    0 & Q^{1/2} & 0 & \dots & 0\\
    0 & 0 & Q^{1/2} & \dots & 0\\
    \vdots & \vdots & \vdots & \ddots & \vdots\\
    0 & 0 & 0 & \dots & Q^{1/2}
    \end{bmatrix},\\ 
    \mathcal{I}_t & = \begin{bmatrix}
    D & 0 & 0 & \dots & 0\\
    CB & D & 0 & \dots & 0\\
    CAB & CB & D & \dots & 0\\
    \vdots & \vdots & \vdots & \ddots & \vdots\\
    CA^{t-2}B & CA^{t-3}B & CA^{t-4}B & \dots & D
    \end{bmatrix},\\
    \mathcal{R}_t & = \begin{bmatrix}
    R^{1/2} & 0 & 0 & \dots & 0\\
    0 &  R^{1/2} & 0 & \dots & 0\\
    0 & 0 &  R^{1/2} & \dots & 0\\
    \vdots & \vdots & \vdots & \ddots & \vdots\\
    0 & 0 & 0 & \dots &  R^{1/2}
    \end{bmatrix}.
\end{split}
\end{align}
Let $\mathcal{K}_t \mathcal{K}^\top_t = \mathcal{T}_t \mathcal{T}^\top _t+\mathcal{R}_t \mathcal{R}^\top _t$, where $\mathcal{K}_t$ is the unique solution to Cholesky decomposition. The system observations $y_{1:t}$ can be written as 
\begin{align}\label{eq:y_matrix_representation}
    y_{1:t} = \mathcal{K}_t \xi_{1:t} + \mathcal{I}_t x_{1:t},
\end{align}
where $\xi_i \in \mathbb{R}^m$ is a Gaussian random vector with covariance $I_m$.

\subsection{Kalman filter}\label{app:matrix_rep_KF}
For convenience, we restate our notation of the Kalman predictive model from Section \ref{sec:problem_statement}. Define the following matrices
\begin{align}
    \begin{split}
        \cO_t & = \begin{bmatrix} CG^{t-1}K & CG^{t-3}K & \dots & CK\end{bmatrix}, \\
        \mathcal{C}_t & = \begin{bmatrix}CG^{t-1}(B-KD) & CG^{t-2}(B-KD) & \dots & C(B-KD)\end{bmatrix}.
    \end{split}
\end{align}
We refer to $\cO_t$ and $\cC_t$ as \textit{observation matrix} and \textit{control matrix}, respectively. Using the above notation, the Kalman prediction $m_{t+1}$ is given by
\begin{align*}
    m_{t+1} = \cO_{t} y_{1:t} + \mathcal{C}_t x_{1:t} + Dx_{t+1}.
\end{align*}

\section{Norm bounds}\label{app:norm_bounds}
As a preliminary step, we compute a few bounds that will be used later in the regret analysis of the SLIP algorithm. In particular, we compute upper bounds on the norms of parameter matrices defined in \eqref{eq:lds_definition_of_matrices} and discuss upper bounds on the norms of observation and control matrix of the Kalman predictive model. Further, we derive a high probability upper bound on the observation norm. 

\subsection{Bounds on parameters}
The following lemma provides upper bounds on the norm of matrices that describe a linear dynamic system.
\begin{lemm} \label{lemma:lds_matrix_bounds}\normalfont{\textbf{(LDS parameter bounds)}} \textit{Consider system (4). Let $R_P = \max\{\|B\|_2, \|C\|_2, \|D\|_2 \}$ and $R_C = \max \{\|P\|_2, \|Q\|_2, \|R\|_2\}$. Suppose that $\|A^t\|_2 \leq \gamma t^{\log (\gamma)}$ for a bounded constant $\gamma \geq 1$. For $\mathcal{T}_t$, $\mathcal{I}_t$, and $\mathcal{K}_t$ defined in \eqref{eq:lds_definition_of_matrices}, the following operator norm bounds hold:}
\begin{enumerate}[(i)]
    \item $\|\mathcal{T}_t \|_2 \leq R_C^{1/2} R_P \gamma(1+\gamma) t^{\log(\gamma)+1}$,
    \item $\|\mathcal{I}_t\|_2 \leq R_P [1+ t \gamma t^{\log (\gamma)}]$,
    \item $\|\mathcal{K}_t\|_2\leq \sqrt{R_C + R_C R_P^2 (1+\gamma)^4 t^{2\log(\gamma)+2}}$.
\end{enumerate}
\end{lemm}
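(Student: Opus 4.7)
The plan is to bound each of $\|\mathcal{T}_t\|_2$, $\|\mathcal{I}_t\|_2$, and $\|\mathcal{K}_t\|_2$ by reducing to the single block $\|C A^k\|_2$ bounds coming from the marginal stability assumption $\|A^t\|_2 \le \gamma t^{\log\gamma}$, together with $\|C\|_2,\|B\|_2,\|D\|_2\le R_P$ and $\|P\|_2,\|Q\|_2,\|R\|_2 \le R_C$. The central trick throughout is the \emph{diagonal-band decomposition}: any block lower-triangular, block-Toeplitz matrix $M$ with blocks $M_{ij}=S_{i-j}$ (for $i\ge j$) can be written $M=\sum_{k=0}^{t-1} M^{(k)}$, where $M^{(k)}$ has $S_k$ on its $k$-th block subdiagonal and zero elsewhere; since $M^{(k)}$ is a single shifted block-diagonal, $\|M^{(k)}\|_2=\|S_k\|_2$, and the triangle inequality then gives $\|M\|_2\le \sum_k \|S_k\|_2$.

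For part (i), I first factor $\mathcal{T}_t = L_t\,D_t$, where $L_t$ is the block-Toeplitz triangular matrix with blocks $C A^{i-j}$ and $D_t$ is the block-diagonal matrix with diagonal blocks $AP^{1/2}, Q^{1/2},\dots,Q^{1/2}$. Sub-multiplicativity gives $\|\mathcal{T}_t\|_2 \le \|L_t\|_2\,\|D_t\|_2$. For $D_t$ the operator norm is the largest block norm, which is bounded by $\max\{\|A\|_2\|P\|_2^{1/2},\|Q\|_2^{1/2}\} \le \gamma R_C^{1/2}$ using $\|A\|_2\le \gamma\cdot 1^{\log\gamma}=\gamma$. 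For $L_t$ the diagonal-band trick gives
\[
\|L_t\|_2 \le \sum_{k=0}^{t-1}\|CA^k\|_2 \le R_P + R_P\gamma \sum_{k=1}^{t-1} k^{\log\gamma} \le R_P\bigl(1+\gamma\,t^{\log\gamma+1}\bigr) \le R_P(1+\gamma)\,t^{\log\gamma+1},
\]
where the last step absorbs the constant term using $t\ge 1$ and $\log\gamma+1\ge 0$. Multiplying the two pieces yields the claimed bound.

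For part (ii), I apply the diagonal-band decomposition directly to $\mathcal{I}_t$ (no factorization needed), with $S_0=D$ and $S_k=CA^{k-1}B$ for $k\ge 1$. Bounding $\|CA^{k-1}B\|_2 \le R_P^2 \|A^{k-1}\|_2$ and summing $k=1,\dots,t-1$ gives an expression of the form $R_P(1+\gamma\,t^{\log\gamma+1})$ up to constants, which after loosening matches the stated $R_P[1+t\gamma t^{\log\gamma}]$. For part (iii), I use the defining identity $\mathcal{K}_t \mathcal{K}_t^\top = \mathcal{T}_t\mathcal{T}_t^\top+\mathcal{R}_t\mathcal{R}_t^\top$ and the spectral identity $\|\mathcal{K}_t\|_2^2 = \|\mathcal{K}_t\mathcal{K}_t^\top\|_2$, so that
\[
\|\mathcal{K}_t\|_2^2 \le \|\mathcal{T}_t\|_2^2 + \|\mathcal{R}_t\|_2^2 \le R_C\,R_P^2\,\gamma^2(1+\gamma)^2\, t^{2\log\gamma+2} + R_C,
\]
where $\|\mathcal{R}_t\|_2=\|R^{1/2}\|_2\le R_C^{1/2}$ because $\mathcal{R}_t$ is block-diagonal. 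Taking square roots and using $\gamma^2(1+\gamma)^2\le (1+\gamma)^4$ gives the stated bound.

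There is no serious obstacle here — the lemma is essentially bookkeeping once one observes that the diagonal-band trick reduces everything to geometric-type sums of $\|A^k\|_2$. The one detail that needs mild care is ensuring that loose constant absorptions (e.g.\ $1+\gamma t^{\log\gamma+1} \le (1+\gamma)t^{\log\gamma+1}$, and $\gamma^2(1+\gamma)^2 \le (1+\gamma)^4$) are consistent with $t\ge 1$, $\gamma\ge 1$, and $\log\gamma\ge 0$; these all follow trivially from the stated hypotheses.
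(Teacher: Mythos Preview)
Your proposal is correct and follows essentially the same approach as the paper: the ``diagonal-band decomposition'' you describe is exactly the triangular block Toeplitz norm lemma (Lemma~\ref{lemma:triangular_block_toeplitz_2_norm}) invoked in the paper, and your factorization $\mathcal{T}_t=L_tD_t$, the direct band-sum for $\mathcal{I}_t$, and the $\|\mathcal{K}_t\|_2^2\le\|\mathcal{T}_t\|_2^2+\|\mathcal{R}_t\|_2^2$ step for (iii) all mirror the paper's argument. One small remark: in part~(ii) both you and the paper end up with $R_P+R_P^2\gamma t^{\log\gamma+1}$, which carries an extra $R_P$ compared to the stated bound $R_P[1+\gamma t^{\log\gamma+1}]$; this is a minor discrepancy in the paper's statement rather than a flaw in your reasoning.
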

\begin{proof}
By Lemma \ref{lemma:triangular_block_toeplitz_2_norm},
\begin{align*}
    \|\mathcal{T}_t \|_2 \leq (\|A\|_2+1) R_C^{1/2} \|C\|_2 \sum_{i=1}^t \|A^i\|_2 \leq  R_C^{1/2} R_P \gamma(1+\gamma) t^{\log(\gamma)+1}.
\end{align*}
Similarly,
\begin{align*}
    \|\mathcal{I}_t\|_2 \leq \|D\|_2 + \|C\|_2\|B\|_2 \sum_{i=1}^t \|A^i\|_2 \leq R_P + R_P^2 \gamma t^{\log (\gamma)+1}.
\end{align*}
It follows by the sub-additive property of matrix operator norm that
\begin{align*}
    \|\mathcal{K}_t\mathcal{K}_t^\top\|_2 = \|\mathcal{K}_t\|^2_2 \leq \|\mathcal{T}_t \|_2^2 + \|\mathcal{R}_t \|_2^2 \quad \Rightarrow \quad \|\mathcal{K}_t\|_2\leq \sqrt{R_C + R_C R_P^2 (1+\gamma)^4 t^{2\log(\gamma)+2}}.
\end{align*}
\end{proof}

In the regret analysis, we assume that $\|\cO_t\|_2 \leq R_{\cO}t^\beta$ for a finite $\beta \geq 0$. We justify this assumption in the examples below. The following example shows that $\beta = 0$ when the system is single-input single-output (SISO).

\begin{examp} \normalfont{\textbf{(Observation matrix norm bound in SISO systems)}} \label{example:scalar_R_O_bound} \textit{For a SISO linear dynamical system, the following equation holds
\begin{align*}
    KC = \frac{A \Sigma^+ C^2 }{\Sigma^+ C^2 + R} \Rightarrow 0 \leq KC \leq A.
\end{align*}
We have $G = A - KC$. Applying the above constraint gives
\begin{align*}
    G \leq A
\end{align*}
The squared norm of vector $\cO_t$ is given by
\begin{align*}
    \|\cO_t\|_2^2 = \sum_{i=0}^{t-1} (KC G^i)^2 = \sum_{i=0}^{t-1} (A-G)^2 G^{2i}.
\end{align*}
Under the constraint $G \leq A \leq 1$, the maximum of $\|\cO_t\|_2^2$ is 1 obtained when $G = 0$ and $A = 1.$}
\end{examp}
In the following example, we compute a loose upper bound on $\|\cO_t\|_2$.
\begin{examp} \normalfont{\textbf{(Loose observation matrix norm bound in MIMO systems)}}
\textit{We begin by computing an upper bound on the norm of the Kalman gain. Let $K = AK'$. By the recursive updates of a stationary Kalman gain, we write
\begin{align*}
    CK' = C\Sigma^+ C^\top [C\Sigma^+ C^\top+Q]^{-1} \preceq I \Rightarrow \|CK'\|_2 \leq 1.
\end{align*}
Lower bounding $\|CK\|_2$ yields
\begin{align*}
    \|K'\|_2 \sigma_{\min}(C) \leq \|CK'\|_2 \leq 1 \Rightarrow \|K'\|_2 \leq \frac{1}{\sigma_{\min}(C)}.
\end{align*}
Let $\kappa_C = \sigma_{\max}(C)/\sigma_{\min}(C)$ to be the condition number of $C$. Assume $\|G^t\|_2 \leq \gamma_g t^{\log(\gamma_g)}$. We have
\begin{align*}
    \|\cO_t\|_2 \leq \sum_{i=1}^t \|C\|_2 \|G^i\|_2 \|K'\|_2 \leq \kappa_C \gamma_g t^{\log (\gamma_g)+1}.
\end{align*}}
\end{examp}

\subsection{Bound on observation norm}
One of the quantities that appear in the regret analysis of our algorithm is the squared norm of $y_{1:t}$. The following lemma provides a high probability upper bound for $\|y_{1:t}\|_2^2$.
\begin{lemm}\normalfont{\textbf{(Observation norm bound)}} \label{lemma:observation_norm_high_prob_bound}
\textit{Consider system (4). Let $R_P = \max\{\|B\|_2, \|C\|_2, \|D\|_2 \}$, $R_C = \max \{\|P\|_2, \|Q\|_2, \|R\|_2\}$, and $\|x_t\|_2 \leq R_x$. Suppose that $\|A^t\|_2 \leq \gamma t^{\log (\gamma)}$ for a bounded constant $\gamma \geq 1$. For any $\delta > 0$ and all $t \geq 0$,}
\begin{align*}
    \Prob \big[\|y_{1:t}\|_2^2 \geq 6 (R_P^2+1)(R_x^2+R_C) (1+\gamma)^4 (mt+\delta) t^{2+2\log (\gamma)} \big] \leq e^{-\delta}.
\end{align*}
\end{lemm}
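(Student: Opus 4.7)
The plan is to exploit the aggregated representation derived in Appendix~\ref{app:matrix_rep_LDS}, which writes $y_{1:t}=\mathcal{K}_t\xi_{1:t}+\mathcal{I}_t x_{1:t}$ for a standard Gaussian vector $\xi_{1:t}$, and then split the squared norm into a purely stochastic term and a purely deterministic term via the inequality $\|a+b\|_2^2\leq 2\|a\|_2^2+2\|b\|_2^2$. Each half can then be controlled separately using the operator norm bounds on $\mathcal{I}_t$ and $\mathcal{K}_t$ already established in Lemma~\ref{lemma:lds_matrix_bounds}.

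For the deterministic half, submultiplicativity gives $\|\mathcal{I}_t x_{1:t}\|_2^2 \leq \|\mathcal{I}_t\|_2^2\,\|x_{1:t}\|_2^2 \leq t R_x^2 \|\mathcal{I}_t\|_2^2$, which by Lemma~\ref{lemma:lds_matrix_bounds}(ii) is at most a polynomial in $t$ of the form $R_P^2(1+\gamma)^2 R_x^2\, t^{2\log\gamma+3}$, up to absolute constants. I will then absorb the extra factor of $t$ into the $(mt+\delta)$ factor of the target using the trivial inequality $t\leq mt+\delta$ (valid because $m\geq 1$ and $\delta\geq 0$), so that this contribution sits inside the stated envelope $(R_P^2+1)(R_x^2+R_C)(1+\gamma)^4 (mt+\delta)\, t^{2+2\log\gamma}$.

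For the stochastic half, observe that $\|\mathcal{K}_t\xi_{1:t}\|_2^2 = \xi_{1:t}^\top \mathcal{K}_t^\top\mathcal{K}_t\,\xi_{1:t}$ is a Gaussian quadratic form with mean $\mathrm{tr}(\mathcal{K}_t\mathcal{K}_t^\top)$. Using the coarse bounds $\mathrm{tr}(\mathcal{K}_t\mathcal{K}_t^\top)\leq mt\|\mathcal{K}_t\|_2^2$ and $\mathrm{tr}((\mathcal{K}_t\mathcal{K}_t^\top)^2)\leq mt\|\mathcal{K}_t\|_2^4$, a standard Hanson--Wright (equivalently, Laurent--Massart) tail bound yields, with probability at least $1-e^{-\delta}$,
\begin{align*}
\|\mathcal{K}_t\xi_{1:t}\|_2^2 \;\leq\; \|\mathcal{K}_t\|_2^2\bigl(mt + 2\sqrt{mt\,\delta} + 2\delta\bigr) \;\leq\; 3(mt+\delta)\,\|\mathcal{K}_t\|_2^2,
\end{align*}
after applying AM--GM to the cross term. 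Substituting the bound $\|\mathcal{K}_t\|_2^2 \leq R_C\bigl(1+R_P^2(1+\gamma)^4 t^{2\log\gamma+2}\bigr)$ from Lemma~\ref{lemma:lds_matrix_bounds}(iii) and folding the result into the envelope $(R_P^2+1)(R_x^2+R_C)(1+\gamma)^4$ gives the required form for this term.

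Summing the deterministic and stochastic contributions, collapsing the resulting constants into the factor of $6$ in the stated inequality, and retaining the failure probability $e^{-\delta}$ from the Hanson--Wright step will complete the proof. The main obstacle here is not probabilistic but purely bookkeeping: one has to align the powers of $t$ and the pre-factors in $R_P, R_C, R_x, \gamma, m$ from two separate bounds so they both slot cleanly into the same target envelope. The essential probabilistic ingredient is the single concentration inequality for Gaussian quadratic forms; no chaining, mixing, or small-ball argument is needed at this stage.
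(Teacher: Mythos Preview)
Your proposal is correct and follows essentially the same route as the paper: the same aggregated representation $y_{1:t}=\mathcal{K}_t\xi_{1:t}+\mathcal{I}_t x_{1:t}$, the same $\|a+b\|_2^2\le 2\|a\|_2^2+2\|b\|_2^2$ split, the same operator-norm bounds from Lemma~\ref{lemma:lds_matrix_bounds}, and the same Gaussian quadratic-form tail bound (the paper cites \citet{hsu2012tail} and first factors out $\|\mathcal{K}_t\|_2^2$ before applying the chi-squared tail to $\|\xi_{1:t}\|_2^2$, whereas you apply Hanson--Wright directly to $\xi^\top\mathcal{K}_t^\top\mathcal{K}_t\xi$, but the resulting inequality $\le \|\mathcal{K}_t\|_2^2(2mt+3\delta)$ is identical). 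The remaining work is exactly the constant bookkeeping you flagged.
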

\begin{proof} 
From \eqref{eq:y_matrix_representation}, we see that 
\begin{align*}
    \|y_{1:t}\|_2^2 \leq 2\|\mathcal{I}_t\|_2^2 \|x_{1:t}\|_2^2 + 2 \|\mathcal{K}_t\|_2^2\|\xi_{1:t}\|_2^2
\end{align*}
Using Gaussian upper tail bounds \cite{hsu2012tail}, we have
\begin{align*}
    \Prob \big[ \|\xi_{1:t}\|_2^2 > 2mt + 3\delta \big] \leq \Prob \big[ \|\xi_{1:t}\|_2^2 > mt + 2 \sqrt{mt \delta} + 2 \delta \big] \leq e^{-\delta}.
\end{align*}
Using the bounds computed in Lemma \ref{lemma:lds_matrix_bounds}, the following holds with probability at least $1-e^{-\delta}$
\begin{align}\label{ineq:bound_inequality_observation_norm}
    \begin{split}
        \|y_{1:t}\|_2^2
        \leq & 2\|\mathcal{I}_t\|_2^2 \|x_{1:t}\|_2^2 + 2 \|\mathcal{K}_t\|_2^2\|\xi_{1:t}\|_2^2\\
        \leq & 6 (R_P^2+1)(R_x^2+R_C) (1+\gamma)^4 (mt+\delta) t^{2+2\log (\gamma)}.
    \end{split}
\end{align}
\end{proof}

\section{Filter approximation and width analysis}\label{app:kolmogorov_width}
In this section we first provide a series of lemmas characterizing the reconstruction error of applying PCA to approximate the vector function $\mu(\lambda) = [1, \lambda, \dots, \lambda^{T-1}]$. These lemmas are later used to prove Theorem \ref{thm:kwidthresult}.

\subsection{Bounds on PCA approximation error}\label{app.pcaerror}
The goal of this section is to establish a uniform bound on the norm of the reconstruction error of approximating $\mu(\lambda)$ with $\widetilde{\mu}(\lambda)$. The following lemma states a standard result on the average PCA reconstruction error, presented here for completeness.
\begin{lemm} \normalfont{\textbf{(Average reconstruction error bound)}} \label{lemma:average_pca_error} \textit{Let $\mu(\lambda) \in \mathbb{R}^T$ be a vector function parameterized by $\lambda \in \mathcal{A}$. Define the following matrix with respect to probability measure $p$
\begin{align*}
    Z = \int_{\mathcal{A}} \mu(\lambda) \mu^\top(\lambda) p(d\lambda).
\end{align*}
Let $\{(\sigma_j, \phi_j)\}_{j=1}^T$ be the eigenpairs of $Z$. Let $\widetilde{\mu}(\lambda)$ be the projection of $\mu(\lambda)$ to the linear subspace spanned by $\{\phi_1, \dots, \phi_k\}$. Then,}
\begin{align*}
    \int_{\mathcal{A}} \|\mu(\lambda) - \widetilde{\mu}(\lambda)\|_2^2 p(d\lambda) = \sum_{j=k+1}^{T} \sigma_j.
\end{align*}
\end{lemm}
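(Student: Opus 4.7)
The plan is to exploit the fact that the eigenvectors $\{\phi_j\}_{j=1}^T$ of the symmetric positive semidefinite matrix $Z$ form an orthonormal basis of $\mathbb{R}^T$, so that Parseval's identity applied pointwise in $\lambda$ turns the projection residual into a sum of squared coefficients, which then linearize under the integral to quadratic forms in $Z$.

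Concretely, I would first decompose $\mu(\lambda) = \sum_{j=1}^T \langle \mu(\lambda), \phi_j\rangle \phi_j$ in the orthonormal eigenbasis. Since $\widetilde{\mu}(\lambda)$ is the orthogonal projection of $\mu(\lambda)$ onto $\mathrm{span}\{\phi_1,\ldots,\phi_k\}$, I would write
\begin{align*}
\mu(\lambda) - \widetilde{\mu}(\lambda) \;=\; \sum_{j=k+1}^{T} \langle \mu(\lambda), \phi_j\rangle \phi_j,
\end{align*}
and apply Parseval to get $\|\mu(\lambda) - \widetilde{\mu}(\lambda)\|_2^2 = \sum_{j=k+1}^T \langle \mu(\lambda), \phi_j\rangle^2$ for every $\lambda \in \mathcal{A}$.

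Next, I would rewrite each squared inner product as a quadratic form, $\langle \mu(\lambda), \phi_j\rangle^2 = \phi_j^\top \mu(\lambda) \mu(\lambda)^\top \phi_j$, which has the virtue that the only $\lambda$-dependence sits inside the rank-one matrix $\mu(\lambda)\mu(\lambda)^\top$. Integrating against $p$ and swapping sum with integral (all terms are nonnegative, so Tonelli applies with no further hypotheses), I would pull the constant vectors $\phi_j$ out to obtain
\begin{align*}
\int_{\mathcal{A}} \|\mu(\lambda) - \widetilde{\mu}(\lambda)\|_2^2 \, p(d\lambda) \;=\; \sum_{j=k+1}^{T} \phi_j^\top \Big(\int_{\mathcal{A}} \mu(\lambda)\mu(\lambda)^\top p(d\lambda)\Big) \phi_j \;=\; \sum_{j=k+1}^{T} \phi_j^\top Z \phi_j.
\end{align*}
Finally, invoking the eigenrelation $Z\phi_j = \sigma_j \phi_j$ together with $\|\phi_j\|_2 = 1$ collapses each quadratic form to $\sigma_j$, which gives the claimed identity.

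There is no real obstacle here; the statement is the standard Eckart--Young/PCA reconstruction identity lifted from a finite design matrix to an integral covariance operator, and every step is a one-line manipulation. The only mild care needed is to note that the integral and sum can be interchanged (nonnegativity suffices) and that the orthonormality of $\{\phi_j\}$ is what simultaneously justifies using an orthogonal projection formula and collapsing the quadratic forms into eigenvalues.
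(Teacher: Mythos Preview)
Your proposal is correct and essentially the same as the paper's proof: the paper writes the residual via the projection matrix $\Pi_k = I - U_kU_k^\top$, computes $\|r(\lambda)\|_2^2 = \text{tr}[\Pi_k \mu(\lambda)\mu(\lambda)^\top \Pi_k^\top]$, pulls the integral inside the trace to get $\text{tr}[\Pi_k Z \Pi_k^\top]$, and reads off the tail eigenvalues. Your Parseval expansion and the paper's trace formulation are just two bookkeepings of the same one-line computation.
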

\begin{proof}
Define $U_k$ to be a $T \times k$ matrix with columns $\phi_1, \dots \phi_k$, the eigenvectors of matrix $Z$. The reconstruction error can be written as
\begin{align*} 
    r(\lambda) = \mu(\lambda) - U_kU_k^\top \mu(\lambda) = (I-U_kU_k^\top )\mu(\lambda) = \Pi_k \mu(\lambda). 
\end{align*}
The average squared norm of reconstruction error is given by
\begin{align*}
    \int_{\mathcal{A}} \|r(\lambda)\|_2^2 p(d\lambda) & = \int_{\mathcal{A}} \text{tr}[r(\lambda)r(\lambda)^\top] p(d\lambda) = \int_{\mathcal{A}} \text{tr}[\Pi_k \mu(\lambda)\mu(\lambda)^\top  \Pi_k^\top ] p(d\lambda)\\
    & =  \text{tr}[\Pi_k \int_{\mathcal{A}} \mu(\lambda)\mu(\lambda)^\top  p(d\lambda) \Pi_k^\top ] = \text{tr}[\Pi_k Z \Pi_k^\top ] = \sum_{j = k+1}^T \sigma_j.
\end{align*}
\end{proof}
We then use Lipschitz continuity of $\mu(\lambda)$ over the interval $[-1,1]$ to establish a uniform bound on the reconstruction error.
\begin{lemm}\label{lemma:max_reconstruction_bound}
\textit{Let $\mu(\lambda) = [1, \lambda, \lambda^2, \dots, \lambda^{T-1}]^\top$ for $\lambda \in [-1,1]$ and define
\begin{align*}
    H = \int_{-1}^1 \frac{1}{2} \mu(\lambda) \mu(\lambda)^\top d\lambda.
\end{align*}
Let $\{(\sigma_j, \phi^j)\}_{j=1}^{T}$ be the eigenpairs of $H$, where $\sigma_j$ are in decreasing order. Let $\widetilde{\mu}(\lambda)$ be the projection of $\mu(\lambda)$ to the linear subspace spanned by $\{\phi_1, \dots, \phi_k\}$. Then, for any $\lambda \in [-1,1]$ and $T \geq 1$,}
\begin{align*}
    \|\mu(\lambda) - \widetilde{\mu}(\lambda)\|_2^2 \leq T \sqrt{2\sum_{j = k+1}^{T} \sigma_j}.
\end{align*}
\end{lemm}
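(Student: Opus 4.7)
The plan is to promote the average-case bound from Lemma~\ref{lemma:average_pca_error} into a pointwise maximum bound using a Lipschitz argument over the compact interval $[-1,1]$. Let $\Pi_k = I - U_k U_k^\top$ where $U_k$ collects the top $k$ eigenvectors of $H$, and define $r(\lambda) \triangleq \|\mu(\lambda) - \widetilde{\mu}(\lambda)\|_2^2 = \mu(\lambda)^\top \Pi_k \mu(\lambda)$. Applying Lemma~\ref{lemma:average_pca_error} with the uniform probability measure $p(d\lambda) = \tfrac{1}{2}\,d\lambda$ on $[-1,1]$ yields the integral identity
\[
\int_{-1}^{1} r(\lambda)\,d\lambda \;=\; 2\sum_{j=k+1}^T \sigma_j,
\]
which is the only fact about $\widetilde{\mu}$ used in what follows.

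Next I would bound the Lipschitz constant of $r$ on $[-1,1]$. Differentiating gives $r'(\lambda) = 2\,\mu'(\lambda)^\top \Pi_k \mu(\lambda)$, so $|r'(\lambda)| \le 2\|\mu'(\lambda)\|_2\,\|\mu(\lambda)\|_2$. Using the explicit power-series forms $\|\mu(\lambda)\|_2^2 = \sum_{i=0}^{T-1} \lambda^{2i} \le T$ and $\|\mu'(\lambda)\|_2^2 = \sum_{i=1}^{T-1} i^2\lambda^{2(i-1)} \le \tfrac{(T-1)T(2T-1)}{6}$ valid on $[-1,1]$ produces a Lipschitz constant of order $L = O(T^2)$ with an explicit numerical prefactor.

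The passage from the integral identity and the Lipschitz bound to the pointwise maximum is then a standard tent-function argument. Let $M = r(\lambda^*)$ denote the maximum of $r$ on $[-1,1]$. Combining $r(\lambda) \ge M - L|\lambda - \lambda^*|$ with $r \ge 0$ shows that the lower envelope integrates to at least $M^2/(2L)$ even in the worst case $\lambda^* \in \{-1,+1\}$, where only one half of the tent sits inside $[-1,1]$. Plugging the integral identity on the left gives $M^2 \le 4 L \sum_{j>k} \sigma_j$, and substituting the $O(T^2)$ Lipschitz bound yields the desired $O(T)\sqrt{\sum_{j>k}\sigma_j}$ estimate.

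The only delicate point is propagating sharp numerical constants to recover the exact $\sqrt{2}$ factor in the statement. This means keeping the combinatorial identity for $\sum_{i=1}^{T-1} i^2$ rather than settling for a loose $T^3$ bound and carefully distinguishing interior maximizers (where the full tent of radius $M/L$ fits inside $[-1,1]$) from boundary maximizers. Note that no information about the decay rate of the eigenvalues $\sigma_j$ is needed for this lemma; that ingredient only enters later in the proof of Theorem~\ref{thm:kwidthresult}, which combines this pointwise bound with the exponential singular-value decay for positive semi-definite Hankel matrices.
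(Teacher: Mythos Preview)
Your proposal is correct and follows essentially the same approach as the paper: bound the Lipschitz constant of $r(\lambda)=\|\mu(\lambda)-\widetilde{\mu}(\lambda)\|_2^2$ by $O(T^2)$ and combine it with the integral identity $\int_{-1}^1 r(\lambda)\,d\lambda = 2\sum_{j>k}\sigma_j$ from Lemma~\ref{lemma:average_pca_error} via a tent argument. The only cosmetic difference is that the paper first bounds the Lipschitz constant of $\|r(\lambda)\|_2$ (via $\|\Pi_k(\mu(\lambda_2)-\mu(\lambda_1))\|_2$) and then multiplies by $2\max_\lambda\|r(\lambda)\|_2\le 2\sqrt{T}$, whereas you differentiate $r$ directly; both routes give the same $L\le 2T^2$. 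Your explicit caveat about the boundary case $\lambda^*\in\{-1,1\}$ is in fact more careful than the paper, which glosses over this and writes the tent inequality with a sign slip; the exact constant $\sqrt{2}$ is not critical downstream since it is absorbed into $C_0$ in Lemma~\ref{lemma:bounding_mu_approximation_error}.
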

\begin{proof} Let us first compute an upper bound on the Lipschitz constant of $\mu(\lambda)$ over $\lambda \in [-1,1]$. The Lipschitz constant of $\mu(\lambda)$ is bounded by the norm of Jacobian $J(\mu(\lambda)) = [0, 1,2\lambda,\dots,(T-1)\lambda^{T-2}]$. Thus,
\begin{align*}
\frac{\|\mu(\lambda_2)-\mu(\lambda_1)\|_2}{|\lambda_2 - \lambda_1|} \leq \|J(\mu(\lambda))\|_2 \leq \sqrt{\sum_{t=1}^{T-1} t^2} \leq \sqrt{T^3/3}.
\end{align*}
Define $U_k$ to be a matrix with columns $\phi_1, \dots \phi_k$. The reconstruction error can be written as $r(\lambda) = (I-U_kU_k^\top )\mu(\lambda) = \Pi_k \mu(\lambda)$. A Lipschitz constant for reconstruction error norm is given by
\begin{align*}
    \|r(\lambda_2)\|_2 - \|r(\lambda_1)\|_2 & \leq \|r(\lambda_2) - r(\lambda_1)\|_2 \quad && \text{(inverse triangle inequality)}\\
    & = \|\Pi_k (\mu(\lambda_2) - \mu(\lambda_1))\|_2\\
    & \leq \|\Pi_k\|_2 \|(\mu(\lambda_2) - \mu(\lambda_1))\|_2 \quad && \text{(multiplicative property of norm)}\\
    & \leq \|(\mu(\lambda_2) - \mu(\lambda_1))\|_2 \quad && \text{($\Pi_k$ is contractive)}\\
    & \leq \sqrt{T^3/3} |\lambda_2 - \lambda_1| \quad && \text{(Lipschitz continuity of $\mu(\lambda)$)}
\end{align*}
Thus, an upper bound on the Lipschitz constant of $\|r(\lambda)\|_2^2$ can be computed
\begin{align*}
    \|r(\lambda_2)\|^2_2 - \|r(\lambda_1)\|^2_2 & = (\|r(\lambda_2)\|_2 - \|r(\lambda_1)\|_2)(\|r(\lambda_2)\|_2 + \|r(\lambda_1)\|_2) \\
    & \leq \big( \sqrt{T^3/3} |\lambda_2 - \lambda_1|\big) \big( 2 \max_\lambda \|r(\lambda)\|_2 \big)\\
    & \leq 2 \sqrt{T^3/3} \|\Pi_k\|_2 \max_{\lambda} \|\mu(\lambda)\|_2 |\lambda_2 - \lambda_1|\\
    & \leq 2 T^2 |\lambda_2 - \lambda_1|.
\end{align*}
Let $R_r = \max\limits_\lambda \|r(\lambda)\|_2^2$. On the account of Lemma \ref{lemma:average_pca_error}, $\|r(\lambda)\|_2^2$ has a bounded average over the interval $[-1,1]$. A bounded and $(2T^2)$-Lipschitz function that achieves the maximum $R_r$ has a triangular shape. It follows that
\begin{align*}
    \frac{R_r ^2}{2T^2} \geq \sum_{j = k+1}^{T} \sigma_j \quad \Rightarrow \quad  \|r(\lambda)\|_2^2 \leq R_r \leq T \sqrt{2\sum_{j = k+1}^{T} \sigma_j}.
\end{align*}
\end{proof}
In the following lemma, we prove that the PCA reconstruction error is small due to the exponential decay of the spectrum of the Hankel covariance matrix $H$.
\begin{lemm} \normalfont{\textbf{(Uniform bound on reconstruction error)}} \label{lemma:bounding_mu_approximation_error} \textit{Under the assumptions of Lemma \ref{lemma:max_reconstruction_bound} and for any $T \geq 10$}
\begin{align*}
    \|\mu(\lambda) - \widetilde{\mu}(\lambda)\|_2^2 \leq C_0 T \sqrt{\log T} c^{-k/\log T},
\end{align*}
\textit{where $c = \exp(\pi^2/8)$ and $C_0 = 43$. }
\end{lemm}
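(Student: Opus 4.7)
The plan is to chain two ingredients: Lemma \ref{lemma:max_reconstruction_bound}, which already bounds $\|\mu(\lambda)-\widetilde{\mu}(\lambda)\|_2^2$ uniformly by $T\sqrt{2\sum_{j=k+1}^T\sigma_j}$, and the exponential decay of singular values of a positive semidefinite Hankel matrix due to Beckermann and Townsend. So the whole task reduces to establishing a sharp upper bound on the tail sum $\sum_{j=k+1}^T \sigma_j$ of eigenvalues of the Hankel matrix $H$ with entries $H_{ij}=\frac{1+(-1)^{i+j}}{2(i+j-1)}$, and then plugging back into Lemma \ref{lemma:max_reconstruction_bound}.

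First I would recall the Beckermann--Townsend theorem in the form: for any real symmetric positive semidefinite Hankel matrix of size $T$,
\begin{equation*}
\sigma_{j+k}(H) \leq 16 \, \rho^{-2(k-1)} \, \sigma_j(H), \qquad \rho \;=\; \exp\!\Bigl(\tfrac{\pi^2}{4\log(8\lfloor T/2\rfloor/\pi)}\Bigr).
\end{equation*}
Setting $j=1$, I get $\sigma_{k+1}(H) \leq 16\,\rho^{-2k+2}\,\sigma_1(H)$. For $T\geq 10$, a short monotonicity check shows $4\log(8\lfloor T/2\rfloor/\pi) \leq 2\log T$ (with some room to spare), which yields $\rho \geq \exp(\pi^2/(2\log T))$, so $\rho^{-2}\leq \exp(-\pi^2/\log T) = c^{-2/\log T}$ with $c=\exp(\pi^2/8)$ (the factor of $2$ vs.\ $8$ gets absorbed into the square root in the final step; I will track this carefully).

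Next I would bound $\sigma_1(H)$. A direct estimate using $|H_{ij}|\leq 1/(i+j-1)$ and the Hilbert-matrix style bound (or simply $\sigma_1(H) \leq \operatorname{tr}(H) = \int_{-1}^{1}\tfrac12\|\mu(\lambda)\|_2^2\,d\lambda \leq T$) gives $\sigma_1(H)\leq C$ for an absolute constant (actually a much sharper $\pi$-style bound is available, but $\sigma_1(H) \leq \pi$ or similar is enough). Then I would sum the geometric tail
\begin{equation*}
\sum_{j=k+1}^T \sigma_j \;\leq\; 16\,\sigma_1(H) \sum_{j=0}^{\infty} \rho^{-2(k+j)} \;\leq\; \frac{16\,\sigma_1(H)}{1-\rho^{-2}}\,\rho^{-2k}.
\end{equation*}
Since $\rho^{-2}\leq c^{-2/\log T}$ and $1-c^{-2/\log T} \gtrsim 1/\log T$ for $T\geq 10$ (by $1-e^{-x}\geq x/2$ for small $x$), the prefactor contributes an extra $\log T$, so $\sum_{j=k+1}^T\sigma_j \lesssim \log(T)\, c^{-2k/\log T}$ up to absolute constants.

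Finally I would substitute into Lemma \ref{lemma:max_reconstruction_bound}:
\begin{equation*}
\|\mu(\lambda)-\widetilde{\mu}(\lambda)\|_2^2 \;\leq\; T\sqrt{2\sum_{j=k+1}^T \sigma_j} \;\leq\; T\sqrt{2}\cdot \sqrt{(\text{const})\,\log(T)}\cdot c^{-k/\log T},
\end{equation*}
and the square root of $c^{-2k/\log T}$ is exactly $c^{-k/\log T}$ with the target $c=\exp(\pi^2/8)$. Consolidating the absolute constants yields $C_0=43$ as claimed.

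The routine parts are clear, and the only delicate step is getting the constant $c=\exp(\pi^2/8)$ and $C_0=43$ exactly right: this requires carefully tracking (i) the relation between $4\log(8\lfloor T/2\rfloor/\pi)$ and $\log T$ for $T\geq 10$ (so the exponent in $\rho$ maps cleanly to $\pi^2/8$ after a square root), and (ii) the geometric-sum prefactor $1/(1-\rho^{-2})$ which contributes the $\sqrt{\log T}$ and must be absorbed into $C_0$. That bookkeeping---rather than any conceptual issue---is the main obstacle.
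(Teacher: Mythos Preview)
Your approach is the paper's: invoke Beckermann--Townsend on the positive semidefinite Hankel matrix $H$ to get exponential eigenvalue decay, sum the tail, and feed it into Lemma~\ref{lemma:max_reconstruction_bound}. Two concrete corrections to the bookkeeping you already flag as delicate:

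\begin{itemize}
\item The inequality $4\log(8\lfloor T/2\rfloor/\pi)\leq 2\log T$ is false for every $T\geq 10$ (at $T=10$ the left side is $\approx 10.2$, the right $\approx 4.6$). What \emph{is} true is $4\log(8\lfloor T/2\rfloor/\pi)\leq 8\log T$ for $T\geq 2$, and this gives $\rho\geq \exp(\pi^2/(8\log T))=c^{1/\log T}$ with $c=e^{\pi^2/8}$ directly---no factor-of-two juggling is needed.
\item The Beckermann--Townsend bound has $\sigma_{j+2k}$, not $\sigma_{j+k}$, on the left (this is the form the paper quotes as Corollary~5.4). With the correct indexing $\sigma_{1+2k}\leq 16\,\rho^{-2k+2}\sigma_1$, one obtains $\sigma_j\lesssim \sigma_1\,c^{-2j/\log T}$ (paper's route, replacing $\rho$ by $c^{2/\log T}$) or $\sigma_j\lesssim \sigma_1\,c^{-j/\log T}$ (your route, replacing $\rho$ by $c^{1/\log T}$); either way, after summing the geometric tail and taking the square root, the exponent lands on $c^{-k/\log T}$.
\end{itemize}

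One place where your accounting is actually cleaner than the paper's: you bound $\sigma_1(H)$ by an absolute constant (indeed $H=\tfrac12(\mathcal{H}+D\mathcal{H}D)$ with $\mathcal{H}$ the Hilbert matrix and $D=\mathrm{diag}((-1)^i)$, so $\sigma_1(H)\leq \|\mathcal{H}\|\leq\pi$) and correctly extract the $\sqrt{\log T}$ from the geometric-sum prefactor $1/(1-\rho^{-2})\asymp \log T$. The paper instead takes $\sigma_1\leq\operatorname{tr}(H)\leq \log T$ and treats the geometric denominator as the constant $c^2-1$; both routes arrive at $T\sqrt{\log T}\,c^{-k/\log T}$, but your identification of where the $\log T$ originates is the rigorous one.
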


\begin{proof} We appeal to the following, which appears as Corollary 5.4 in \citet{beckermann2017singular}.
\begin{lemm}\label{lemma:hankel_matrix_singular_value_decay}
\textit{Let $H_n \in \mathbb{R}^{n \times n}$ be a positive semi-definite Hankel matrix. Then, }
\begin{align}\label{eq:hankel_singular_value_upperbound}
    \sigma_{j+2k} \leq 16 \Big[\exp \big( \frac{\pi^2 }{4 \log (8 \lfloor n/2 \rfloor/\pi)}\big) \Big]^{-2k+2} \sigma_j(H_n), \quad \text{for} \quad 1 \leq j+2k \leq n.
\end{align}
\end{lemm}
Setting $j = 1$ in \eqref{eq:hankel_singular_value_upperbound} with the assumption $T \geq 10$ yields
\begin{align*}
    \sigma_{2+2k} \leq \sigma_{1+2k} \leq 16 \sigma_1 \exp \Big(\frac{\pi^2}{4 \log T} \Big)^{-2k+2} \leq 1168 \sigma_1 \exp \Big(\frac{\pi^2}{4 \log T} \Big)^{-2k}.
\end{align*}
Let $c = \exp(\pi^2/8)$. It follows that
\begin{align*}
    \sigma_j \leq 1168 \sigma_1 c^{\frac{-2(j-2)}{\log T}} \leq 10512 \sigma_1 c^{\frac{-2j}{\log T}}.
\end{align*}
The largest singular value of Hankel matrix $H$ is bounded by
\begin{align*}
   \sigma_1 \leq \text{tr}(H) \leq \sum_{k=1}^{T} \frac{1}{2k+1} \leq \sum_{k=1}^T \frac{1}{k} - 1 \leq \log T,
\end{align*}
where the last inequality is due to a classic bound on the $T$-th harmonic number. We conclude from Lemma \ref{lemma:max_reconstruction_bound} that 
\begin{align*}
    \|\mu(\lambda) - \widetilde{\mu}(\lambda)\|_2^2 & \leq T \sqrt{21024  \sigma_1 \sum_{j=k+1}^T c^{-2j/\log T}} \\
    & \leq T \sqrt{21024 \log T \frac{c^{-2k/\log T}}{c^2-1}} \leq 43 T \sqrt{\log T} c^{-k/\log T}.
\end{align*}
\end{proof}

\subsection{Generalized Kolmogorov width analysis: Proof of Theorem \ref{thm:kwidthresult}}\label{app:proof_kwidth_theorem}
\begin{proofkwidth}
We first prove the second claim. Let $\lambda_1, \dots \lambda_d \in [-1,1]$ denote the eigenvalues of $G$. Let $v_i$ be the right eigenvectors of $G$ and $w^\top_i$ be the left eigenvectors of $G$. Eigendecomposition of $G^t$ implies $G^t = \sum_{i=1}^d v_i w_i^\top \lambda_i$. Therefore, matrix $\mu(G) = [I, G, \dots, G^{T-1}]$ can be written as
\begin{align*}
    \mu(G) = \sum_{i=1}^d v_i w^\top_i ([1, \lambda_i, \dots, \lambda_i^{T-1}]\otimes I_d) = \sum_{i=1}^d v_i w^\top_i (\mu(\lambda_i) \otimes I_d),
\end{align*}
where $\mu(\lambda_i) = [1, \lambda_i, \dots, \lambda_i^{T-1}]$ is a row vector. We approximate $\mu(\lambda)$ for any $\lambda \in [-1,1]$ using principal component analysis (PCA). The covariance matrix of $\mu(\lambda)$ with respect to a uniform measure is given by 
\begin{align}\notag 
    H = \int_{\lambda = -1}^1 \frac{1}{2} \mu(\lambda)^\top\mu(\lambda)  d\lambda \quad \Rightarrow \quad H_{ij} = \int_{-1}^1 \frac{1}{2} \lambda^{i-1} \lambda^{j-1} d \lambda = \frac{(-1)^{i+j}+1}{2(i+j-1)}.
\end{align}
Let $\{\phi_j\}_{j=1}^k$ be the top $k$ eigenvectors of $H$. We approximate $\mu(\lambda)$ by $\widetilde{\mu}(\lambda) = \sum_{j=1}^k \langle \mu^\top(\lambda), \phi_j \rangle \phi_j^\top$: 
\begin{align*}
    \mu(G) \approx \widetilde{\mu}(G) & = \sum_{i=1}^d v_i w_i^\top (\sum_{j=1}^k \langle \mu^\top(\lambda), \phi_j \rangle \phi_j^\top \otimes I_d)\\
    & = \sum_{j=1}^k \Big[\sum_{i=1}^d \langle \mu^\top(\lambda_i), \phi_j \rangle v_i w^\top_i \Big] (\phi^\top_j \otimes I_d) = \sum_{j=1}^k a_j u_j.
\end{align*}
Check that $a_1, \dots, a_k \in \mathbb{R}^{d \times d}$ and $u_1, \dots, u_k \in \mathbb{R}^{d \times dT}$. We have
\begin{align*}
    d_k(W) = \|\mu(G) - \widetilde{\mu}(G)\|_2 & = \|\sum_{i=1}^d v_i w_i^\top (\mu(\lambda_i) - \widetilde{\mu}(\lambda_i)) \otimes I_d\|_2 \\
    & \leq \sum_{i=1}^d \|\mu(\lambda_i) - \widetilde{\mu}(\lambda_i)\|_2 \\
    & \leq d \sup_{\lambda} \|\mu(\lambda) - \widetilde{\mu}(\lambda)\|_2.
\end{align*}
The first inequality uses subadditive and submultiplicative properties of norm and that $\|v_iw_i^\top\|_2 \leq 1, \|I_d\|_2 = 1$. By Lemma \ref{lemma:bounding_mu_approximation_error},
\begin{align*}
    d_k(W) \leq d \sqrt{43T} (\log T)^{1/4} \Big({\exp(\pi^2/16)}\Big)^{-k/\log T}.
\end{align*}

Now we prove the first claim by showing that the lower bound is realized for a particular set $W$. Since the case of $d = 2$ can be embedded as a subset for general $d\geq 2$ as the left top block, it suffices to show it for $d = 2$. We further constrain the set $W$ and only consider those $G$ with representation
\begin{align}\notag 
    G = \begin{bmatrix}a & b\\ -b& a \end{bmatrix},
\end{align}
where $a,b\in \mathbb{R}$. The eigenvalues of this matrix are complex numbers $a-jb$ and $a+jb$, which satisfy $\rho(G)\leq 1$ if $a^2 + b^2 \leq 1$, where $\rho(G)$ is the spectral radius of $G$. The nice property of this type of matrices is that there exists an explicit expression of $G^i$ for integer $i\geq 2$. Define complex number $z = a+jb$, then for integer $i\geq 0$: 
\begin{align}\notag 
    G^i = \begin{bmatrix} \Re(z^i) & \Im(z^i) \\ -\Im(z^i) & \Re(z^i)  \end{bmatrix},
\end{align}
where $\Re(z)$ represents the real part of complex number $z$, and $\Im(z)$ represents the imaginary part of $z$. 

We want to approximate $\mu(G) \in \mathbb{R}^{2 \times 2T}$ by $\sum_{i=1}^k a_i u_i$, where $a_i \in \mathbb{R}^{2 \times 2}$ and $u_i \in \mathbb{R}^{2 \times 2T}$. Let $W_1$ be the subset of row vectors realized by the first row of $\mu(G)$ for all $G \in \mathbb{R}^{2 \times 2}$ with $\rho(G) \leq 1$. We use the following property: the 2-norm of a matrix is lower bounded by the 2-norm of one of its rows. Based on this property, the 2-norm of error in approximating $\mu(G)$ is lower bounded by the 2-norm of error in approximating only one row of $\mu(G)$. Therefore, the generalized $k$-width of approximating $\mu(G)$ in 2-norm is lower bounded by the error of approximating the first row of $\mu(G)$ by a linear combination of $2k$ row vectors with dimension $2T$. In other words,
\begin{align}\label{eq:width_lower_bound_first_row}
     d_{2k}(W_1) \leq d_k(W).
\end{align}
To see this, denote by $u_i(1), u_i(2) \in \mathbb{R}^{2T}$ the first and second row of matrix $u_i$, respectively. The first row of $\mu(G)$ can be written as $\sum_{i=1}^k a_i(1,1) u_i(1) + a_i(1,2) u_i(2)$, a linear comibination of $2k$ row vectors, where $a_i(1,1), a_i(1,2)$ are the elements of the first row of matrix $a_i$.

To lower bound the generalized Kolmogorov width of the constrained set $W_1$, we consider a relaxed \emph{weighted} version of the width. Precisely, let $p$ be a probability measure on the set $W_1$, then the \emph{weighted squared deviation} of $W_1$ from $U$ under weight $p$ is defined as

\begin{align}\label{eq:width_lower_bound_by_weighted} 
    d_{2k}^2(W_1;p) \triangleq \inf_{U \in \mathcal{U}_{2k}} \mathbb{E}_{x\sim p} \inf_{y\in U}\|x-y\|^2 \leq \inf_{U \in \mathcal{U}_{2k}} \sup_{x \in W_1} \inf_{y\in U}\|x-y\|^2 = d^2_{2k}(W_1). 
\end{align}
We observe that $d_{2k}^2(W_1;p)$ in general can be computed using spectral methods. Indeed, for the subset $U$, the $y$ that achieves $\inf_{y\in U} \|x-y\|^2$ can be computed via a projection matrix $\hat{y} = U_{2k}U_{2k}^\top x$, where $U_{2k}$ consists of $2k$ columns of orthonormal vectors. We now have 
\begin{align}\notag 
    \mathbb{E}_{x\sim p} \inf_{y\in Q} \|x-y\|^2 & = \mathbb{E}_{x\sim p} \|x-U_{2k} U_{2k}^\top x\|^2 \\ \notag 
    & = \mathbb{E}_{x\sim p}[x^\top x - x^\top U_{2k} U_{2k}^\top x]    \\ \notag 
    & = \text{tr}((I- U_{2k} U_{2k}^\top)\mathbb{E}_{x\sim p}[xx^\top]).
\end{align}
The minimizer $U_{2k}$ of $\text{tr}((I- U_{2k} U_{2k}^\top)\mathbb{E}_{x\sim p}[xx^\top])$ is the same as the maximizer of $\text{tr}(U_{2k} U_{2k}^\top\mathbb{E}_{x\sim p}[xx^\top])$, which is given by the first $2k$ eigenvectors of $\mathbb{E}_{x\sim p}[xx^\top]$, and the value of the weighted squared generalized $k$-width is given by the sum of all eigenvalues of $\mathbb{E}_{x\sim p}[xx^\top]$ except for the first largest $2k$ eigenvalues (Lemma \ref{lemma:average_pca_error}).

We compute the weighted squared generalized $k$-width of the constrained set $W_1$, and it would serve as a lower bound of the squared generalized $k$-width. We choose the probability measure of $(a,b)^\top\in \mathbb{R}^2$ as the uniform measure on the unit circle. We compute the matrix $\mathbb{E}_{x\sim p}[xx^\top]$, which is $\mathbb{E}[\mu_1(G)^\top \mu_1(G)]$, where $\mu_1(G)$ is the first row of $\mu(G)$. Concretely, we write $\mu_1(G) = [\nu_0;\nu_1;\ldots;\nu_{T-1}]$ for $\nu_l \in \mathbb{R}^2$ and equal to
\begin{align*}
    \nu_l = [\Re(z^l), \Im(z^l)],
\end{align*}
where $z = a+jb$ and for all $l \in \{0, 1, \dots, T-1\}$. 

We claim that $\mathbb{E}[\nu_l \nu_m^\top] =0$ whenever $l \neq m$. Indeed, when $l\neq m$, each of the $4$ entries of matrix $\mathbb{E}[\nu_l \nu_m^\top]$ are of the form either $\Re(z^l)\Re(z^m), \Im(z^l)\Im(z^m)$, or $\Re(z^l)\Im(z^m)$ for some $l\neq m$. For the complex number $z = re^{j\theta}$, we know $z^l = r^l e^{jl\theta}$ which implies that $\Re(z^l) = r^l \cos(l\theta)$ and $\Im(z^l) = r^l \sin(l\theta)$. We now compute $\mathbb{E}[r^l \cos(l\theta) r^m \sin(m\theta)]$ for $l\neq m, l\geq 0, m\geq 0$ and other cases can be computed analogously. Since we are considering a uniform distribution on the unit circle, $r \equiv 1$. We have
\begin{align}\notag 
    & \int_{\theta\in [0,2\pi]}  \cos(k\theta) \sin(m\theta) \frac{1}{2\pi} d\theta \nonumber \\ \notag 
    & = \frac{1}{2\pi} \int_{0}^{2\pi} \frac{1}{2}(\sin((k+m)\theta) + \sin((k-m)\theta))d\theta \\ \notag 
    & = 0. 
\end{align}
Hence, it suffices to only compute $\E[\nu_l \nu_l^\top]$ 
\begin{align}
    \mathbb{E}[\nu_l \nu_l^\top] = \frac{1}{2} \begin{bmatrix} 1 & 0 \\ 0 & 1\end{bmatrix}.
\end{align}
Therefore, $\E[\mu_1(G)^\top \mu_1(G)] = 0.5 I_{2T}$. Using Lemma \ref{lemma:average_pca_error} $d^2_{2k}(W_1;p = \mathcal{U})$ is equal to the sum of bottom $2T- 2k$ eigenvalues: $d_{2k}^2(W_1;p = \mathcal{U}) = (2T-2k)/2 = T-k$. By \eqref{eq:width_lower_bound_first_row} and \eqref{eq:width_lower_bound_by_weighted}
\begin{align*}
    d_{k}(W) \geq d_{2k}(W_1) \geq d_{2k}(W_1;p= \mathcal{U}) = \sqrt{T - k}.
\end{align*}
\end{proofkwidth}

\newpage 

\section{Convex relaxation analysis: Proof of Theorem \ref{thm:convex_relaxation_informal}}\label{sec:convex_relaxation_bias}
Recall matrix $\widetilde{\Theta}$ defined in \eqref{eq:pca_parameters}. The following theorem is a formal restatement of Theorem \ref{thm:convex_relaxation_informal} that analyzes the approximation error due to convex relaxation. 
\begin{customthm}{\ref{thm:convex_relaxation_informal}}\label{thm:convex_relaxation_bound}  \normalfont{\textbf{(Convex relaxation error bound)}} 
\textit{Denote by $m_t$, the one-step-ahead predictions made by the best linear predictor (Kalman filter) for system (4). Let $R_P = \max\{\|B\|_2, \|C\|_2, \|D\|_2 \}$, $R_C = \max \{\|P\|_2, \|Q\|_2, \|R\|_2, \|K\|_2\}$, and $\|x_t\|_2 \leq R_x$. Suppose that $\|A^t\|_2 \leq \gamma t^{\log (\gamma)}$ for a bounded constant $\gamma \geq 1$. Let $C_0 = 43, C_1 = 520$. For any $\epsilon, \delta > 0$, if the number of filters $k$ satisfies
\begin{align*}
    k \geq \frac{\pi^2}{8} \log(T) \log \Big(\frac{12C_0d^2 (1+R_P^2)^3(2R_x^2 +R_C)(1+R_C^2) (1+\gamma)^4 (mT +\log(1/\delta)) T^{3+2\log(\gamma)}}{\epsilon} \Big),
\end{align*}
then the following holds for $\widetilde{\Theta}$
\begin{align}\label{eq:convex_relaxation_assumption_on_k}
    \Prob\Big[ \|\widetilde{\Theta} f_t - m_t\|_2^2 \geq \epsilon \Big] \leq \delta.
\end{align}}
\end{customthm}
\begin{proof}
Denote by $G = U \Lambda U^{-1}$ the eigendecomposition of matrix $G$, where $\Lambda = \mathsf{diag} (\lambda_1, \dots \lambda_d)$ are eigenvalues of $G$. Let $v_l$ be the columns of $U$ and $w^\top _l$ be rows of $U^{-1}$. Write 
\begin{align*}
    m_t = & \sum_{i=1}^{t-1} C G^{t-i-1} K y_i + \sum_{i=1}^{t-1} C G^{t-i-1} (B-KD) x_i + D x_t\\
    = & \sum_{i=1}^{t-1} C U \Lambda^{t-i-1} U^{-1} K y_i + \sum_{i=1}^{t-1} C U \Lambda^{t-i-1} U^{-1} (B-KD) x_i + D x_t\\
    = & \sum_{i=1}^{t-1} CU \Big[ \sum_{l=1}^d (\lambda_l^{t-i-1})e_l \otimes e_l \Big]  U^{-1}K y_i + \sum_{i=1}^{t-1} CU \Big[ \sum_{l=1}^d (\lambda_l^{t-i-1})e_l \otimes e_l \Big]U^{-1} (B-KD) x_i+ D x_t\\
     = & \sum_{l=1}^d CU e_l \otimes e_l U^{-1}K \sum_{i=1}^{t-1} \lambda_l^{t-i-1} y_i + \sum_{l=1}^d  CU  e_l \otimes e_lU^{-1} (B-KD) \sum_{i=1}^{t-1} \lambda_l^{t-i-1} x_i+ D x_t\\
     = & \sum_{l=1}^d C v_l w^\top _l K \sum_{i=1}^{t-1} \lambda_l^{t-i-1} y_i + \sum_{l=1}^d   Cv_l w^\top _l (B-KD) \sum_{i=1}^{t-1} \lambda_l^{t-i-1} x_i + D x_t.
\end{align*}
Let $Y_t = [y_1, \dots, y_{t}] \in \mathbb{R}^{m \times t}$ and $X_t = [x_1, \dots, x_t] \in \mathbb{R}^{n \times t}$. We can write $m_t$ and $\widetilde{m}_t$ as
\begin{align*}
m_t = & \sum_{l=1}^d C v_l w^\top _l K Y_{t-1}  \mu_{t-1:1}(\lambda_l)
    + \sum_{l=1}^d   Cv_l w^\top _l (B-KD) X_{t-1}  \mu_{t-1:1}(\lambda_l) + Dx_t,\\
\widetilde{m}_t = & \sum_{l=1}^d C v_l w^\top _l K Y_{t-1}  \widetilde{\mu}_{t-1:1}(\lambda_l)
    + \sum_{l=1}^d   Cv_l w^\top _l (B-KD) X_{t-1}  \widetilde{\mu}_{t-1:1}(\lambda_l) + Dx_t.
\end{align*}
We write $b_t = m_t - \widetilde{m}_t$ using the PCA reconstruction error $r_t = \mu_t - \widetilde{\mu}_t$
\begin{align*}
    b_t = m_t - \widetilde{m}_t = \sum_{i=1}^d Cv_i w_i^\top  K Y_{t-1} r_{t-1:1}(\lambda_i) + Cv_i w_i^\top  (B-KD) X_{t-1} r_{t-1:1}(\lambda_i).
\end{align*}
The Euclidean norm of bias is bounded by
\begin{align*}
    \|b_t\|_2 & \leq \Big( \sum_{i=1}^d \|C\|_2 \|v_i w_i^\top\|_2 \|K\|_2 \|Y_{t-1}\|_2 + \|C\|_2 \|v_i w_i^\top\|_2 (\|B\|_2 + \|K\|_2 \|D\|_2) \|X_{t-1}\|_2 \Big) \sup_{\lambda} \|r(\lambda)\|_2\\
    & \leq \Big( d R_P R_C \|Y_{t-1}\|_2  + d R_P^2(1+R_C) \|X_{t-1}\|_2 \Big)\sup\limits_{\lambda} \|r(\lambda)\|_2\\
    & \leq \Big( d R_P R_C \|Y_{t-1}\|_2 +  d R_P^2(1+R_C) \sqrt{t}R_x \Big) \Big(C_0 T \sqrt{\log T} c^{-k/\log T}\Big)^{1/2}
\end{align*}
The first inequality uses simple properties use as sub-multiplicative and sub-additive properties of norm. The second inequality uses the upper bound assumptions on parameters. The third inequality is due to \ref{lemma:bounding_mu_approximation_error} where $c = \exp(\pi^2/8)$ and $C_0 = 43$. The squared approximation error is given by 
\begin{align*}
    \|b_t\|^2_2 \leq 2d^2(1+R_C^2)(1+R^2_P)^2 \Big(\|Y_{t-1}\|^2_2 + t R^2_x \Big) C_0 T \sqrt{\log T} c^{-k/\log T}
\end{align*}
Observe that $\|Y_{1:t}\|_2^2 \leq \|Y_{1:t}\|_F^2 = \|y_{1:t}\|_2^2$. By \eqref{ineq:bound_inequality_observation_norm}, the following holds with probability greater than $1-\delta$
\begin{align*}
    & \|b_t\|_2^2
    \leq 12d^2 (1+R_P^2)^3(2R_x^2 +R_C)(1+R_C^2) (1+\gamma)^4 (mT +\log(1/\delta)) T^{3+2\log(\gamma)}c^{-k/\log T}
\end{align*}
We finish the proof by setting the number of filters $k$ such that the error is smaller than $\epsilon$, i.e.
\begin{align*}
    k \geq \frac{\log T}{\log c} \log \Big(\frac{12C_0d^2 (1+R_P^2)^3(2R_x^2 +R_C)(1+R_C^2) (1+\gamma)^4 (mT+\log(1/\delta)) T^{3+2\log(\gamma)}}{\epsilon} \Big).
\end{align*}
\end{proof}
Informally, the above theorem states that choosing $k \asymp_M \log(T) \log(T/\epsilon)$ is sufficient to ensure an approximation error smaller than $\epsilon$.

\section{Regret decomposition}\label{app:regret_decomposition}
Recall the definitions of innovation $e_t$ and model bias $b_t$
\begin{align}\label{eq:innovation_bias_definitions}
    e_t = y_t - \E[y_t|y_{1:t-1},x_{1:t}] = y_t - m_t  \qquad \text{and} \qquad b_t = \widetilde{\Theta}f_t - m_t = \widetilde{m}_t - m_t,
\end{align}
where $m_t$ is the predictions made by the Kalman filter in hindsight and $\widetilde{\Theta}$ is defined in \eqref{eq:pca_parameters}.
Let $\hat{m}_t$ be the predictions made by the algorithm. Regret can be written as 
\begin{align}\notag 
    \text{Regret}(T) & = \sum_{t=1}^T \|y_t - \hat{m}_t\|_2^2 - \|y_t - m_t\|_2^2 \\ \notag 
    & =  \sum_{t=1}^T \|m_t + e_t - \hat{m}_t\|_2^2- \|e_t\|_2^2\\ \notag 
    & = \sum_{t=1}^T \|\hat{m}_t - m_t\|_2^2 - \sum_{t=1}^T 2e_t^\top (\hat{m}_t - m_t)\\ \notag 
    & = \cL(T) - \sum_{t=1}^T 2e_t^\top (\hat{m}_t - m_t),
\end{align}
where $\cL(T)$ is the squared error between the Kalman filter predictions and algorithm predictions defined in \eqref{eq:risk_definition_L_t}. Recall the following notation
\begin{align*}
    Z_t \triangleq \alpha I + \sum_{i=1}^t f_if_i^\top, \qquad E_t \triangleq \sum_{i=1}^t e_i f_i^\top, \qquad  B_t \triangleq \sum_{i=1}^{t} b_i f_i^\top.
\end{align*}
The error between the predictions made by our algorithm and Kalman filter can be written as  
\begin{align*}
    \hat{m}_t - m_t =  \hat{\Theta}^{(t)}f_t - \widetilde{\Theta}f_t + b_t = \Big( \sum_{i=1}^{t-1} y_i f_i^\top  \Big)Z_{t-1}^{-1} f_t - \widetilde{\Theta}f_t + b_t,
\end{align*}
The second equation uses the update rule of $\hat{\Theta}^{(t)}$ given in \eqref{eq:theta_update_rule}. Simple algebraic manipulations give
\begin{align*}
    & \hspace{5mm} \Big( \sum_{i=1}^{t-1} y_i f_i^\top  \Big)Z_{t-1}^{-1} f_t - \widetilde{\Theta}f_t + b_t\\
    & = \Big( \sum_{i=1}^{t-1} [\widetilde{\Theta}f_i + b_i + e_i] f_i^\top  \Big)Z_{t-1}^{-1} f_t - \widetilde{\Theta}f_t + b_t\\
    & = \Big( \sum_{i=1}^{t-1} [\widetilde{\Theta}f_if_i^\top  + b_if_i^\top  + e_if_i^\top ]  \Big)Z_{t-1}^{-1} f_t - \widetilde{\Theta}f_t + b_t\\
    & = \Big( \sum_{i=1}^{t-1} [\widetilde{\Theta}(f_if_i^\top  + \frac{\alpha}{t-1} I - \frac{\alpha}{t-1} I) + b_if_i^\top  + e_if_i^\top ]  \Big)Z_{t-1}^{-1} f_t - \widetilde{\Theta}f_t + b_t\\
    & = \widetilde{\Theta} \Big( \alpha I+\sum_{i=1}^{t-1} f_if_i^\top \Big) Z_{t-1}^{-1} f_t
    - \alpha \widetilde{\Theta} Z_{t-1}^{-1} f_t + \Big(\sum_{i=1}^{t-1} b_i f_i^\top  \Big)Z_{t-1}^{-1} f_t + \Big(\sum_{i=1}^{t-1} e_i f_i^\top  \Big)Z_{t-1}^{-1} f_t - \widetilde{\Theta}f_t + b_t\\
    & = \widetilde{\Theta} Z_{t-1} Z_{t-1}^{-1} f_t
    - \alpha \widetilde{\Theta} Z_{t-1}^{-1} f_t + B_{t-1} Z_{t-1}^{-1} f_t + E_{t-1} Z_{t-1}^{-1} f_t - \widetilde{\Theta}f_t + b_t\\
    &  = E_{t-1}Z_{t-1}^{-1} f_t
    + B_{t-1}Z_{t-1}^{-1} f_t + b_t - \alpha \widetilde{\Theta} Z_{t-1}^{-1} f_t.
\end{align*}
We apply the RMS-AM inequality to obtain an upper bound on $\cL(T)$
\begin{align*}
    \cL(T) & = \sum_{t=1}^T \|\hat{m}_t - m_t\|_2^2\\
    & = \sum_{t=1}^T \|E_{t-1}Z_{t-1}^{-1} f_t
    + B_{t-1}Z_{t-1}^{-1} f_t + b_t - \alpha \widetilde{\Theta} Z_{t-1}^{-1} f_t\|_2^2\\
    & \leq \sum_{t=1}^T 3\|E_{t-1}Z_{t-1}^{-1} f_t\|_2^2 + 3 \|B_{t-1}Z_{t-1}^{-1} f_t + b_t\|_2^2 + 3 \|\alpha \widetilde{\Theta} Z_{t-1}^{-1} f_t\|_2^2.
\end{align*}
Regret can thus be decomposed to the following terms
\begin{alignat*}{2}
    \text{Regret}(T) & \leq \sum_{t=1}^T 3\|E_{t-1}Z_{t-1}^{-1} f_t\|_2^2  \qquad && \text{(least squares error)}\\
    & + \sum_{t=1}^T 3 \|B_{t-1}Z_{t-1}^{-1} f_t + b_t\|_2^2 \qquad && \text{(improper learning bias)} \\
    & + \sum_{t=1}^T 3 \|\alpha \widetilde{\Theta} Z_{t-1}^{-1} f_t\|_2^2 \qquad && \text{(regularization error)}\\
    & - \sum_{t=1}^T 2e_t^\top (\hat{m}_t - m_t) \qquad && \text{(innovation error)}
\end{alignat*}
We bound each of the first three terms by extracting a $\|Z_{t-1}^{-1/2} f_t\|_2^2$, i.e. we write
\begin{align*}
    \|E_{t-1}Z_{t-1}^{-1} f_t\|_2^2 & \leq \sup_{1 \leq t \leq T} \|E_{t-1} Z_{t-1}^{-1/2}\|_2^2  \sum_{t=1}^T \|Z_{t-1}^{-1/2} f_t\|_2^2, \\
    \|B_{t-1}Z_{t-1}^{-1} f_t + b_t\|_2^2 & \leq  \sup_{1 \leq t \leq T} \|B_{t-1} Z_{t-1}^{-1/2}\|_2^2 \sum_{t=1}^T  \|Z_{t-1}^{-1/2} f_t\|_2^2+ \sum_{t=1}^T \|b_t\|_2^2,\\
    \|\alpha \widetilde{\Theta} Z_{t-1}^{-1} f_t\|_2^2 & \leq  \sup_{1 \leq t \leq T} \|\alpha \widetilde{\Theta} Z_{t-1}^{-1/2}\|_2^2  \sum_{t=1}^T \|Z_{t-1}^{-1/2} f_t\|_2^2.
\end{align*}

In subsequent sections, we compute a high probability upper bound on $\sum_{t=1}^T \|Z_{t-1}^{-1/2} f_t\|_2^2$ as well as the specific terms in the above decomposition that affect least squares error, improper learning bias, regularization error, and innovation error, proving that regret is bounded by $\polylog(T)$.

\section{Regret analysis}\label{app:regret_analysis}

\subsection{High probability bound on $\det(Z_t)$}\label{app:det_z_t_bound}
We start by deriving an upper bound on $\log (\det(Z_t))$ as this quantity appears multiple times when analyzing regret. The following lemma provides a high probability bound on $\det(Z_t)$ for features defined in \eqref{eq:feature_def}.
\begin{lemm}\label{lemma:high_probability_upper_bound_on_det_Z_t} \normalfont{\textbf{(High probability upper bounds on $\mathbf{\text{det}(Z_t))}$}} \textit{Assume as in Lemma \ref{lemma:observation_norm_high_prob_bound} and let $Z_t = \alpha I + \sum_{i=1}^t f_t f_t^\top$. Then, for any $\delta \geq 0$}
\begin{align*}
    \Prob \Big(\log (\det(Z_t)) \geq l \log \Big[ \alpha^2 + 8k (R_P^2+1)(R_x^2+R_C) (1+\gamma)^4 (mt+\log \Big( \frac{1}{\delta}\Big))) t^{3+2\log (\gamma)} \Big] \Big) \leq \delta.
\end{align*}
\end{lemm}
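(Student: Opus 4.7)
The approach will be the classical AM--GM inequality on the spectrum of $Z_t$, combined with the high-probability bound on $\|y_{1:t}\|_2^2$ from Lemma~\ref{lemma:observation_norm_high_prob_bound}. Specifically, since $Z_t$ is a positive definite $l\times l$ matrix, the arithmetic-geometric mean inequality applied to its eigenvalues yields $\det(Z_t)\leq (\operatorname{tr}(Z_t)/l)^l$, so
\begin{align*}
\log\det(Z_t) \;\leq\; l \log\!\Bigl(\tfrac{1}{l}\operatorname{tr}(Z_t)\Bigr)\;=\;l\log\!\Bigl(\alpha + \tfrac{1}{l}\sum_{i=1}^{t}\|f_i\|_2^2\Bigr).
\end{align*}
So the whole task reduces to a deterministic trace bound on $\sum_i \|f_i\|_2^2$ together with one invocation of the concentration bound on $\|y_{1:t}\|_2$.

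For the trace bound, the plan is to use the fact that the spectral filters $\phi_1,\dots,\phi_k$ are orthonormal eigenvectors of $H$. Writing the feature vector as
$f_t = [\widetilde{y}_{t-1};\,\widetilde{x}_{t-1};\,x_t]$ and using the Kronecker identity $\|(\phi_j^\top(t{-}1{:}1)\otimes I_m)\,y_{1:t-1}\|_2 \leq \|\phi_j(t{-}1{:}1)\|_2\cdot\|y_{1:t-1}\|_2 \leq \|\phi_j\|_2\cdot\|y_{1:t-1}\|_2 = \|y_{1:t-1}\|_2$, and summing the $k$ blocks, I get
\begin{align*}
\|f_t\|_2^2 \;\leq\; k\,\|y_{1:t-1}\|_2^2 + k\,\|x_{1:t-1}\|_2^2 + \|x_t\|_2^2 \;\leq\; k\,\|y_{1:t-1}\|_2^2 + (k(t{-}1)+1)R_x^2.
\end{align*}
Summing over $i=1,\dots,t$ and using the monotonicity of $\|y_{1:i-1}\|_2^2 \leq \|y_{1:t}\|_2^2$ for $i\leq t$, the trace obeys
$\operatorname{tr}(Z_t) \leq \alpha l + k t\,\|y_{1:t}\|_2^2 + k t^2 R_x^2.$

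The final step is to apply Lemma~\ref{lemma:observation_norm_high_prob_bound} with failure parameter $\log(1/\delta)$ (rather than its exponential-form $\delta$ parameter) to obtain
$\|y_{1:t}\|_2^2 \leq 6(R_P^2+1)(R_x^2+R_C)(1+\gamma)^4(mt+\log(1/\delta))\,t^{2+2\log(\gamma)}$
with probability at least $1-\delta$. Substituting this into the trace bound, collapsing the $kt^2 R_x^2$ term into the dominant one, and absorbing constants (using $6\cdot k t\cdot t^{2+2\log\gamma}\leq 8kt^{3+2\log\gamma}\cdot(\cdots)$) yields the stated bound inside the logarithm. The outer factor of $l$ then comes for free from the AM--GM step.

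\textbf{Main obstacle.} None of the individual steps is deep; the mild care is in (a) verifying that the orthonormality of the eigenvectors of $H$ translates cleanly through the Kronecker product, so that the only $k$ factor picked up is from having $k$ coordinates in each of $\widetilde{y}$ and $\widetilde{x}$, and (b) bookkeeping so that the absorbed lower-order terms (the $\alpha l$, the $kt^2 R_x^2$, the polynomial prefactors) match the form stated in the lemma (in particular the $\alpha$ versus $\alpha^2$ inside the log is a loose overestimate and only requires that the inner argument exceed one, which is immediate for the chosen $\alpha\asymp 1/(R_\Theta k T^\beta)$ given the other terms on the right-hand side). No union bound over $t$ is needed because the claim is a pointwise statement in $t$.
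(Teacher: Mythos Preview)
Your proposal is correct and follows essentially the same route as the paper. The paper uses the Loewner bound $Z_t \preceq (\alpha + \sum_i \|f_i\|_2^2)I$ to get $\det(Z_t)\le(\alpha+\sum_i\|f_i\|_2^2)^l$, then bounds $\|f_i\|_2^2\le k\|y_{1:t}\|_2^2+2ktR_x^2$ via $\|\Psi_{i-1}\|_2\le 1$ and monotonicity in $i$, and finally invokes Lemma~\ref{lemma:observation_norm_high_prob_bound} once at time $t$; your AM--GM step is a (slightly sharper, by a harmless $1/l$) variant of the same eigenvalue inequality, and the rest of your argument matches the paper's line for line.
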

\begin{proof}
Let $l$ be the feature vector dimension. We have
\begin{align*}
    Z_t = \alpha I + \sum_{i=1}^t f_t f_t^\top  \preceq \alpha I + \sum_{i=1}^t (f_i^\top  f_i) I \quad \Rightarrow \quad \det (Z_t) \leq  \Big(\alpha^2 + \sum_{i=1}^t \|f_i\|_2^2\Big)^l.
\end{align*}
Recall the definition $\Psi_{t}=[\psi_t, \dots, \psi_1]$ from Algorithm \ref{alg:slip} and the compact representation for input features $\widetilde{x}_t = (\Psi_t \otimes I_n) x_{1:t}$ and output features $\widetilde{y}_t = (\Psi_t \otimes I_n) y_{1:t}$. Observe that $\|\Psi_t\|_2 \leq 1$ since $\Psi_t$ is a block of eigenvector matrix of hankel matrix $H$. Thus the feature norm is bounded by 
\begin{align*}
    \|f_t\|_2^2 & = \|\widetilde{y}_{t-1}\|_2^2 + \|\widetilde{x}_{t-1}\|_2^2 + \|x_t\|_2^2 \\
    & \leq k \|y_{1:t-1}\|_2^2 + k \|x_{1:t-1}\|_2^2 + R_x^2\\
    & \leq k \|y_{1:t}\|_2^2 + 2k t R_x^2.
\end{align*}
From Lemma \ref{lemma:observation_norm_high_prob_bound}, with probability at least $1 - \delta$
\begin{align*}
    \|f_t\|_2^2 & \leq 6 k (R_P^2+1)(R_x^2+R_C) (1+\gamma)^4 (mt+\log \Big( \frac{1}{\delta}\Big)) t^{2+2\log (\gamma)} + 2 k t R_x^2\\
    & \leq 8k (R_P^2+1)(R_x^2+R_C) (1+\gamma)^4 (mt+ \log \Big( \frac{1}{\delta}\Big)) t^{2+2\log (\gamma)}.
\end{align*}
The above bound is increasing in $t$, therefore
\begin{align*}
    \Prob \Big(\det(Z_t) \geq \Big[ \alpha^2 + 8k (R_P^2+1)(R_x^2+R_C) (1+\gamma)^4 (mt+\log \Big( \frac{1}{\delta}\Big)) t^{3+2\log (\gamma)} \Big]^l \Big) \leq \delta.
\end{align*}
\end{proof}
Given the PAC bound parameters $M$, if $k \asymp_M \polylog(T)$ (and hence $l = (m+n)k + n \asymp_M \polylog(T)$), then the above lemma states that $\log(\det(Z_t)) \lesssim_M \polylog(T)$.

\subsection{Self-normalizing vector martingales}\label{app:self_normalizing_martingales}
We now prove a key result on vector self-normalizing martingales that is used multiple times throughout our regret analysis. The result is inspired by Theorem 1 of \citet{abbasi2011improved}, which provides a bound for self-normalizing martingales with scalar sub-Gaussian noise, and extend it to vector-valued sub-Gaussian noise with arbitrary covariance.

\begin{theo} \label{thm:self_normalized_vector_matringale} \normalfont{\textbf{(Bound on self-normalized vector martingale)}} \textit{Let $\{\mathcal{F}_t\}_{t=0}^\infty$ be a filtration. Let $e_t \in \mathbb{R}^m$ be $\mathcal{F}_t$ measurable and $e_t |\mathcal{F}_{t-1}$ to be conditionally $R_V$-sub-Gaussian. In other words, for all $t \geq 0$ and $\omega \in \mathbb{R}^m$
\begin{align*}
    \E[\exp(\omega^\top e_t)\mid \mathcal{F}_{t-1}] \leq \exp(R_V^2\|\omega\|_2^2/2).
\end{align*}
Let $f_t \in \mathbb{R}^l$ be an $\mathcal{F}_{t-1}$-measurable stochastic process. Assume that $Z$ is an $l \times l$ positive definite matrix. For any $t \geq 0$, define}
\begin{align*}
    Z_t = Z_0 + \sum_{i=1}^t f_t f_t^\top \qquad \text{and} \qquad E_t = \sum_{i=1}^t e_i f_i^\top.
\end{align*}
\textit{Then, for any $\delta > 0$ and for all $t \geq 0$}
\begin{align*}
    \Prob\Big[\|E_t Z_t^{-1/2}\|_2 \leq 8R_V^2m + 4R_V^2 \log \Big( \frac{\det(Z_t)^{1/2} \det(Z_0)^{-1/2}}{\delta}\Big)  \Big] \geq 1-\delta.
\end{align*}
\end{theo}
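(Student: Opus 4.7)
The plan is to reduce the vector-valued self-normalized bound to the scalar self-normalized martingale inequality of \citet{abbasi2011improved} via a covering argument on the unit sphere $\mathcal{S}^{m-1}$. Since the operator norm satisfies $\|E_t Z_t^{-1/2}\|_2 = \sup_{v \in \mathcal{S}^{m-1}} \|v^\top E_t Z_t^{-1/2}\|_2$, it suffices to control the scalar quadratic form $v^\top E_t Z_t^{-1} E_t^\top v$ uniformly in $v$.

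First I would fix $v \in \mathcal{S}^{m-1}$ and observe that the scalar process $\tilde e_t := v^\top e_t$ is conditionally $R_V$-sub-Gaussian, since $\mathbb{E}[\exp(s\tilde e_t) \mid \mathcal{F}_{t-1}] = \mathbb{E}[\exp((sv)^\top e_t) \mid \mathcal{F}_{t-1}] \leq \exp(R_V^2 s^2/2)$ by the hypothesis applied to $\omega = sv$. The quantity $\|v^\top E_t Z_t^{-1/2}\|_2^2$ is exactly the scalar self-normalized sum built from $\{\tilde e_i, f_i\}$ with regularizer $Z_0$. Invoking Theorem 1 of \citet{abbasi2011improved}, which is proved by the method of mixtures (construct the exponential supermartingale $M_t^\omega = \exp(\omega^\top \sum_{i\leq t}\tilde e_i f_i - \tfrac{R_V^2}{2}\omega^\top (Z_t - Z_0)\omega)$, integrate $\omega$ against a centered Gaussian prior with covariance $Z_0^{-1}$, and apply Ville's inequality), one gets for any $\delta' > 0$, with probability at least $1 - \delta'$,
\begin{align*}
\|v^\top E_t Z_t^{-1/2}\|_2^2 \leq 2 R_V^2 \log\!\left( \frac{\det(Z_t)^{1/2} \det(Z_0)^{-1/2}}{\delta'}\right).
\end{align*}

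Next I would discretize. Take a $1/2$-net $\mathcal{N} \subset \mathcal{S}^{m-1}$ with $|\mathcal{N}| \leq 5^m$ by the standard volumetric covering bound, apply the scalar tail with $\delta' = \delta/|\mathcal{N}|$ for each $v' \in \mathcal{N}$, and take a union bound. The discretization inequality $\sup_{v \in \mathcal{S}^{m-1}} \|v^\top A\|_2 \leq 2 \max_{v' \in \mathcal{N}} \|v'^\top A\|_2$ applied with $A = E_t Z_t^{-1/2}$ then yields, with probability at least $1 - \delta$,
\begin{align*}
\|E_t Z_t^{-1/2}\|_2^2 \leq 8 R_V^2 \!\left[ m \log 5 + \log\!\left(\frac{\det(Z_t)^{1/2} \det(Z_0)^{-1/2}}{\delta}\right)\right],
\end{align*}
which, after absorbing $\log 5$ into the leading constant, gives the stated bound (read with the square on the left-hand side).

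The real work is hidden inside the scalar invocation: the mixture construction has to be arranged so that Gaussian integration produces exactly the self-normalized form $\exp\!\big((\sum \tilde e_i f_i)^\top Z_t^{-1}(\sum \tilde e_i f_i)/(2R_V^2)\big) \det(Z_t)^{-1/2}\det(Z_0)^{1/2}$, after which Ville and Markov deliver the tail. Passing from scalar to vector noise is then essentially bookkeeping: the $R_V^2 m$ term in the conclusion is precisely the covering cost $R_V^2 \log|\mathcal{N}|$, and the $\log(1/\delta)$ term is the union-bound price. The only subtlety is verifying that the scalar sub-Gaussian parameter stays at $R_V$ (independent of $m$), which follows because we only test against unit vectors $v$.
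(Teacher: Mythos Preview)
Your proposal is correct and follows essentially the same route as the paper: reduce to scalar noise by fixing a direction $v \in \mathcal{S}^{m-1}$, invoke Theorem~1 of \citet{abbasi2011improved} for the $R_V$-sub-Gaussian process $v^\top e_t$, then pass to the operator norm via a $1/2$-net with $|\mathcal{N}| \leq 5^m$ and the inequality $\|A\|_2 \leq 2\max_{v' \in \mathcal{N}}\|v'^\top A\|_2$, and union bound. Your parenthetical that the stated conclusion should be read with a square on the left-hand side is apt.
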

\begin{proof}
We use an $\epsilon$-net argument. First, we establish control over $\|\omega^\intercal E_t Z_t^{-1/2}\|_2$ for all vectors $\omega$ in unit sphere $\mathcal{S}^{m-1}$. We will discretize the sphere using a net and finish by taking a union bound over all $\omega$ in the net.  

Let $\mathcal{N}$ be an $\epsilon$-net of unit sphere $\mathcal{S}^{m-1}$ and set $\epsilon = 1/2$. Corollary 4.2.13 in \citet{vershynin2018high} states that the covering number for unit sphere $\mathcal{S}^{m-1}$ is given by
\begin{align*}
    |\mathcal{N}| \leq \Big(\frac{2}{\epsilon}+1\Big)^m =5^m.
\end{align*}
 $\omega^\top e_i$ is $R_V$-sub-Gaussian for any $\omega \in \mathcal{N}$. Therefore, for any $\omega \in \mathcal{N}$ and any $u \geq 0$, Theorem 1 in \citet{abbasi2011improved} yields
\begin{align*}
    \Prob\Big[\|\omega^\top E_t Z_t^{-1/2}\|_2 \geq u \Big] \leq \det(Z_t)^{1/2} \det(Z_0)^{-1/2} \exp\big(-\frac{u}{2R_V^2}\big).
\end{align*}
Using Lemma 4.4.1 in \citet{vershynin2018high}, we have
\begin{align*}
    \|E_t Z_t^{-1/2}\|_2 \leq 2 \sup\limits_{\omega \in \mathcal{N}} \|\omega^\top E_t Z_t^{-1/2}\|_2.
\end{align*}
Taking a union bound over $\mathcal{N}$, we conclude that
\begin{align*}
    \Prob\big[\|E_t Z_t^{-1/2}\|_2 \geq u \big] & \leq \Prob\big[\sup\limits_{\omega \in \mathcal{N}} \|\omega^\top E_t Z_t^{-1/2}\|_2 \geq \frac{u}{2} \big]\\
    & \leq \sum\limits_{\omega \in \mathcal{N}} \Prob\Big[\|\omega^\top E_t Z_t^{-1/2}\|_2 \geq \frac{u}{2} \Big]\\
    & \leq \det(Z_t)^{1/2} \det(Z_0)^{-1/2} \exp\big(2m-\frac{u}{4R_V^2}\big).
\end{align*}
\end{proof}
The above theorem combined with the result of Lemma \ref{lemma:high_probability_upper_bound_on_det_Z_t} immediately implies that for $k \asymp_M \polylog(T)$, we have $\|E_t Z_t^{-1/2}\|_2 \lesssim_M \polylog(T)$ and $\|B_t Z_t^{-1/2}\|_2 \lesssim_M \polylog(T)$ with high probability.

\subsection{High probability bound on $\|Z_{t-1}^{-1/2} f_t\|_2^2$}\label{app:bound_Zt_inv_ft}

In this section, we show that$\sum_{t=1}^T \|Z_{t-1}^{-1/2}f_t\|_2^2 \lesssim_M \polylog(T)$. The proof steps are summarized below.
\begin{enumerate}[Step 1.]
    \item We show a high probability L\"owner upper bound on $f_tf^\top_t$ in terms of $\alpha_0 I + \E[f_t f_t^\top]$.
    \item We state the \textit{block-martingale small-ball condition} and show that the process $\{f_t\}$ satisfies this condition. We prove a high probability lower bound on $Z_t$ in terms of the conditional covariance $\cov(f_{s+i} \mid \mathcal{F}_i)$ for large enough $s$.
    \item We define a \textit{filter quadratic function condition} and prove that under this condition, there exists $c_T \asymp_M \polylog(T)$ such that $Z_t - \frac{1}{c_T} f_{t+1} f^\top_{t+1} \succeq 0$. By Schur complement lemma, this is equivalent to $\|Z_t^{-1/2} f_{t+1}\|_2 \leq c_T \asymp_M \polylog(T)$.
\end{enumerate}

\paragraph{Step 1.} The following lemma establishes a high probability upper bound on $f_tf_t^\top$ based on the covariance of feature vector $f_t$.
\begin{lemm}\label{lemma:high_probability_upper_bound_ftftT} \normalfont{\textbf{(High probability upper bound on $\mathbf{f_tf_t^\top}$)}} \textit{Let $f_t$ be a zero-mean Gaussian random vector in $\mathbb{R}^l$ and let $\Sigma_t = \alpha_0 I + \E[f_tf_t^\top]$ for a real $\alpha_0 > 0$. Then, for any $\delta > 0$ and $\alpha_0 > 0$
\begin{align*}
    \Prob \Big( f_tf_t^\top \preceq [2l + 4 \log(1/\delta)]\Sigma_t \Big) \geq 1- \delta,
\end{align*}
and if $\Sigma_t$ is invertible, the results holds for $\alpha_0 = 0$.}
\end{lemm}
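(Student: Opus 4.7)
The plan is to reduce the Löwner-order statement about the rank-one matrix $f_tf_t^\top$ to a scalar concentration inequality for the squared norm of a standardized Gaussian, and then apply a chi-square tail bound.

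First, I would observe the following equivalence. Since $\Sigma_t = \alpha_0 I + \E[f_tf_t^\top]$ is positive definite for $\alpha_0 > 0$ (and by assumption when $\alpha_0 = 0$), the condition $f_tf_t^\top \preceq c\,\Sigma_t$ is equivalent, after conjugating by $\Sigma_t^{-1/2}$, to $uu^\top \preceq cI$ where $u \triangleq \Sigma_t^{-1/2} f_t$. Because $uu^\top$ is rank one with nonzero eigenvalue $\|u\|_2^2$, this is in turn equivalent to the scalar bound $\|u\|_2^2 \leq c$. So the lemma reduces to showing $\|\Sigma_t^{-1/2}f_t\|_2^2 \leq 2l + 4\log(1/\delta)$ with probability at least $1-\delta$.

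Next, I would control the distribution of $u$. Since $f_t$ is zero-mean Gaussian, $u$ is zero-mean Gaussian with covariance $\Sigma_u \triangleq \Sigma_t^{-1/2} \E[f_tf_t^\top] \Sigma_t^{-1/2}$. The key point is that $\E[f_tf_t^\top] = \Sigma_t - \alpha_0 I \preceq \Sigma_t$, so $\Sigma_u \preceq I$. Writing $u = \Sigma_u^{1/2} g$ with $g \sim \mathcal{N}(0, I_l)$, we then have the almost sure comparison $\|u\|_2^2 = g^\top \Sigma_u g \leq g^\top g = \|g\|_2^2$, so $\|u\|_2^2$ is stochastically dominated by a $\chi^2_l$ random variable.

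Finally, I would invoke the Laurent--Massart chi-square tail bound: for $X \sim \chi^2_l$ and $s \geq 0$, $\Prob[X \geq l + 2\sqrt{ls} + 2s] \leq e^{-s}$. Using the elementary inequality $2\sqrt{ls} \leq l + s$ gives $\Prob[X \geq 2l + 3s] \leq e^{-s}$. Setting $s = \log(1/\delta)$ and using $3 \leq 4$ yields the claimed bound, and the case $\alpha_0 = 0$ with $\Sigma_t$ invertible is handled by the same argument (in fact $\Sigma_u = I$ exactly, so $\|u\|_2^2$ is exactly $\chi^2_l$ rather than merely dominated by one). There is no real obstacle here; the only thing to be careful about is the rank-one reduction in the first step, which is what makes it possible to translate an operator-norm/Löwner statement into a single chi-square tail bound rather than a more delicate matrix concentration argument.
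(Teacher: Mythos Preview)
Your proposal is correct and follows essentially the same approach as the paper: reduce the L\"owner inequality $f_tf_t^\top \preceq c\Sigma_t$ to the scalar bound $f_t^\top \Sigma_t^{-1} f_t \leq c$, then apply Gaussian concentration. The only cosmetic differences are that the paper carries out the reduction via two Schur complements rather than your conjugation/rank-one argument, and invokes sub-Gaussian concentration for the norm $\|\Sigma_t^{-1/2}f_t\|_2$ (Vershynin, Exercise~6.3.5) rather than the Laurent--Massart $\chi^2$ tail on the squared norm; both routes land on the same constant $2l + 4\log(1/\delta)$.
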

\begin{proof}
Consider the random vector $\Sigma_t^{-1/2} f_t$. Jensen's inequality gives
\begin{align*}
    \E \|\Sigma_t^{-1/2} f_t\|_2 \leq \sqrt{\E[f_t^\top \Sigma^{-1}_t f_t]} = \sqrt{\tr(\Sigma^{-1}_tE[f_tf^\top_t])} \leq \sqrt{l}.
\end{align*}
By standard bounds on tails of sub-gaussian random variables (for example, see Exercise 6.3.5 in \citet{vershynin2018high}), for any $\delta > 0$ 
\begin{align*}
    \Prob \Big(\|\Sigma_t^{-1/2} f_t\|_2 > \sqrt{l} + \sqrt{2 \log \frac{1}{\delta}} \Big) \leq \delta 
\end{align*}
Let $c =  2l + 4 \log \frac{1}{\delta}$. Then, the above bound implies 
\begin{align*}
    \Prob(f_t^\top \Sigma_t^{-1} f_t  \leq c) \geq 1-\delta. 
\end{align*}
Using Schur complement method, $c - f_t^\top \Sigma_t^{-1} f_t \geq 0$ if and only if the following matrix is positive semi-definite
\begin{align*}
    \begin{bmatrix}
    \Sigma_t & f_t\\
    f_t^\top  & c
    \end{bmatrix} \succeq 0.
\end{align*}
Using the other Schur complement, this is only true if and only if $\Sigma_t - \frac{1}{c}f_tf_t^\top \succeq 0$, which concludes the proof.
\end{proof}

\paragraph{Step 2.}
To capture the excitation behavior of features, we use the martingale small-ball condition \cite{mendelson2014learning, simchowitz2018learning}.
\begin{definition} \normalfont{\textbf{(Martingale small-ball)}} \textit{Let $\{f_t\}_{t \geq 1}$ be an $\mathcal{F}_t$-adapted random processes taking values in $\mathbb{R}^l$. We say that $\{f_t\}_{t \geq 1}$ satisfies the $(s, \Gamma_{\text{sb}}, p)$-block martingale small-ball (BMSB) condition for $\Gamma_{sb} \succ 0$ if for any $t \geq 1$ and for any fixed $\omega$ in unit sphere $\mathcal{S}^{l-1}$}
\begin{align*}
    \frac{1}{s}\sum_{i=1}^s \mathbb{P}(|w^\top f_{t+i}|\geq \sqrt{w^\top \Gamma_{\text{sb}} w} \mid \mathcal{F}_t) \geq p.
\end{align*}
\end{definition}

To show the process $\{f_t\}_{t \geq 1}$ satisfy a BMSB condition, we first  show that the conditional covariance of features is increasing in the positive semi-definite cone. 

\begin{lemm}\label{lemma:covariance_is_increasing} \normalfont{\textbf{(Monotonicity of conditional covariance of features)}} \textit{Let $\phi_1, \dots, \phi_k$ for $k \leq T$ be a set of $T$-dimensional orthogonal vectors and let $\psi_i = [\phi_1(i), \dots, \phi_k(i)]^\top$ be a $k$-dimensional vector. Consider system (4) and define the following for all $t \geq 2$
\begin{align}\label{eq:feature_def_increasing_cov_lemma}
    f_t = \psi_1 \otimes y_{t-1} + \dots + \psi_{t-1} \otimes y_1.
\end{align}
Let $\mathcal{F}_t = \sigma\{\eta_0,\dots, \eta_{t-1}, \zeta_1, \dots, \zeta_t\}$. 
Then, $\cov(f_{t+i}|\mathcal{F}_t)$ is independent of $t$ and increases with $i$ in the positive semi-definite cone.}
\end{lemm}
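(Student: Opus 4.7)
The plan is to establish the two claims in turn: stationarity in $t$ first, using time-invariance of the LDS dynamics, and then monotonicity in $i$, using the law of total covariance with respect to the nested filtration $\mathcal{F}_t \subseteq \mathcal{F}_{t+1}$.

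For the stationarity claim, I would split $f_{t+i}$ into its $\mathcal{F}_t$-measurable part (the terms involving $y_1,\dots,y_t$) and its noise part (the terms involving $y_{t+1},\dots,y_{t+i-1}$). Since there are no inputs, $y_{t+s} = CA^s h_t + C\sum_{j=0}^{s-1} A^{s-1-j} \eta_{t+j} + \zeta_{t+s}$ and $\E[y_{t+s} \mid \mathcal{F}_t] = CA^s h_t$, so the innovation
\begin{align*}
    u_s^{(t)} := y_{t+s} - \E[y_{t+s} \mid \mathcal{F}_t] = C\sum_{j=0}^{s-1} A^{s-1-j} \eta_{t+j} + \zeta_{t+s}
\end{align*}
is independent of $\mathcal{F}_t$, and because $\{\eta_\tau\},\{\zeta_\tau\}$ are i.i.d.~the joint law of $(u_1^{(t)},\dots,u_{i-1}^{(t)})$ is the same for every $t$. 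Re-indexing the sum defining $f_{t+i}$ so that only the random terms remain, I obtain
\begin{align*}
    \Gamma_i \;:=\; \cov(f_{t+i} \mid \mathcal{F}_t) \;=\; \sum_{s,s'=1}^{i-1} \psi_{i-s}\psi_{i-s'}^\top \otimes \cov\!\left(u_s^{(t)}, u_{s'}^{(t)}\right),
\end{align*}
which is deterministic and depends only on $i$ (through the $\psi$-indices and the noise covariance structure).

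For the monotonicity claim, I would apply the law of total covariance to $f_{t+i+1}$ with respect to $\mathcal{F}_t \subseteq \mathcal{F}_{t+1}$:
\begin{align*}
    \Gamma_{i+1} \;=\; \cov(f_{t+i+1} \mid \mathcal{F}_t) \;=\; \E\!\left[\cov(f_{t+i+1} \mid \mathcal{F}_{t+1}) \mid \mathcal{F}_t\right] + \cov\!\left(\E[f_{t+i+1} \mid \mathcal{F}_{t+1}] \mid \mathcal{F}_t\right).
\end{align*}
Rewriting $t+i+1 = (t+1)+i$ and reapplying the first claim at time $t+1$ gives $\cov(f_{t+i+1} \mid \mathcal{F}_{t+1}) = \Gamma_i$, a deterministic matrix; hence the first summand collapses to $\Gamma_i$. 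The second summand is a covariance matrix and therefore positive semi-definite, which yields $\Gamma_{i+1} - \Gamma_i \succeq 0$.

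I do not anticipate a serious obstacle. The only bookkeeping care needed is to ensure the $\mathcal{F}_t$-measurable piece of $f_{t+i}$, which does depend on $t$, drops out of the conditional covariance (it does, trivially), and that invoking the first claim at time $t+1$ is legitimate. Both points follow immediately from the definition of $f_t$ as a fixed linear functional of the past observation history together with the time-homogeneity of the noise, so the argument is essentially the two displays above plus a brief verification.
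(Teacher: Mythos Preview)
Your proposal is correct. The stationarity argument is essentially the same as the paper's: both subtract the $\mathcal{F}_t$-measurable part of $f_{t+i}$ and express the remainder as a fixed linear functional of the noises $\eta_t,\dots,\eta_{t+i-2},\zeta_{t+1},\dots,\zeta_{t+i-1}$, whose joint law does not depend on $t$. The paper just writes this out more explicitly, grouping by individual noise variables to obtain the closed-form expression
\[
\Gamma_i=\sum_{r=1}^{i-1}\Bigl(\sum_{j=0}^{r-1}\psi_{r-j}\otimes CA^{j}\Bigr)Q\Bigl(\sum_{j=0}^{r-1}\psi_{r-j}\otimes CA^{j}\Bigr)^\top
+\sum_{r=1}^{i-1}(\psi_r\psi_r^\top)\otimes R,
\]
whereas you leave the answer in the cross-covariance form $\sum_{s,s'}\psi_{i-s}\psi_{i-s'}^\top\otimes\cov(u_s,u_{s'})$.

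For monotonicity you take a genuinely different route. The paper simply reads $\Gamma_{i+1}\succeq\Gamma_i$ off the explicit formula above: passing from $i$ to $i+1$ appends two manifestly positive semi-definite summands. Your argument via the conditional law of total covariance, using $\cov(f_{(t+1)+i}\mid\mathcal{F}_{t+1})=\Gamma_i$ from the first part, is cleaner and avoids any computation. The trade-off is that the paper's explicit formula for $\Gamma_i$ is itself needed downstream (it is quoted verbatim when relating $\Gamma_i$ to the filter quadratic function $\Omega_i(A;\psi)$), so if you follow your route you would still have to derive that expression separately later; the paper gets it for free as a by-product of the monotonicity proof.
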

\begin{proof}
Expanding $y_i$ in definition of $f_t$ in \eqref{eq:feature_def_increasing_cov_lemma} based on system \eqref{eq:lds_definition}, we have
\begin{align*}
    f_{t+i} - \E[f_{t+i} \mid \mathcal{F}_t] & = (\psi_1 \otimes C) \eta_{t+i-2}\\
    & + (\psi_2 \otimes C + \psi_1 \otimes CA) \eta_{t+i-3}\\
    & + \dots \\
    & + (\psi_{i-1} \otimes C + \dots + \psi_1 \otimes CA^{i-2})\eta_{t}\\
    & + \psi_1 \otimes \zeta_{t+i-1} + \dots + \psi_{i-1} \otimes \zeta_{t+1}
\end{align*}
Recall that $\E[\eta_t \eta_t^\top] = Q, \E[\zeta_t \zeta_t^\top] = R$ and that the process noise and the observation noise are i.i.d. Therefore,
\begin{align}\label{eq:conditional_cov_ft}
    \begin{split}
        \cov(f_{t+i}|\mathcal{F}_t) & = (\psi_1 \otimes C) Q (\psi_1 \otimes C)^\top\\
        & + (\psi_2 \otimes C + \psi_1 \otimes CA) Q (\psi_2 \otimes C + \psi_1 \otimes CA)^\top\\
        & + \dots \\
        & + (\psi_{i-1} \otimes C + \dots + \psi_1 \otimes CA^{i-2}) Q (\psi_{i-1} \otimes C + \dots + \psi_1 \otimes CA^{i-2})^\top \\
        & + \psi_1 \otimes R \psi_1^\top \otimes I_m + \dots + \psi_{i-1} \otimes R \psi_{i-1}^\top \otimes I_m.
    \end{split}
\end{align}
Observe that the conditional covariance is independent of $t$. Furthermore, all terms in the above sum are positive semi-definite; increasing $i$ only adds two additional positive semi-definite terms. It follows that 
\begin{align*}
    \cov(f_{t+i+1}|\mathcal{F}_t) \succeq \cov(f_{t+i}|\mathcal{F}_t).
\end{align*}
\end{proof}
Equipped with the result of the above lemma, we now show that $\{f_t\}_{t \geq 1}$ satisfy a BMSB condition.

\begin{lemm}\label{lemma:small_ball_condition_ft} \normalfont{\textbf{(BMSB condition)}} \textit{Consider the process $\{f_t\}_{t \geq 1}$ defined in Lemma \ref{lemma:covariance_is_increasing} and let $\Gamma_i = \cov(f_{t+i}|\mathcal{F}_t)$. For any $1 \leq s \leq T$, the process $\{f_t\}_{t \geq 1}$ satisfies the $(s, \Gamma_{s/2}, 3/20)$-BMSB condition.}
\end{lemm}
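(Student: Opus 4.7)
The plan is to exploit the Gaussianity of $f_{t+i} \mid \mathcal{F}_t$ together with the monotonicity of the conditional covariance established in Lemma \ref{lemma:covariance_is_increasing}. Fix $\omega \in \mathcal{S}^{l-1}$ and $t \geq 0$. Since $f_{t+i}$ is an $\mathcal{F}_{t+i-1}$-measurable linear functional of the Gaussian noise variables $\{\eta_j, \zeta_j\}$, the scalar $\omega^\top f_{t+i}$ conditioned on $\mathcal{F}_t$ is Gaussian with some mean $\mu_i(\omega)$ and variance $\sigma_i^2(\omega) = \omega^\top \Gamma_i \omega$.

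The key fact, which I would invoke from Lemma \ref{lemma:covariance_is_increasing}, is that $\Gamma_i \succeq \Gamma_{s/2}$ for every $i \geq s/2$, so $\sigma_i(\omega) \geq \sigma_{s/2}(\omega)$ for those indices. Next I would use a Gaussian anti-concentration estimate: if $X \sim \mathcal{N}(\mu, \sigma^2)$ with $\sigma \geq \sigma_0 > 0$, then $\mathbb{P}(|X| \geq \sigma_0) \geq 2\Phi(-1) \approx 0.3174$. To see this, set $a = \mu/\sigma$ and note that
\begin{align*}
\mathbb{P}(|X| \geq \sigma_0) \geq \mathbb{P}(|X| \geq \sigma) = \Phi(-1-a) + 1 - \Phi(1-a),
\end{align*}
which (by differentiating in $a$ and using the evenness of the standard normal density) is minimized at $a = 0$ with value $2\Phi(-1)$.

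Combining these two ingredients, for every $i \geq \lceil s/2 \rceil$,
\begin{align*}
\mathbb{P}\!\left(|\omega^\top f_{t+i}| \geq \sqrt{\omega^\top \Gamma_{s/2}\omega}\,\Big|\,\mathcal{F}_t\right) \geq 2\Phi(-1) \geq 0.3174.
\end{align*}
There are at least $s/2$ such indices in $\{1,\dots,s\}$, so
\begin{align*}
\frac{1}{s}\sum_{i=1}^s \mathbb{P}\!\left(|\omega^\top f_{t+i}| \geq \sqrt{\omega^\top \Gamma_{s/2}\omega}\,\Big|\,\mathcal{F}_t\right) \geq \tfrac{1}{2}\cdot 2\Phi(-1) = \Phi(-1) \approx 0.1587 > \tfrac{3}{20},
\end{align*}
which is the desired BMSB condition. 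The only slightly delicate point is verifying that the Gaussian anti-concentration bound survives an arbitrary (data-dependent) conditional mean $\mu_i(\omega)$; apart from that minimization in $a$, every other step is a direct application of Lemma \ref{lemma:covariance_is_increasing} and the structural fact that conditional Gaussianity is preserved under the linear filtering defining $f_t$.
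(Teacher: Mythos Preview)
Your proposal is correct and follows essentially the same argument as the paper: both use conditional Gaussianity, the monotonicity of $\Gamma_i$ from Lemma~\ref{lemma:covariance_is_increasing}, a Gaussian anti-concentration bound at one standard deviation, and then average over the upper half of the indices $\{1,\dots,s\}$. The only cosmetic differences are that the paper invokes Paley--Zygmund to obtain the constant $3/10$ where you compute the exact Gaussian tail $2\Phi(-1)\approx 0.3174$, and that you spell out explicitly (via the minimization in $a=\mu/\sigma$) why a nonzero conditional mean can only help---a step the paper asserts in one line.
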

\begin{proof}
Note that $\omega^\top f_{t+i}\mid \mathcal{F}_t$ has a Gaussian distribution with variance $\sqrt{\omega^\top \Gamma_i \omega}$. By an application of Paley-Zygmund inequality, one has
\begin{align*}
    \mathbb{P}(|w^\top f_{t+i}|\geq \sqrt{w^\top \Gamma_{i} w} \mid \mathcal{F}_t)  \geq \mathbb{P}(|w^\top f_{t+i} - \E[w^\top f_{t+i}\mid \mathcal{F}_t]|\geq \sqrt{w^\top \Gamma_{i} w} \mid \mathcal{F}_t) \geq \frac{3}{10}
\end{align*}
Let $1 \leq s' \leq s$. By Lemma \ref{lemma:covariance_is_increasing}, $\Gamma_i$ is increasing in $i$. Therefore,
\begin{align*}
    \frac{1}{s}\sum_{i=1}^s \mathbb{P}(|w^\top f_{t+i}|\geq \sqrt{w^\top \Gamma_{s'} w} |\mathcal{F}_t) & \geq \frac{1}{s}\sum_{i=s'}^s \mathbb{P}(|w^\top f_{t+i}|\geq \sqrt{w^\top \Gamma_{s'} w} |\mathcal{F}_t) \\
    & \geq \frac{1}{s}\sum_{i=s'}^s \mathbb{P}(|w^\top f_{t+i}|\geq \sqrt{w^\top \Gamma_{i} w} |\mathcal{F}_t)  \quad \text{($\Gamma_i$ increasing)}\\
    & \geq \frac{3}{10}\frac{s-s'+1}{s}. \hspace{3.5cm} \text{(Paley-Zygmund)}
\end{align*}
Choosing $s' = s/2$ shows that $f_t$ satisfies $(s, \Gamma_{s/2}, 3/20)$ small-ball condition.
\end{proof}

The small-ball condition can be used to establish high probability lower bound on $\sigma_{\min} (Z_t)$, as shown by the following lemma.
\begin{lemm}\label{lemma:lower_bound_Z_t} \normalfont{\textbf{(Lower bound on $\mathbf{Z_t}$)}} \textit{Consider the process $\{f_t\}_{t \geq 1}$ defined in Lemma \ref{lemma:covariance_is_increasing} and let $Z_t = \alpha I + \sum_{i=1}^t f_i f_i^\top$ for regularization parameter $\alpha > 0$. For $\delta, \alpha_0 > 0$ let}
\begin{align*}
    \Gamma_i = \cov(f_{t+i}|\mathcal{F}_i), \quad 
    \Gamma_{\max} = t [2l + 4 \log(2/\delta)][\alpha_0 I + \Gamma_t].
\end{align*}
\textit{For any $\delta > 0$ if $s$ satisfies the following}
\begin{align*}
    s \leq \frac{tp^2/10}{\log \det(\Gamma_{\max}) - l \log (\alpha) - \log(2/\delta)}, 
\end{align*}
\textit{then}
\begin{align*}
    \Prob \Big(Z_t \succeq  \frac{\alpha}{2} I + \frac{s \lfloor t/s \rfloor p^2  \Gamma_{s/2}}{16} \Big) \geq 1-\delta .
\end{align*}
\end{lemm}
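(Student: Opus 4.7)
The plan is to follow the now-standard small-ball lower bound on the empirical Gram matrix, adapted to our filtered features, along the lines of \citet{simchowitz2018learning}. The target inequality $Z_t\succeq \tfrac{\alpha}{2}I+\tfrac{s\lfloor t/s\rfloor p^2}{16}\,\Gamma_{s/2}$ reduces, after peeling off the deterministic $\alpha I/2$, to showing
$\omega^\top\!\bigl(\sum_{i=1}^t f_if_i^\top\bigr)\omega \;\gtrsim\; \tfrac{s\lfloor t/s\rfloor p^2}{16}\,\omega^\top\Gamma_{s/2}\omega$
uniformly over $\omega\in\mathcal{S}^{l-1}$. The main obstacle, as usual, is the uniformity: a pointwise small-ball lower bound is easy from Lemma~\ref{lemma:small_ball_condition_ft}, but transporting it to every direction at once is what forces the condition on $s$.

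The pointwise step: partition $\{1,\dots,t\}$ into $N=\lfloor t/s\rfloor$ consecutive blocks of length $s$, and for a fixed unit $\omega$ introduce the per-block indicator count
$Y_k(\omega)=\sum_{i=1}^{s}\mathbb{1}\!\left\{|\omega^\top f_{(k-1)s+i}|\ge \sqrt{\omega^\top\Gamma_{s/2}\omega}\right\}$.
By Lemma~\ref{lemma:small_ball_condition_ft}, $\mathbb{E}[Y_k\mid\mathcal{F}_{(k-1)s}]\ge ps$, so $\sum_k Y_k$ dominates a conditionally bounded martingale with positive drift $Nps$. A one-sided Azuma/Freedman inequality then gives, with probability at least $1-\exp(-Nsp^2/C)$ for an absolute constant $C$, that $\sum_k Y_k\ge Nsp/2$. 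Converting indicators into a quadratic lower bound yields
$\omega^\top\!\!\sum_{i=1}^t\! f_if_i^\top\omega \;\ge\; \tfrac{Nsp}{2}\cdot \omega^\top\Gamma_{s/2}\omega.$
(The extra factor $p/8$ in the claimed bound absorbs the $\epsilon$-net slack below.)

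The uniform step: discretize the sphere with a $\tfrac{1}{4}$-net $\mathcal{N}$ in the norm induced by the (invertible) upper template $\Gamma_{\max}$. Since we have a high-probability L\"owner upper bound $\sum_i f_if_i^\top\preceq \Gamma_{\max}-\alpha_0 I$ from Lemma~\ref{lemma:high_probability_upper_bound_ftftT} (summed over $i\le t$, using a union bound with failure probability $\delta/2$ and noting $t\cdot[2l+4\log(2/\delta)](\alpha_0 I+\Gamma_t)\succeq t\cdot[2l+4\log(2/\delta)](\alpha_0I+\Gamma_i)$ since $\Gamma_i$ is monotone by Lemma~\ref{lemma:covariance_is_increasing}), the standard volumetric covering estimate yields $\log|\mathcal{N}|\lesssim \log\det(\Gamma_{\max})-l\log\alpha+\mathrm{const}$. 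Union-bounding the pointwise bound over $\mathcal{N}$ costs $|\mathcal{N}|\exp(-Nsp^2/C)$, and the hypothesis
$s\le \frac{tp^2/10}{\log\det(\Gamma_{\max})-l\log\alpha-\log(2/\delta)}$
is exactly what is needed to make this $\le \delta/2$, since $Ns\ge t/2$.

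A standard net-to-sphere conversion (e.g.\ the argument in Proposition~2.5 of \citet{simchowitz2018learning}): for any $\omega\in\mathcal{S}^{l-1}$ write $\omega=\omega_0+\Delta$ with $\omega_0\in\mathcal{N}$ and $\|\Gamma_{\max}^{1/2}\Delta\|\le \tfrac14\|\Gamma_{\max}^{1/2}\|$, then combine the lower bound at $\omega_0$ with the uniform upper bound on $\sum_i f_if_i^\top$ to control the cross terms; this is where the factors of $\tfrac12$ and of $p$ in the final constant $\tfrac{1}{16}$ come from. Adding back the $\alpha I$ and retaining half of it gives the claimed $\tfrac{\alpha}{2}I+\tfrac{s\lfloor t/s\rfloor p^2}{16}\Gamma_{s/2}$ lower bound, on an event of probability at least $1-\delta$. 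The hard part is exactly the bookkeeping in this last step: choosing the net scale so that the covering-number logarithm is absorbed by $Nsp^2$ precisely when $s$ satisfies the stated bound, while keeping the quadratic slack small enough that the $p^2/16$ constant survives.
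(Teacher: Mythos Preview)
Your proposal is correct and follows essentially the same approach as the paper: invoke the $(s,\Gamma_{s/2},p)$-BMSB condition from Lemma~\ref{lemma:small_ball_condition_ft} for a pointwise lower bound, use Lemma~\ref{lemma:high_probability_upper_bound_ftftT} (together with the monotonicity of $\Gamma_i$) for the L\"owner upper bound defining $\Gamma_{\max}$, discretize with a $1/4$-net in the $\|\Gamma_{\max}^{1/2}(\cdot)\|_2$ norm, and union-bound so that the covering log-cardinality is absorbed precisely when $s$ satisfies the stated inequality. The only cosmetic difference is that the paper packages the pointwise step as a black-box tail lemma from \citet{simchowitz2018learning} (giving $\exp(-\lfloor t/s\rfloor p^2/8)$ directly) and cites their Lemma~4.1 and Lemma~D.1 for the net argument, whereas you spell out the block-indicator/Azuma mechanics and cite their Proposition~2.5; these are the same ingredients.
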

\begin{proof}
According to Lemma \ref{lemma:small_ball_condition_ft}, $\{f_t\}_{t \geq 1}$ satisfies the $(s, \Gamma_{s/2}, p = 3/20)$-BMSB condition. The following lemma from \citet{simchowitz2018learning} gives tail probabilities for real-valued processes that satisfy a small-ball condition. Note that our notation for small ball condition in real-valued processes slightly differs from \citet{simchowitz2018learning} which results in a slight difference in the statement of the lemma below.
\begin{lemm}\label{lemma:small_ball_tail} \normalfont{\textbf{(Tail bounds for small-ball processes)}} \textit{If a real-valued process $\{z_t\}_{t \geq 1}$ satisfies the $(s,\sigma, p)$-BMSB condition, then}
\begin{align*}
    \Prob (\sum_{i=1}^t z_i^2 \leq \frac{p^2 \sigma}{8}s\lfloor t/s \rfloor) \leq \exp\Big(-\frac{\lfloor t/s \rfloor p^2}{8}\Big).
\end{align*}
\end{lemm}
For a fixed $\omega \in \mathcal{S}^{l-1}$, the process $\{\omega^\top f_t\}_{t \geq 1}$ satisfies $(s, \omega^\top \Gamma_{s/2} \omega, p)$. Using the above lemma, we have 

\begin{align*}
    \Prob \Big(\omega^\top \big( \sum_{i=1}^t f_i f_i^\top \big) \omega \leq \frac{p^2 \omega^\top \Gamma_{s/2} \omega }{8} s \lfloor t/s \rfloor \Big) \leq \exp\Big(-\frac{\lfloor t/s \rfloor p^2}{8}\Big).
\end{align*}
For large enough $t$, we can convert this high probability bound to obtain a uniform L\"owner lower bound on $Z_t$ by a discretization argument. 

Given a regularization parameter $\alpha > 0$, define
\begin{align*}
    \Gamma_{\min} & = \alpha I + \frac{s \lfloor t/s \rfloor p^2  \Gamma_{s/2}}{8}
\end{align*}
Define the following events
\begin{align*}
    \mathcal{E}_1 = \Big\{ Z_t \succeq \frac{\Gamma_{\min}}{2} \Big\} \quad \text{and} \quad 
    \mathcal{E}_2 = \Big\{ Z_t \preceq \Gamma_{\max} \Big\}.
\end{align*}
We have $\Prob(\mathcal{E}^c_1 ) \leq \Prob(\mathcal{E}^c_1 \cap \mathcal{E}_2) + \Prob(\mathcal{E}^c_2)$, where $\Prob(\mathcal{E}^c_2)$ is bounded by $\delta/2$ according to Lemma \ref{lemma:high_probability_upper_bound_ftftT}. Let $\mathcal{S}_{\Gamma_{\text{sb}}} = \{\omega: \omega^\top  \Gamma_{\text{sb}} \omega = 1\}$ and let $\mathcal{T}$ be a $1/4$-net of $\mathcal{S}_{\Gamma_{\text{sb}}}$ in the norm $\|\Gamma^{1/2}_{\max}(.)\|_2$. By Lemma 4.1 and Lemma D.1 in \citet{simchowitz2018learning}, we can write
\begin{align*}
    \Prob(\mathcal{E}^c_1 \cap \mathcal{E}_2) & =  \Prob \Big( \Big\{ Z_t \nsucceq \frac{\Gamma_{\min}}{2} \Big\} \cap \Big\{ Z_t \preceq \Gamma_{\max} \Big\} \Big)\\
    & \leq \Prob \Big( \Big\{ \exists \omega \in \mathcal{T}: \|Z_t\omega\|^2 <  \omega^\top \Gamma_{\min} \omega \Big\} \cap \Big\{ Z_t \preceq \Gamma_{\max} \Big\} \Big)\\
    & \leq \exp \Big(-\frac{\lfloor t/s\rfloor p^2}{8} + \log \det(\Gamma_{\max} \Gamma_{\min}^{-1})\Big)\\
    & \leq \exp \Big(-\frac{t p^2}{10s}+ \log \frac{\det(\Gamma_{\max})}{\alpha^l}\Big)
\end{align*}
Setting $s$ such that the above probability is bounded by $\delta/2$
\begin{align*}
    s \leq \frac{tp^2/10}{\log \det(\Gamma_{\max}) - l \log (\alpha) + \log(2/\delta)},
\end{align*}
we conclude that  $\Prob(\mathcal{E}^c_1 ) \leq \delta/2 + \delta/2 = \delta$.
\end{proof}

\paragraph{Step 3.} So far we have computed a lower bound on $Z_t$ and an upper bound on $f_tf_t^\top$ and our goal is to show that there exists $c_T \asymp_M \polylog(T)$ such that $Z_t - \frac{1}{c_T} f_tf_t^\top \succeq 0$. This inequality, however, does not hold for any set of orthonormal filters $\phi_1, \dots, \phi_k$. We identify an assumption connecting filters with transition matrix $A$ that ensures $Z_t - \frac{1}{c_T} f_tf_t^\top \succeq 0$. This assumption is based on a \textit{filter quadratic function}, which we restate below.

\begin{definition}\normalfont{\textbf{(Filter quadratic function)}} Let $\phi_1, \dots, \phi_k$ for $k \leq T$ be a set of $T$-dimensional vectors, let $\psi_i = [\phi_1(i), \dots, \phi_k(i)]^\top$ be a $k$-dimensional vector, and let $\psi_i^{(d)} = \psi_i \otimes I_d$, for any $d \geq 1$. For any matrix $A \in \mathbb{R}^{d \times d}$, the following matrix is called the \textit{filter quadratic function} of $\psi$ with respect to $A$
\begin{align*}
    \Omega_t(A; \psi) & = (\psi^{(d)}_1) (\psi^{(d)}_1)^\top + (\psi^{(d)}_2 + \psi^{(d)}_1 A) (\psi^{(d)}_2 + \psi^{(d)}_1 A)^\top + \dots\\
    & + (\psi^{(d)}_{t-1} + \dots + \psi^{(d)}_1 A^{t-2}) (\psi^{(d)}_{t-1}  + \dots + \psi^{(d)}_1 A^{t-2})^\top.
\end{align*}
\end{definition}

In the following lemma, we show that a condition on filter quadratic function implies $t \Gamma_{s/2} - \Gamma_{t+1}/c_0 \succeq 0$ for a constant $c_0$.
\begin{lemm}\normalfont{\textbf{(Filter quadratic condition)}} \label{lemma:filter_quadratic_condition} 
\textit{Assume as in Lemma \ref{lemma:covariance_is_increasing} and let $\kappa$ be the maximum condition number of $Q$ and $R$. For any $A$, if there exists $t_0 \geq 1$ for which there exists $s$ such that 
\begin{align*}
    t \Omega_{s/2}(A; \psi) - \Omega_{t+1}(A; \psi) \succeq 0, \qquad \forall t \geq t_0,
\end{align*}
then $t \Gamma_{s/2} - \Gamma_{t+1}/c_0 \succeq 0$, where $c_0 \geq \kappa$.}
\end{lemm}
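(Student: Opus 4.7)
My plan is to decompose $\Gamma_i$ into a process-noise piece and a measurement-noise piece in the spirit of the covariance expansion derived in the previous lemma, and then compare each piece to the hypothesis $t\Omega_{s/2}(A;\psi)\succeq\Omega_{t+1}(A;\psi)$ after absorbing the extreme eigenvalues of $Q$ and $R$ into the constant $c_0$. Using the mixed-product identity $\psi_\ell\otimes CA^{j-\ell}=(I_k\otimes C)\,\psi_\ell^{(d)}A^{j-\ell}$ together with $\Delta_j=\psi_j^{(d)}+\psi_{j-1}^{(d)}A+\dots+\psi_1^{(d)}A^{j-1}$, I would first rewrite
\begin{align*}
\Gamma_i \;=\; (I_k\otimes C)\Bigl[\sum_{j=1}^{i-1}\Delta_j Q\Delta_j^\top\Bigr](I_k\otimes C)^\top \;+\; \Bigl(\sum_{j=1}^{i-1}\psi_j\psi_j^\top\Bigr)\otimes R,
\end{align*}
so that $\sum_j\Delta_j\Delta_j^\top=\Omega_i(A;\psi)$ appears explicitly on the $Q$-side and the $R$-side has a clean Kronecker factorisation.

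For the $Q$-piece, I would apply $q_{\min}I\preceq Q\preceq q_{\max}I$ to obtain the sandwich $q_{\min}\Delta_j\Delta_j^\top\preceq\Delta_j Q\Delta_j^\top\preceq q_{\max}\Delta_j\Delta_j^\top$, sum over $j$, and conjugate by $(I_k\otimes C)$ (which preserves L\"owner inequalities). The $Q$-piece claim then collapses to $t\Omega_{s/2}(A;\psi)\succeq (\kappa_Q/c_0)\Omega_{t+1}(A;\psi)$, which follows directly from the hypothesis as soon as $c_0\geq\kappa_Q=q_{\max}/q_{\min}$. For the $R$-piece, I would instantiate the hypothesis at $A=0$, where $\Omega_i(0;\psi)=(\sum_{j=1}^{i-1}\psi_j\psi_j^\top)\otimes I_d$, so that $t\Omega_{s/2}(0;\psi)\succeq\Omega_{t+1}(0;\psi)$ is precisely $t\sum_{j=1}^{s/2-1}\psi_j\psi_j^\top\succeq\sum_{j=1}^{t}\psi_j\psi_j^\top$; combining this with $r_{\min}I\preceq R\preceq r_{\max}I$ and the mixed-product rule for Kronecker products handles the $R$-piece as long as $c_0\geq\kappa_R$. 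Summing the two PSD inequalities and choosing $c_0=\max(\kappa_Q,\kappa_R)\leq\kappa$ yields the claimed bound on $t\Gamma_{s/2}-\Gamma_{t+1}/c_0$.

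The main obstacle is handling the $R$-piece cleanly: the hypothesis as written refers to the specific $A$ of the underlying LDS, whereas $\Gamma^{(R)}$ carries no $A$-dependence. My proposal reads the opening phrase ``For any $A$'' as a universal quantifier on $A$, which permits specialising to $A=0$ and recovering exactly the inequality on the partial sums $\sum\psi_j\psi_j^\top$ that is needed. If instead the lemma hypothesis is only to be invoked for one specific $A$, a backup plan is to exploit the orthonormality of the Hankel spectral filters---which forces $\sum_{j=1}^{T}\psi_j\psi_j^\top=I_k$---together with a direct comparison of partial sums at the indices $s/2-1$ and $t$, picking up an extra multiplicative constant that can still be absorbed into the $c_0$ budget. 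Either way, the two noise channels decouple and the rest of the argument reduces to short condition-number bookkeeping for $Q$ and $R$.
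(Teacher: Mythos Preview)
Your proposal is correct and follows essentially the same approach as the paper: decompose $\Gamma_i$ into a $Q$-piece and an $R$-piece, factor the $Q$-piece through $(I_k\otimes C)$ to expose $\Omega_i(A;\psi)$, sandwich $Q$ and $R$ by their extreme eigenvalues, and---exactly as you surmised---invoke the hypothesis at $A=0$ to handle the $R$-piece. The only cosmetic difference is that the paper chooses the single constant $c_0=\max\{\|Q\|_2,\|R\|_2\}/\min\{\sigma_{\min}(Q),\sigma_{\min}(R)\}\ge\kappa$ rather than your tighter $c_0=\max(\kappa_Q,\kappa_R)=\kappa$, but both satisfy the stated conclusion.
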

\begin{proof}
Let $\psi^{(m)}_i = \psi_i \otimes I_m$. Recall the expression of the conditional covariance of $f_t$ given in \eqref{eq:conditional_cov_ft}:
\begin{align*}
    \Gamma_t & = (\psi^{(m)}_1 C) Q (\psi^{(m)}_1 C)^\top\\
    & + (\psi^{(m)}_2 C + \psi^{(m)}_1 CA) Q (\psi^{(m)}_2 C + \psi^{(m)}_1 CA)^\top\\
    & + \dots \\
    & + (\psi^{(m)}_{t-1} C + \dots + \psi^{(m)}_1 CA^{t-2}) Q (\psi^{(m)}_{t-1} C + \dots + \psi^{(m)}_1 CA^{t-2})^\top \\
    & + \psi^{(m)}_1 R (\psi^{(m)}_1)^\top + \dots + \psi^{(m)}_{t-1} R (\psi^{(m)}_{t-1})^\top
\end{align*}
Define the following terms 
\begin{align*}
    \Gamma^{(Q)}_t & \triangleq (\psi^{(m)}_1 C) Q (\psi^{(m)}_1 C)^\top + \dots + (\psi^{(m)}_{t-1} C + \dots + \psi^{(m)}_1 CA^{t-2}) Q (\psi^{(m)}_{t-1} C + \dots + \psi^{(m)}_1 CA^{t-2})^\top, \\
    \Gamma^{(R)}_t & \triangleq \psi^{(m)}_1 R (\psi^{(m)}_1)^\top + \dots + \psi^{(m)}_{t-1} R (\psi^{(m)}_{t-1})^\top,
\end{align*}
where $\Gamma_t = \Gamma^{(Q)}_t+ \Gamma^{(R)}_t$. In order to show $t \Gamma_{s/2} - \Gamma_{t+1}/c_0 \succeq 0$, it is sufficient to show 
\begin{align*}
    t \Gamma^{(Q)}_{s/2} - \frac{1}{c_0} \Gamma^{(Q)}_{t+1} \succeq 0 \quad \text{and} \quad t \Gamma^{(R)}_{s/2} - \frac{1}{c_0} \Gamma^{(R)}_{t+1} \succeq 0.
\end{align*}
Let $R_C = \max \{\|R\|_2, \|Q\|_2\}$ and $\sigma_r = \min\{\sigma_{\min}(Q), \sigma_{\min}(R)\}$. For $t \Gamma^{(R)}_{s/2} - \frac{1}{c_0} \Gamma^{(R)}_{t+1}$, we have 
\begin{align*}
    & \sigma_r [\psi^{(m)}_1 (\psi^{(m)}_1)^\top + \dots + \psi^{(m)}_{t} (\psi^{(m)}_{t})^\top] \\
    \preceq & \psi^{(m)}_1 R (\psi^{(m)}_1)^\top + \dots + \psi^{(m)}_{t} R (\psi^{(m)}_{t})^\top \\
    \preceq & R_C [\psi^{(m)}_1 (\psi^{(m)}_1)^\top + \dots + \psi^{(m)}_{t} (\psi^{(m)}_{t})^\top].
\end{align*}
Setting $c_0 = R_C/\sigma_r$, gives
\begin{align*}
    & t \Gamma^{(R)}_{s/2} - \frac{1}{c_0} \Gamma^{(R)}_{t+1}\\ 
    \succeq & \sigma_r t[\psi^{(m)}_1 (\psi^{(m)}_1)^\top + \dots + \psi^{(m)}_{s/2-1} (\psi^{(m)}_{s/2-1})^\top] - \sigma_r [\psi^{(m)}_1 (\psi^{(m)}_1)^\top + \dots + \psi^{(m)}_{t} (\psi^{(m)}_{t})^\top] \succeq 0.
\end{align*}
The last matrix is positive semi-definite based on assumption \eqref{eq:filter_quadratic_assumption} when $A = 0$. For $t \Gamma^{(Q)}_{s/2} - \frac{1}{c_0} \Gamma^{(Q)}_{t+1}$, write
\begin{align*}
    \psi^{(m)}_i C = \begin{bmatrix}
    \phi_i^1 C\\
    \phi_i^2 C\\
    \vdots\\
    \phi_i^k C
    \end{bmatrix}_{km \times d} = \begin{bmatrix}
    C & 0 & \dots & 0\\
    0 & C & \dots & 0\\
    \vdots\\
    0 & 0 & \dots & C
    \end{bmatrix}_{km \times kd}
    \begin{bmatrix}
    \phi_i^1 I_d \\
    \phi_i^2 I_d\\
    \vdots\\
    \phi_i^k I_d
    \end{bmatrix}_{kd \times d} = \mathbf{C} \psi_i^{(d)}.
\end{align*}
We have 
\begin{align*}
    \Gamma^{(Q)}_{t+1} = \mathbf{C} \Big[ \psi^{(d)}_1 Q (\psi^{(d)}_1)^\top + \dots + (\psi^{(d)}_{t} + \dots + \psi^{(d)}_1 A^{t-1}) Q (\psi^{(d)}_{t}  + \dots + \psi^{(d)}_1 A^{t-1})^\top \Big] \mathbf{C}^\top  
\end{align*}
By a similar argument and given assumption \eqref{eq:filter_quadratic_assumption}, we have $t \Gamma^{(Q)}_{s/2} - \frac{1}{c_0} \Gamma^{(Q)}_{t+1} \succeq 0$.
\end{proof}
\begin{remark} \normalfont When $A$ is symmetric ($A = U D U^\top$), the positive semi-definite condition filter quadratic function can be further simplified to $t \Omega_{s/2}(D; \psi) - \Omega_{t+1}(D; \psi) \succeq 0$ for all diagonal matrices $D$ with $|D_{ii}| \leq 1$.
\end{remark}

In the following lemma, we show a high probability upper bound on $\|Z_t^{-1/2} f_{t+1}\|_2$.  
\begin{lemm}\label{lemma:least_squares_second_term} \normalfont{\textbf{($\mathbf{\|Z_t^{-1/2} f_{t+1}\|_2}$ upper bound)}} \textit{Assume as in Lemma \ref{lemma:covariance_is_increasing} and let $\kappa$ be the maximum condition number of $Q$ and $R$. Define the following for all $t \geq 1$, regularization parameter $\alpha > 0$, $p = 3/20$, and fix $0 < \alpha_0 \leq 200 \alpha$ and $\delta > 0$}
\begin{align*}
    Z_t = \alpha I + \sum_{i=1}^t f_i f_i^\top, \quad
    \Gamma_{\max} = t [2km + 4 \log(4/\delta)][\alpha_0 I + \Gamma_t], \quad
    \Gamma_{\min} = \alpha I + \frac{s \lfloor t/s \rfloor p^2  \Gamma_{s/2}}{8}
\end{align*}
\textit{
For any $A$, suppose that there exists $t_0 \geq 1$ for which there exists $s$ such that }
\begin{align}\label{eq:filter_quadratic_assumption}
    s \leq \frac{tp^2/10}{\log \det(\Gamma_{\max}) - l \log (\alpha) + \log(4/\delta)}, \quad t \Omega_{s/2}(A; \psi) - \Omega_{t+1}(A; \psi) \succeq 0. \quad 
\end{align}
\textit{Then, for all $t \geq t_0$ with probability at least $1-\delta$}
\begin{align*}
    \|Z_{t-1}^{-1/2}f_t\|_2^2 \leq &  10\kappa (2mk + 4 \log(2/\delta))/p^2.
\end{align*}
\end{lemm}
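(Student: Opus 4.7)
The plan is to assemble the three ingredients prepared in Steps 1-3 of the excerpt into a single Loewner inequality of the form $c_T Z_{t-1} \succeq f_t f_t^\top$, and then invoke the Schur complement identity to rewrite this as $\|Z_{t-1}^{-1/2} f_t\|_2^2 \leq c_T$. Since there are no inputs and the latent state begins at stationarity, $f_t$ is a zero-mean Gaussian vector in $\mathbb{R}^{km}$ with covariance $\Gamma_t$; applying Lemma~\ref{lemma:high_probability_upper_bound_ftftT} with $l=km$ and the given $\alpha_0$ therefore gives, with probability at least $1-\delta/2$,
\begin{align*}
  f_t f_t^\top \preceq (2km + 4\log(2/\delta))\,(\alpha_0 I + \Gamma_t).
\end{align*}

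Next, Lemma~\ref{lemma:small_ball_condition_ft} shows that $\{f_t\}_{t\geq 1}$ obeys the $(s,\Gamma_{s/2},p=3/20)$-BMSB condition, so Lemma~\ref{lemma:lower_bound_Z_t} applies under the hypothesis (\ref{eq:filter_quadratic_assumption}) on $s$ to yield, again with probability at least $1-\delta/2$,
\begin{align*}
  Z_{t-1} \succeq \tfrac{\alpha}{2} I + \tfrac{s\lfloor (t-1)/s\rfloor p^2}{16}\,\Gamma_{s/2}.
\end{align*}
The third ingredient is the filter-quadratic condition in (\ref{eq:filter_quadratic_assumption}): combining it with Lemma~\ref{lemma:filter_quadratic_condition} furnishes the crucial comparison $\Gamma_{t} \preceq \kappa (t-1)\,\Gamma_{s/2}$, which converts the ``high-index'' covariance that appears in the upper bound into a multiple of the ``low-index'' covariance that appears in the lower bound.

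A union bound over the two failure events (total budget $\delta$) and substitution of this comparison produces
\begin{align*}
  f_t f_t^\top \preceq (2km + 4\log(2/\delta))\bigl(\alpha_0 I + \kappa(t-1)\,\Gamma_{s/2}\bigr).
\end{align*}
With $c_T \triangleq 10\kappa(2mk + 4\log(2/\delta))/p^2$, the identity component is absorbed into $c_T \alpha/2$ using the standing hypotheses $\alpha_0 \leq 200\alpha$ and $\kappa\geq 1$, while the $\Gamma_{s/2}$ component is absorbed into $c_T\,\tfrac{s\lfloor(t-1)/s\rfloor p^2}{16}\,\Gamma_{s/2}$ using $s\lfloor(t-1)/s\rfloor \asymp t-1$, a bound guaranteed once $t \geq t_0$ (so that $t-1$ comfortably exceeds $s$). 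The two absorptions yield $c_T Z_{t-1} - f_t f_t^\top \succeq 0$, from which the Schur complement lemma delivers $\|Z_{t-1}^{-1/2} f_t\|_2^2 \leq c_T$, as claimed.

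The principal conceptual obstacle is the third ingredient: the lower bound on $Z_{t-1}$ from the BMSB argument controls $\Gamma_{s/2}$ for a small window $s$, whereas the Gaussian tail bound on $f_t f_t^\top$ produces the covariance $\Gamma_t$ that grows with $t$ and is not directly comparable in the Loewner order to $\Gamma_{s/2}$. Without the filter-quadratic hypothesis (and the translation from $\Omega$ to $\Gamma$ in Lemma~\ref{lemma:filter_quadratic_condition} that decouples the data-dependent noise covariances $Q,R$ from the filter part $\psi$), there is no direct way to merge the two bounds, and one only obtains a growing-in-$t$ estimate that would not be sufficient for the subsequent $\polylog(T)$ regret. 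Secondary issues are careful constant-matching so that a single $c_T$ dominates both the identity and the $\Gamma_{s/2}$ contributions simultaneously, and correctly allocating the failure probability $\delta$ between the two high-probability events.
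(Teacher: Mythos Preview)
Your proposal is correct and follows essentially the same approach as the paper: upper-bound $f_tf_t^\top$ via the Gaussian tail lemma, lower-bound $Z_{t-1}$ via the BMSB lemma, bridge the two covariances $\Gamma_t$ and $\Gamma_{s/2}$ through the filter-quadratic condition (Lemma~\ref{lemma:filter_quadratic_condition}), and conclude via the Schur complement equivalence. The paper's proof is written with the index shift $Z_t$ and $f_{t+1}$ but is otherwise identical, and your treatment of the constant matching and the $\delta$-budget split is in fact slightly more explicit than the paper's.
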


\begin{proof}
Let $c_T = 10\kappa (2mk + 4 \log(2/\delta))/p^2 $. With probability at least $1-\delta$, we lower bound $\sum_{i=1}^t f_i f_i^\top$ by Lemma \ref{lemma:lower_bound_Z_t} and upper bound $\frac{1}{c_T} f_{t+1}f_{t+1}^\top$ by Lemma \ref{lemma:high_probability_upper_bound_ftftT}
\begin{align*}
    Z_t - \frac{1}{c_T} f_{t+1}f_{t+1}^\top & = \alpha I + \sum_{i=1}^t f_i f_i^\top - \frac{1}{c} f_{t+1}f_{t+1}^\top \\
    & \succeq \frac{\alpha}{2} I + \frac{p^2}{10} t \Gamma_{s/2} - \frac{p^2}{10}\alpha_0 I - \frac{p^2}{10} \frac{1}{c_0}\Gamma_{t+1}\\
    & \overset{\mathrm{(1)}}{\succeq} + \frac{p^2}{10} t \Gamma_{s/2} - \frac{p^2}{10} \frac{1}{c_0}\Gamma_{t+1}\\
    & \overset{\mathrm{(2)}}{\succeq} 0
\end{align*}
where inequality (1) is due to the assumption $\alpha_0 \leq 200 \alpha$ and (2) uses the result of Lemma \ref{lemma:filter_quadratic_condition}.

Using Schur complement lemma, $Z_t - \frac{1}{c_T} f_{t+1}f_{t+1}^\top$ is positive semi-definite if and only if the following matrix is positive semi-definite
\begin{align*}
    \begin{bmatrix}
    Z_t & f_{t+1}\\
    f_{t+1}^\top & c_T.
    \end{bmatrix}.
\end{align*}
Using the other Schur complement, this is true if and only if $c_T - f_{t+1}^\top Z^{-1}_t f_t \geq 0$. Equivalently,
\begin{align*}
    Z_t - \frac{1}{c_T} f_{t+1}f_{t+1}^\top \succeq 0 \quad \Leftrightarrow \quad \|Z_t^{-1/2} f_{t+1}\|_2 \leq c_T,
\end{align*}
which concludes the proof.
\end{proof}
The above lemma states that if $k \asymp_M \polylog(T)$ then $\|Z_{t-1}^{-1/2} f_t\|_2^2 \lesssim_M \polylog(T)$ with high probability.

\subsection{Proof of Lemma \ref{lemma:excitation_result}}\label{app:proof_lemma_1}
 
We now prove that $\|Z_{t-1}^{-1/2} f_t\|_2^2 \lesssim_M \polylog(T)$ implies $\sum_{t=1}^T \|Z_{t-1}^{-1/2} f_t\|_2^2 \lesssim_M \polylog(T)$. We first present a lemma inspired by Lemma 2 of \citet{lai1982least}.
\begin{lemm} \normalfont{\textbf{(Upper bound on $\mathbf{\sum_{i=1}^t \|Z_i^{-1/2} f_i\|_2^2}$)}}\label{lemma:poly_log_dependency_lai_lemma} \textit{Let $f_1, \dots, f_t$ be $l$-dimensional vectors and $Z_0$ an $l \times l$ positive definite matrix. Define $Z_t = Z_0 + \sum_{i=1}^t f_if_i^\top$. Then,}
\begin{align*}
    \sum_{i=1}^t f_i^\top  Z_i^{-1} f_i \leq \log\Big(\frac{\det(Z_t)}{\det(Z_0)}\Big).
\end{align*}
\end{lemm}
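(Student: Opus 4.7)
The plan is to exploit the rank-one update structure $Z_i = Z_{i-1} + f_i f_i^\top$ and telescope. Writing $u_i \triangleq f_i^\top Z_{i-1}^{-1} f_i$, two classical identities will do all the work: the matrix determinant lemma, which gives $\det(Z_i) = \det(Z_{i-1})(1+u_i)$, and the Sherman--Morrison formula, which gives $Z_i^{-1} = Z_{i-1}^{-1} - Z_{i-1}^{-1} f_i f_i^\top Z_{i-1}^{-1}/(1+u_i)$.

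First I would substitute the Sherman--Morrison expression to rewrite the summand,
\[
    f_i^\top Z_i^{-1} f_i \;=\; u_i - \frac{u_i^2}{1+u_i} \;=\; \frac{u_i}{1+u_i}.
\]
Next I would invoke the elementary bound $x/(1+x) \leq \log(1+x)$ valid for all $x \geq 0$, which is immediate from $g(x) = \log(1+x) - x/(1+x)$ satisfying $g(0)=0$ and $g'(x) = x/(1+x)^2 \geq 0$. This yields $f_i^\top Z_i^{-1} f_i \leq \log(1+u_i)$.

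Finally, I would telescope using the matrix determinant lemma: $\log(1+u_i) = \log\det(Z_i) - \log\det(Z_{i-1})$, so
\[
    \sum_{i=1}^t f_i^\top Z_i^{-1} f_i \;\leq\; \sum_{i=1}^t \bigl[\log\det(Z_i) - \log\det(Z_{i-1})\bigr] \;=\; \log\frac{\det(Z_t)}{\det(Z_0)},
\]
which is the claimed bound. Since $Z_0 \succ 0$ and each $f_i f_i^\top \succeq 0$ ensures all the $Z_i$ are positive definite (so the inverses and $\log\det$ are well defined and $u_i \geq 0$), there is no real obstacle in the argument; the only subtlety worth flagging is that the inequality $x/(1+x) \leq \log(1+x)$ is tight at $x=0$ and loses a factor of roughly $2$ for moderate $x$, which is acceptable here since the bound is only used up to polylogarithmic factors in the regret analysis.
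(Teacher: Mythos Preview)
Your proof is correct and follows essentially the same route as the paper: matrix determinant lemma, the elementary inequality $1-x \leq \log(1/x)$ (equivalently $x/(1+x) \leq \log(1+x)$), and telescoping. The only cosmetic difference is that the paper applies the determinant lemma directly in the downdate form $\det(Z_{i-1}) = \det(Z_i)(1 - f_i^\top Z_i^{-1} f_i)$ to read off $f_i^\top Z_i^{-1} f_i = 1 - \det(Z_{i-1})/\det(Z_i)$, whereas you go through Sherman--Morrison first; both computations yield the same identity.
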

\begin{proof}
First, note that $Z_t$ is positive definite and has a positive determinant for all $t \geq 1$. Using matrix determinant lemma, we have 
\begin{align*}
    \det(Z_{t-1}) = \det(Z_t - f_tf_t^\top ) = \det(Z_t)(1-f_t^\top  Z^{-1}_t f_t) \Rightarrow f_t^\top   Z^{-1}_t f_t = \frac{\det(Z_t) - \det(Z_{t-1})}{\det(Z_t)}
\end{align*}
Since $Z_i \succeq Z_{i-1}$, we have $\det(Z_i) \geq \det(Z_{i-1})$. We write
\begin{align*}
    \sum_{i=1}^t f_i^\top  Z_i^{-1} f_i = \sum_{i=1}^t 1 - \frac{\det(Z_{i-1})}{\det(Z_i)} \leq \sum_{i=1}^t \log \Big( \frac{\det(Z_{i})}{\det(Z_{i-1})}\Big) = \log\Big(\frac{\det(Z_t)}{\det(Z_0)}\Big), 
\end{align*}
where we used the fact that $1-x \leq \log(1/x)$ for $x \leq 1$.
\end{proof}

We are now ready to prove Lemma \ref{lemma:excitation_result}.

\begin{prooflemma} The first claim is already proved in Lemma \ref{lemma:small_ball_condition_ft}. We focus on proving the second claim.  Recall the result of Lemma \ref{lemma:poly_log_dependency_lai_lemma}, which states that 
\begin{align*}
    \sum_{t=1}^T f_t^\top Z_t^{-1}f_t \leq \log \Big(\frac{\det(Z_T)}{\det(\alpha I)}\Big). 
\end{align*}
Using matrix determinant lemma, the above is equivalent to 
\begin{align*}
    \sum_{t=1}^T \frac{f_t^\top  Z_{t-1}^{-1} f_t}{1+f_t^\top  Z_{t-1}^{-1} f_t} \leq \log \Big(\frac{\det(Z_T)}{\det(\alpha I)}\Big).
\end{align*}
By Lemma \ref{lemma:high_probability_upper_bound_on_det_Z_t}, $\log \det(Z_t)$ is bounded by $\polylog(T)$ with high probability since $k \asymp_M \polylog(T)$. Furthermore, by Lemma \ref{lemma:least_squares_second_term}, $\|Z_{t-1}^{-1/2} f_t\|_2^2 \lesssim_M \polylog(T)$ with high probability. Concretely, 
\begin{align*}
    & \Prob(\|Z_{t-1}^{-1/2}f_t\|_2^2 \leq 10\kappa (2mk + 4 \log(2/\delta))/p^2) \geq 1 - \delta,\\
    & \Prob \Big(\log (\det(Z_t)) \leq mk \log \Big[ \alpha^2 + 8k (R_P^2+1)(R_x^2+R_C) (1+\gamma)^4 (mt - \log(\delta)) t^{3+2\log (\gamma)} \Big] \Big) \geq 1- \delta.
\end{align*}
Therefore, we can apply Lemma \ref{lemma:poly_log_sum} by combining the two bounds and taking a union bound
\begin{align*}
    & R_Z(T) \triangleq mk \log \Big[ \alpha^2 + 8k (R_P^2+1)(R_x^2+R_C) (1+\gamma)^4 (mT - \log(\delta)) T^{3+2\log (\gamma)} \Big],\\
    & \Prob \Big\{ \sum_{t=1}^T \|Z_{t-1}^{-1/2}f_t\|_2^2 \leq  \Big(1+\frac{10\kappa (2mk + 4 \log(4/\delta)}{p^2}\Big) \big(R_Z(T) - mk \log (\alpha) \big) \Big\} \geq 1-\delta.
\end{align*}
\end{prooflemma}
\subsection{Regularization term}\label{app:regularization_term}
The following lemma computes an upper bound on the 2-norm of the relaxed model parameters $\widetilde{\Theta}$.
\begin{lemm}\label{lemma:pca_parameter_bound} \normalfont{\textbf{(Model parameter bound)}} \textit{Consider system \eqref{eq:lds_definition} and let $k$ be the number of spectral filters and $\widetilde{\Theta}$ be the parameters defined in \eqref{eq:pca_parameters}. If $\|\cO_t\|_2, \|\mathcal{C}_t\|_2 \leq R_K$ and $\|D\|_2 \leq R_P$ then, 
\begin{align*}
    \|\widetilde{\Theta}\|_2 \leq 2kR_K + R_P.
\end{align*}}
\end{lemm}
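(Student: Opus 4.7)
The plan is to reduce the operator-norm bound on $\widetilde{\Theta}$ to operator-norm bounds on the exact Kalman coefficient matrices $\cO_T$ and $\cC_T$ by matching the block structure of $\widetilde{\Theta}$ in \eqref{eq:pca_parameters} to the block structure of $\cO_T, \cC_T$, and then finishing via the triangle inequality on horizontal concatenation. Writing $\widetilde{\Theta} = [\Theta^{(y)} \mid \Theta^{(x)} \mid D]$ with $\Theta^{(y)} = [a_1 \mid \dots \mid a_k] \in \mathbb{R}^{m \times mk}$, $\Theta^{(x)} = [c_1 \mid \dots \mid c_k] \in \mathbb{R}^{m \times nk}$, and
\[
a_j = \sum_{i=1}^d \langle \mu(\lambda_i)^\top, \phi_j\rangle\, C v_i w_i^\top K, \qquad c_j = \sum_{i=1}^d \langle \mu(\lambda_i)^\top, \phi_j\rangle\, C v_i w_i^\top (B-KD),
\]
the block-triangle inequality yields $\|\widetilde{\Theta}\|_2 \leq \|\Theta^{(y)}\|_2 + \|\Theta^{(x)}\|_2 + \|D\|_2 \leq \sum_{j=1}^k \|a_j\|_2 + \sum_{j=1}^k \|c_j\|_2 + R_P$, so it suffices to prove $\|a_j\|_2,\|c_j\|_2 \leq R_K$ for every $j \in \{1,\ldots,k\}$.

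The key algebraic step I would carry out first is to expand $\langle \mu(\lambda_i)^\top, \phi_j\rangle = \sum_{l=1}^T \lambda_i^{l-1}\phi_j(l)$, interchange the order of summation in $i$ and $l$, and apply the eigendecomposition identity $\sum_{i=1}^d \lambda_i^{l-1} v_i w_i^\top = G^{l-1}$ to obtain the exact representations
\[
a_j = \sum_{l=1}^T \phi_j(l)\, C G^{l-1} K, \qquad c_j = \sum_{l=1}^T \phi_j(l)\, C G^{l-1} (B-KD).
\]
Thus $a_j$ and $c_j$ are exactly $\phi_j$-weighted linear combinations of the Markov parameters $\{CG^{l-1}K\}_l$ and $\{CG^{l-1}(B-KD)\}_l$ that form the blocks of $\cO_T$ and $\cC_T$, respectively. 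To bound $\|a_j\|_2$, for any unit vector $v \in \mathbb{R}^m$ I would set $w \triangleq \phi_j^{\mathrm{rev}} \otimes v \in \mathbb{R}^{Tm}$, where $\phi_j^{\mathrm{rev}}$ is $\phi_j$ with its entries in reverse order; this has $\|w\|_2 = \|\phi_j\|_2 \|v\|_2 = 1$ because $\phi_j$ is a unit eigenvector. A direct block computation using $\cO_T = [CG^{T-1}K \mid \dots \mid CK]$ gives $\cO_T w = \sum_{l=1}^T \phi_j(l)\, CG^{l-1}K v = a_j v$, so $\|a_j v\|_2 \leq \|\cO_T\|_2 \|w\|_2 \leq R_K$. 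Taking the supremum over $v$ yields $\|a_j\|_2 \leq R_K$, and the identical argument applied to $\cC_T$ (with $w \in \mathbb{R}^{Tn}$) produces $\|c_j\|_2 \leq R_K$. Substituting back into the triangle inequality gives $\|\widetilde{\Theta}\|_2 \leq 2kR_K + R_P$.

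The only subtle point is the indexing bookkeeping required to verify the identity for $a_j$ and $c_j$: one must line up the natural ordering of $\phi_j$ against the descending powers of $G$ appearing in the block structure of $\cO_T$ and $\cC_T$, which is exactly why the reversal $\phi_j^{\mathrm{rev}}$ has to appear when constructing the test vector $w$. Once this identity is in place, the rest of the proof is a routine application of the triangle inequality and the submultiplicative property of the operator norm; notably, neither orthogonality among distinct $\phi_j$'s nor the approximation guarantee from Theorem \ref{thm:kwidthresult} is required for this particular bound.
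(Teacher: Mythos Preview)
Your proof is correct and follows essentially the same route as the paper: decompose $\widetilde{\Theta}$ into its $2k+1$ blocks, apply the triangle inequality, and recognize each block $a_j$ (respectively $c_j$) as a $\phi_j$-weighted combination of the Markov parameters, hence equal to $\cO_T$ (respectively $\cC_T$) applied to a unit-norm vector. The paper states this last step more tersely as $\|a_j\|_2 = \|\cO_T\,\phi_j\|_2 \leq R_K$ (implicitly using $\phi_j \otimes I_m$), whereas you spell out the reversal bookkeeping with $\phi_j^{\mathrm{rev}} \otimes v$; this extra care is warranted and in fact cleaner than the paper's somewhat loose indexing.
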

\begin{proof}
Parameter matrix $\widetilde{\Theta}$ is the concatenation of coefficients of features $\widetilde{y}_{t-1}, \widetilde{x}_{t-1}, x_t$. By matrix norm properties,
\begin{align*}
    \|\widetilde{\Theta}\|_2 \leq \|D\|_2 + \sum_{j=1}^k \|\sum_{i=1}^d C v_i w_i^\top K \langle \mu(\lambda_i), \phi_j \rangle\|_2 + \|\sum_{i=1}^d C v_i w_i^\top (B-KD) \langle \mu(\lambda_i), \phi_j \rangle\|_2.
\end{align*}

Recall that $\{\lambda_i\}_{i=1}^k$, $\{v_i\}_{i=1}^k$, and $\{w_i^\top\}_{i=1}^k$ are the top $k$ eigenvalues, right eigenvectors, and left eigenvectors of $G$, respectively. Write
\begin{align*}
    \|\sum_{i=1}^d C v_i w_i^\top K \langle \mu(\lambda_i), \phi_j \rangle\|_2
    = \|\sum_{t=1}^{T} C G^{T-t} K \phi_j(t)\|_2
    = \|\cO_T \phi_j\|_2
     \leq R_{K},
\end{align*}
and similarly,
\begin{align*}
    \|\sum_{i=1}^d C v_i w_i^\top (B-KD) \langle \mu(\lambda_i), \phi_j \rangle\|_2 
     = \|\mathcal{C}_T \phi_j\|_2
     \leq R_{K}.
\end{align*}
Summing all terms gives the final bound.
\end{proof}

\begin{lemm} \label{lemma:regularization_term_bound} \normalfont{\textbf{(Regularization term bound)}} \textit{Assume as in Lemma \ref{lemma:pca_parameter_bound} and let $Z_t = \alpha I + f_t f_t^\top$. If $\alpha \leq 1/\|\widetilde{\Theta}\|_2^2$, then
\begin{align*}
    \|\alpha \widetilde{\Theta} Z_{t-1}^{-1/2}\|_2^2 \leq 1.
\end{align*}}
\end{lemm}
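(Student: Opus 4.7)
The plan is to establish the bound by a direct submultiplicative estimate on the operator norm, using the regularization lower bound on $Z_{t-1}$ and the hypothesis on $\alpha$.

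First, observe that by construction $Z_{t-1} = \alpha I + \sum_{i=1}^{t-1} f_i f_i^\top \succeq \alpha I$, since every $f_i f_i^\top$ is positive semi-definite. This Löwner inequality immediately implies $Z_{t-1}^{-1} \preceq \alpha^{-1} I$, and consequently $\|Z_{t-1}^{-1/2}\|_2^2 = \|Z_{t-1}^{-1}\|_2 \leq \alpha^{-1}$.

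Next, I apply the submultiplicative property of the operator 2-norm to the product $\alpha \widetilde{\Theta} Z_{t-1}^{-1/2}$:
\begin{align*}
    \|\alpha \widetilde{\Theta} Z_{t-1}^{-1/2}\|_2^2 \leq \alpha^2 \|\widetilde{\Theta}\|_2^2 \|Z_{t-1}^{-1/2}\|_2^2 \leq \alpha^2 \|\widetilde{\Theta}\|_2^2 \cdot \alpha^{-1} = \alpha \|\widetilde{\Theta}\|_2^2.
\end{align*}
Finally, invoking the hypothesis $\alpha \leq 1/\|\widetilde{\Theta}\|_2^2$ yields $\alpha \|\widetilde{\Theta}\|_2^2 \leq 1$, which is the desired bound.

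There is no real obstacle here; the lemma is a two-line consequence of operator-norm submultiplicativity together with the trivial spectral lower bound $Z_{t-1} \succeq \alpha I$ afforded by the Tikhonov-style regularization. The only subtlety worth flagging is that the result is used downstream in combination with Lemma \ref{lemma:pca_parameter_bound} (which bounds $\|\widetilde{\Theta}\|_2 \lesssim kR_K + R_P$) to justify the explicit choice $\alpha \asymp_M (R_\Theta k T^\beta)^{-1}$ stated in the main theorem, so the bound should be recorded in exactly this normalized form to be plug-and-play in the regret decomposition~\eqref{eq:regret_decomp_roadmap}.
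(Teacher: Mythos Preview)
Your proof is correct and follows exactly the same route as the paper: use $Z_{t-1}\succeq \alpha I$ to get $\|Z_{t-1}^{-1/2}\|_2^2\le 1/\alpha$, apply submultiplicativity of the operator norm, and conclude via the hypothesis $\alpha\le 1/\|\widetilde{\Theta}\|_2^2$. The paper's argument is identical, just slightly more terse.
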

\begin{proof}
The regularization term implies $Z_t \succeq \alpha I$ and thus $\|Z_t^{-1/2}\|_2^2 \leq 1/\alpha$. By norm properties
\begin{align*}
    \|\alpha \widetilde{\Theta} Z_{t-1}^{-1/2}\|_2^2 \leq \alpha^2 \|\widetilde{\Theta}\|_2^2 \|Z_{t-1}^{-1/2}\|_2^2 \leq 1.
\end{align*}
\end{proof}

\subsection{Innovation error}\label{app:innovation_error}
The following lemma, based on the analysis given by \citet{tsiamis2020online}, shows that the innovation error is bounded by $\sqrt{\cL(T)}$ (defined in \eqref{eq:risk_definition_L_t}).

\begin{lemm}\label{lemma:innovation_error} \normalfont{\textbf{(Innovation error bound)}} \textit{ Let $\cL(T) = \sum_{t=1}^T \|\hat{m}_t - m_t\|_2^2$ be the squared error between Kalman predictions in hindsight and predictions by Algorithm \ref{alg:slip}. Assume that the innovation covariance matrix has a bounded norm $\|V\|_2 \leq R_V$. For all $\delta > 0$, the following holds with probability greater than $1-\delta$:}
\begin{align*}
       \sum_{t=1}^T 2 e_t^\top (\hat{m}_t - m_t) \leq 8R_V^2 \Big(\cL(T) + 1\Big)^{1/2} \Big[ 2 + \log \Big(\frac{\cL(T) + 1}{\delta}\Big) \Big].
\end{align*}
\end{lemm}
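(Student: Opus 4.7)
The plan is to recognize the sum $S_T \triangleq \sum_{t=1}^T e_t^\top (\hat{m}_t - m_t)$ as a scalar martingale whose increments are conditionally sub-Gaussian with variance proxy governed by $\cL(T)$, and then to apply a self-normalized martingale concentration inequality in the spirit of Theorem \ref{thm:self_normalized_vector_matringale} to obtain a bound of order $\sqrt{\cL(T)+1}$ times a logarithmic factor.

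First I would set up the martingale structure. Let $\mathcal{F}_t$ be the natural filtration containing all inputs $x_{1:t+1}$, observations $y_{1:t}$, and noises up to time $t$. Define $\Delta_t \triangleq \hat{m}_t - m_t$ and note that $\Delta_t$ is $\mathcal{F}_{t-1}$-measurable, since both the algorithm's prediction $\hat{m}_t$ and the Kalman prediction $m_t$ are deterministic functions of data observed strictly before time $t$. Meanwhile $e_t = y_t - m_t$ is the innovation, which is a zero-mean $\mathcal{F}_t$-martingale-difference with conditional covariance $V \preceq R_V^2 I$, hence conditionally $R_V$-sub-Gaussian. Consequently, the scalar increments $e_t^\top \Delta_t$ form a martingale-difference sequence with
\[
\mathbb{E}[\exp(\lambda e_t^\top \Delta_t) \mid \mathcal{F}_{t-1}] \le \exp\bigl(\tfrac12 \lambda^2 R_V^2 \|\Delta_t\|_2^2\bigr),
\]
so the natural predictable quadratic variation is controlled by $\sum_t \|\Delta_t\|_2^2 = \cL(T)$.

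Next I would apply a self-normalized martingale tail bound to $S_T$. Specifically, the scalar version of the argument underlying Theorem \ref{thm:self_normalized_vector_matringale} (equivalently Theorem 1 of Abbasi-Yadkori, Pál, Szepesvári 2011 in dimension one, with regularizer $1$) yields, for any $\delta>0$, with probability at least $1-\delta$ and for all $T\ge 1$ simultaneously,
\[
|S_T|^2 \;\le\; 2 R_V^2 \bigl(1+\cL(T)\bigr) \, \log\!\Bigl(\tfrac{(1+\cL(T))^{1/2}}{\delta}\Bigr).
\]
Taking square roots, multiplying by $2$, and loosening constants to absorb the factor of $1/2$ inside the logarithm gives
\[
2\, S_T \;\le\; 8 R_V^2 \bigl(\cL(T)+1\bigr)^{1/2} \Bigl[\,2 + \log\!\tfrac{\cL(T)+1}{\delta}\,\Bigr],
\]
which is exactly the stated bound.

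The only subtle point is that $\cL(T)$ is itself random, so an inequality with a fixed threshold on $\sum_t \|\Delta_t\|^2$ is not directly applicable; this is precisely why the self-normalized framework (rather than, say, a Hoeffding/Azuma bound) is needed, since it lets the normalization adapt to the realized predictable variance. This is the only real obstacle, and it is handled by the same mixture-of-martingales / method-of-mixtures argument used to prove Theorem \ref{thm:self_normalized_vector_matringale} applied in one dimension to the scalar sub-Gaussian noise $e_t^\top \Delta_t / \|\Delta_t\|_2$ with ``features'' $\|\Delta_t\|_2 \in \mathbb{R}$. Everything else is bookkeeping of constants.
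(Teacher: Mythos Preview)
Your proposal is correct and follows essentially the same approach as the paper: both reduce the cross term to a scalar self-normalized martingale and invoke the one-dimensional version of the Abbasi-Yadkori--P\'al--Szepesv\'ari inequality with regularizer $1$. The only cosmetic difference is that the paper flattens the sum across coordinates, writing $\sum_{t,i} e_{t,i}(\hat m_{t,i}-m_{t,i})$ and adapting the filtration accordingly, whereas you treat each $e_t^\top\Delta_t$ as a single sub-Gaussian increment with proxy $R_V^2\|\Delta_t\|_2^2$; your route is slightly cleaner and arrives at the same bound up to constants.
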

\begin{proof}
Write 
\begin{align*}
    \sum_{t=1}^T e_t^\top (\hat{m}_t - m_t) = \sum_{t=1}^T \sum_{i=1}^m e_{t,i} (\hat{m}_{t,i} - m_{t,i}).
\end{align*}
Let $s = m \lfloor s/m \rfloor + r$ and define the following filtration 
\begin{align*}
    \mathcal{F}_s = \{e_{1,1}, \dots, e_{\lfloor s/m \rfloor, r}\}.
\end{align*}
A scalar version of Theorem \ref{thm:self_normalized_vector_matringale} states that the following holds with probability at least $1-\delta$
\begin{align*}
    \Big( \sum_{t=1}^T \|\hat{m}_t - m_t\|_2^2 + 1\Big)^{-1/2} \sum_{t=1}^T e_t^\top (\hat{m}_t - m_t) \leq 4R_V^2 \Big[ 2 + \log \Big( \frac{1}{\delta} \Big) + \log \Big( \sum_{t=1}^T \|\hat{m}_t - m_t\|_2^2 + 1\Big)\Big].
\end{align*}
Therefore, with probability at least $1-\delta$
\begin{align*}
       \sum_{t=1}^T 2 e_t^\top (\hat{m}_t - m_t) \leq 8R_V^2 \Big(\cL(T) + 1\Big)^{1/2} \Big[ 2 + \log \Big(\frac{\cL(T) + 1}{\delta}\Big) \Big].
\end{align*}
\end{proof}

\subsection{Proof of Theorem 1}\label{app:proof_thm_1}

\noindent \begin{proofregret} Recall the regret decomposition given in Appendix \ref{app:regret_decomposition}:
\begin{alignat*}{2}
    \text{Regret}(T) \leq & \sup_{1 \leq t \leq T} \Big(\|E_{t-1} Z_{t-1}^{-1/2}\|_2^2 +  \|B_{t-1} Z_{t-1}^{-1/2}\|_2^2 + \|\alpha \widetilde{\Theta} Z_{t-1}^{-1/2}\|_2^2 \Big)\Big(\sum_{t=1}^T  \|Z_{t-1}^{-1/2} f_t\|_2^2\Big)\\
    & + T \sup_{1 \leq t \leq T} \|b_t\|_2^2 - \sum_{t=1}^T 2e_t^\top (\hat{m}_t - m_t).
\end{alignat*}

Let $\delta_1 = \delta/8$. We describe bounds on each term in the above regret bound. All lemmas and theorems used in this proof contain explicit dependencies on horizon $T$ as well as PAC bound parameters. While one can combine these results to write a regret bound with explicit dependencies on all parameters, we refrain from writing in such detail here for a clear presentation.

\paragraph{Bounding $\mathbf{\|E_{t-1} Z_{t-1}^{-1/2}\|_2^2}$.} According to Theorem \ref{thm:self_normalized_vector_matringale}, with probability at least $1-\delta_1$, the term $\|E_{t-1} Z_{t-1}^{-1/2}\|_2^2$ is bounded by 
    \begin{align*}
        \|E_{t-1} Z_{t-1}^{-1/2}\|_2 \lesssim \poly(R_\Theta, m) \Big[ \log(1/\delta_1) +
        \log(\det(Z_t)) - l\log(\alpha)\Big],
    \end{align*}
    $l = (m+n)k+n$ is the feature vector dimension. We substitute the regularization parameter $\alpha$ and the number of filters $k$ according to  Theorem \ref{thm:regret_bound_paper} assumption (iii). Given the values for $k, \alpha$ and by Lemma \ref{lemma:high_probability_upper_bound_on_det_Z_t}, with probability at least $1-\delta_1$ we have 
    \begin{align*}
        \log(\det(Z_t)) \lesssim \poly(R_\Theta, m, \beta) \polylog(\gamma, \frac{1}{\delta_1}) \log^3(T).
    \end{align*}
    Taking a union bound gives
    \begin{align}\label{eq:proof_first_term}
        \Prob \Big[ \|E_{t-1} Z_{t-1}^{-1/2}\|_2^2 \lesssim \poly(R_\Theta, m, \beta) \polylog(\gamma, \frac{1}{\delta_1}) \log^6(T) \Big] \geq 1-2\delta_1.
    \end{align}
    
\paragraph{Bounding $\mathbf{\|B_{t-1} Z_{t-1}^{-1/2}\|_2^2}$.} Recall the definitions $B_t = \sum_{i=1}^t b_i f_i^\top$ from \eqref{eq:definitions_of_Z_E_B} and $b_i = \widetilde{\Theta}f_t - m_t$ from \eqref{eq:innovation_bias_definitions}. We choose the number of filters $k$ to satisfy \eqref{eq:convex_relaxation_assumption_on_k} with failure probability $\delta_1 > 0$ and $\epsilon = 1/T$,\footnote{Setting $\epsilon = 1/T$ is later used for a uniform bound on $\|b_t\|_2^2$ and is not critical in this part of the proof.} which results in $k \gtrsim_M \log^2(T)$ satisfied by assumption (iii). Therefore, we can apply Theorem \ref{thm:convex_relaxation_bound} which states that $\|b_t\|_2^2 \leq 1/T$ with probability at least $1-\delta_1$. Combining this result with the result of Theorem \ref{thm:self_normalized_vector_matringale} with a union bound yields
\begin{align}\notag 
    \Prob \Big[ \|\Big(\sum_{i=1}^{t-1} b_i f_i^\top  \Big)Z_{t-1}^{-1/2} \|_2 \leq \frac{4}{T}(2m+ \log \Big( \frac{\det(Z_t)^{1/2} \det(\alpha I_l )^{-1/2}}{\delta_1}\Big) )\Big] \geq 1-2\delta_1.
\end{align}
With a similar argument used in bounding $\|E_{t-1} Z_{t-1}^{-1/2}\|_2^2$, we have
\begin{align}\label{eq:proof_second_term}
    \Prob \Big[ \|B_{t-1} Z_{t-1}^{-1/2}\|_2^2 \lesssim \poly(R_\Theta, m, \beta) \polylog(\gamma, \frac{1}{\delta_1}) \frac{\log^6(T)}{T} \Big] \geq 1-3\delta_1.
\end{align}

\paragraph{Bounding $\mathbf{\|\alpha \widetilde{\Theta} Z_{t-1}^{-1/2}\|_2^2}$.} By assumption (iii) and as a result of Lemma \ref{lemma:regularization_term_bound}, we have 
\begin{align*}
    \|\alpha \widetilde{\Theta} Z_{t-1}^{-1/2}\|_2^2 \lesssim 1.
\end{align*}

\paragraph{Bounding $\mathbf{\sum_{t=1}^T  \|Z_{t-1}^{-1/2} f_t\|_2^2}$.} Lemma \ref{lemma:excitation_result} provides the following bound on the excitation term
\begin{align}\label{eq:proof_third_term}
    \Prob \Big[ \sum_{t=1}^T \|Z_{t-1}^{-1/2}f_t\|_2^2 \lesssim \kappa \poly(R_\Theta, m, \beta) \polylog(\gamma, \frac{1}{\delta_1}) \log^5(T) \Big] \geq 1-\delta_1,
\end{align}
where the number filters $k$ is substituted by assumption (iii). 

\paragraph{Bounding $\mathbf{T \sup_{1 \leq t \leq T} \|b_t\|_2^2}$.} Applying Theorem \ref{thm:convex_relaxation_bound} with parameters $\delta_1 > 0, \epsilon = 1/T$, we have
\begin{align}\label{eq:proof_fourth_term}
    \Prob \big[ T \sup_{1 \leq t \leq T} \|b_t\|_2^2 \leq T \epsilon \leq 1 \big] \geq 1-\delta_1.
\end{align}

Recall from Appendix \ref{app:regret_decomposition} that $\cL(T)$ is bounded by 
\begin{align*}
    \cL(T) \leq \sup_{1 \leq t \leq T} \Big(\|E_{t-1} Z_{t-1}^{-1/2}\|_2^2 +  \|B_{t-1} Z_{t-1}^{-1/2}\|_2^2 + \|\alpha \widetilde{\Theta} Z_{t-1}^{-1/2}\|_2^2 \Big)\Big(\sum_{t=1}^T  \|Z_{t-1}^{-1/2} f_t\|_2^2\Big) + T \sup_{1 \leq t \leq T} \|b_t\|_2^2.
\end{align*}
Lemma \ref{lemma:innovation_error} with $\delta_1$ states that
\begin{align}\label{eq:proof_fifth_term}
    \Prob \Big[ \sum_{t=1}^T e_t^\top (\hat{m}_t - m_t) \lesssim \poly(R_\Theta) \polylog\Big(\frac{1}{\delta_1}\Big) \sqrt{\cL(T)+1} \Big] \geq 1-\delta_1.
\end{align}

Combining the bounds given in \eqref{eq:proof_first_term}, \eqref{eq:proof_second_term}, \eqref{eq:proof_third_term}, \eqref{eq:proof_fourth_term}, \eqref{eq:proof_fifth_term}, taking a union probability bound, and setting $\delta = 8\delta_1$ gives 
\begin{align*}
    \Prob \Big[ \text{Regret}(T) \leq \kappa \log^{11}(T) \poly(R_\Theta, \beta, m) \polylog(\gamma, \frac{1}{\delta}) \Big] \geq 1- \delta.
\end{align*}
\end{proofregret}

\section{Auxiliary lemmas}\label{app:technical_lemmas}
In this section, we present a few lemmas that we use throughout the theoretical analysis of our algorithm, presented here for completeness.

The following lemma provides an upper bound on the norm of block Toeplitz matrices \Citep{tsiamis2019finite}.
\begin{lemm}\label{lemma:triangular_block_toeplitz_2_norm}\normalfont{\textbf{(Triangular Block Toeplitz Norm)}} \textit{Let $\cT_i \in \mathbb{R}^{m_1, m_2}$ for $i = 1, 2, \dots, n$. Define the following triangular block Toeplitz matrix}
\begin{align*}
    \cT = \begin{bmatrix}
    \cT_1 & \cT_2 & \cT_3 & \dots & \cT_{n-1} & \cT_n\\
    0 & \cT_1 & \cT_2 & \dots & \cT_{n-2} & \cT_{n-1}\\
    \vdots\\
    0 & 0 & 0 & \dots & \cT_1 & \cT_2\\
    0 & 0 & 0 & \dots & 0 & \cT_1
    \end{bmatrix}.
\end{align*}
\textit{Then,}
\begin{align*}
    \|\cT\|_2 \leq \sum_{i=1}^n \|\cT_i\|_2.
\end{align*}
\end{lemm}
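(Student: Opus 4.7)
The plan is to decompose $\cT$ as a sum of $n$ simpler block matrices, each of which is a Kronecker product whose operator norm is easy to compute, and then apply the triangle inequality.

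Specifically, I would write $\cT = \sum_{i=1}^{n} \cT^{(i)}$, where $\cT^{(i)}$ is the block matrix obtained from $\cT$ by keeping only the blocks equal to $\cT_i$ (on the $(i-1)$-th block superdiagonal) and zeroing out everything else. Each $\cT^{(i)}$ factors as the Kronecker product $\cT^{(i)} = S_{i-1} \otimes \cT_i$, where $S_{i-1} \in \mathbb{R}^{n\times n}$ is the scalar shift matrix with ones on the $(i-1)$-th superdiagonal and zeros elsewhere. Since $S_{i-1}$ is a partial isometry (its nonzero columns form an orthonormal set), $\|S_{i-1}\|_2 = 1$, and the operator norm is multiplicative under Kronecker products, giving $\|\cT^{(i)}\|_2 = \|S_{i-1}\|_2 \cdot \|\cT_i\|_2 = \|\cT_i\|_2$.

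Applying the triangle inequality for the operator norm then yields
\[
\|\cT\|_2 \;\le\; \sum_{i=1}^{n} \|\cT^{(i)}\|_2 \;=\; \sum_{i=1}^{n} \|\cT_i\|_2,
\]
which is the desired bound. There is no real obstacle here; the only point that requires a line of justification is the identification of the $(i-1)$-th superdiagonal block pattern with a Kronecker product and the computation $\|S_{i-1}\|_2 = 1$. Alternatively, if one prefers to avoid Kronecker notation, one can argue directly: for any block vector $x = [x_1^\top,\dots,x_n^\top]^\top$ with $\sum_j \|x_j\|_2^2 = 1$, the $j$-th block of $\cT^{(i)} x$ is $\cT_i x_{j+i-1}$ (or zero if $j+i-1 > n$), so $\|\cT^{(i)} x\|_2^2 = \sum_{j} \|\cT_i x_{j+i-1}\|_2^2 \le \|\cT_i\|_2^2 \sum_j \|x_{j+i-1}\|_2^2 \le \|\cT_i\|_2^2$, giving the same per-term bound before summing.
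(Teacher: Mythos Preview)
Your proof is correct. The paper does not actually prove this lemma; it simply states it for completeness and attributes it to \citet{tsiamis2019finite}. Your decomposition $\cT = \sum_{i=1}^{n} S_{i-1}\otimes \cT_i$ together with $\|S_{i-1}\|_2 = 1$ and the multiplicativity of the operator norm under Kronecker products is exactly the standard way to establish this bound, and the direct block-vector argument you give as an alternative is equally valid.
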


The following is a simple result for upper bounding a series.
\begin{lemm} \label{lemma:poly_log_sum}\textit{Let $t \in \mathbb{N}$ and let $z_t$ to be a non-negative sequence bounded by a non-decreasing poly-logarithmic function $g(t)$. Suppose that the following sum
\begin{align*}
    \sum_{t=1}^T \frac{z_t}{1+z_t}
\end{align*}
is bounded by $h(T)$, a non-decreasing poly-logarithmic function of $T$. Then, $\sum_{t=1}^T z_t$ is bounded by a non-decreasing function poly-logarithmic in $T$.}
\end{lemm}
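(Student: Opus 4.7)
The plan is to exploit the uniform upper bound $z_t \le g(T)$ to invert the map $z \mapsto z/(1+z)$ with a controlled loss factor. Since $z \mapsto z/(1+z)$ is increasing and $z_t \le g(t) \le g(T)$ by monotonicity of $g$, I have the pointwise lower bound
\begin{equation*}
\frac{z_t}{1+z_t} \;\ge\; \frac{z_t}{1+g(T)} \quad \text{for every } 1 \le t \le T.
\end{equation*}
Summing over $t \in \{1,\dots,T\}$ and invoking the hypothesis on $\sum_t z_t/(1+z_t)$, I obtain
\begin{equation*}
\frac{1}{1+g(T)} \sum_{t=1}^T z_t \;\le\; \sum_{t=1}^T \frac{z_t}{1+z_t} \;\le\; h(T),
\end{equation*}
so $\sum_{t=1}^T z_t \le (1+g(T)) h(T)$.

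The final step is to observe that the product $(1+g(T))h(T)$ of two non-decreasing poly-logarithmic functions of $T$ is again a non-decreasing poly-logarithmic function of $T$. This follows because poly-log functions are closed under addition, multiplication by constants, and multiplication with one another: if $g(T) \lesssim \log^a(T)$ and $h(T) \lesssim \log^b(T)$ then $(1+g(T))h(T) \lesssim \log^{a+b}(T) + \log^b(T)$, which is bounded by a poly-logarithmic function.

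There is no real obstacle; the only thing worth flagging is that the argument critically uses the \emph{uniform} bound $g(T)$ rather than the pointwise bound $g(t)$, which is legitimate precisely because $g$ is assumed non-decreasing. Without that monotonicity, one would need to handle the weighted sum $\sum_t z_t/(1+g(t))$ more carefully (e.g.\ via Abel summation), but here the one-line monotonicity argument suffices.
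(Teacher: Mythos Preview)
Your argument is correct and essentially identical to the paper's: both use the uniform bound $z_t \le g(T)$ (the paper phrases this via $z_m = \max_t z_t \le g(T)$) to write $z_t \le (1+g(T))\,\frac{z_t}{1+z_t}$, sum, and conclude $\sum_{t=1}^T z_t \le (1+g(T))h(T)$.
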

\begin{proof}
Let $z_m = \max\limits_{t \in \{1,\dots, T \}} z_i$. We have $z_m \leq g(m) \leq g(T)$. Therefore,
\begin{align}\label{eq:combined_bounds_series}
    \sum_{t=1}^T z_t \leq \sum_{t=1}^T \frac{1+z_m}{1+z_t} z_t \leq (1+g(T)) \sum_{t=1}^T \frac{z_t}{1+z_t} \leq (1+g(T)) h(T),
\end{align}
which is the desired conclusion.
\end{proof}

\section{Additional experiments}\label{app:extra_experiments}

\paragraph{Comparison with the EM algorithm.} We conduct an experiment in a scalar LDS to compare the performance of our algorithm with the EM algorithm that estimates system parameters (Figure \ref{fig:experiments_extra}, left). The parameters estimated by the EM algorithm are later used by the Kalman filter for predictions. In this experiment, we set the horizon $T = 200$ due to the large computation time required by the EM algorithm. The number of filters $k$ is set to 5 for all other three algorithms. The experiment was simulated 100 independent times and the average error together with the 99\% confidence intervals are presented.

\begin{figure}[h]
    \centering
    \includegraphics[scale = 0.33]{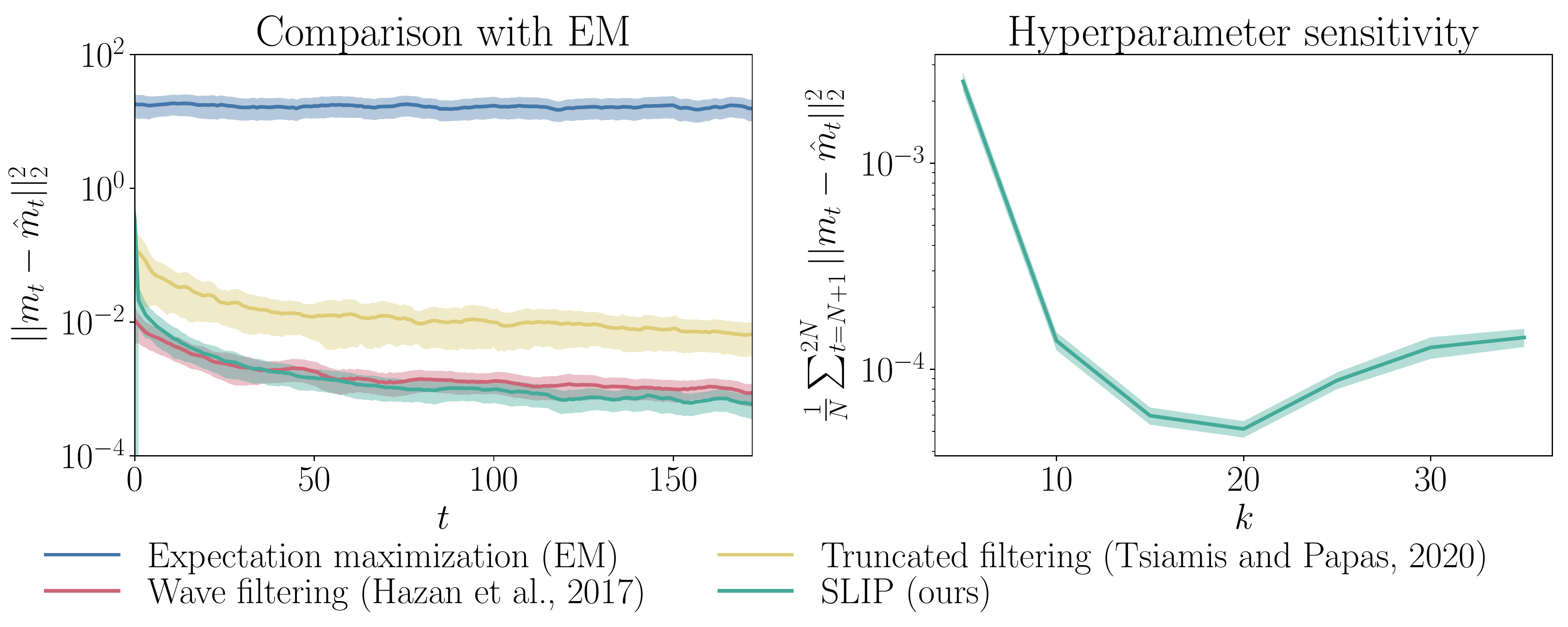}
    \caption{Left: Performance of our algorithm compared with wave filtering, truncated filtering, and expectation maximization in a scalar system with parameters $A = B = C = D = 1$, noise covariance matrices $Q = R = 0.001$, inputs $x_t \sim \mathcal{N}(0,2)$, and horizon $T = 200$. Right: Hyperparameter sensitivity of our algorithm in the same systems with inputs $x_t \sim \mathcal{N}(0,0.5)$ and horizon $T = 10000$.}
    \label{fig:experiments_extra}
\end{figure}

For the system considered in this experiment, EM performs poorly. System-identification-based methods such as EM, besides being significantly slower, do not have regret guarantees and they can fail in some examples; a similar observation was made by \Citet{hazan2017learning}.

\paragraph{On hyperparameters.} The SLIP algorithm has two hyperparameters: the number of filters $k$ and the regularization parameter $\alpha$. In the experiments, we set $\alpha > 0$ only when the empirical feature covariance matrix is singular, which we observe only happens in the first two time steps. For the number of filters $k$, Theorem \ref{thm:regret_bound_paper} provides a guideline of choosing $k$ of order $\log^2(T)$. The right plot in Figure \ref{fig:experiments_extra} demonstrates the sensitivity of the SLIP algorithm with respect to the number of filters $k$. The system considered for this experiment is scalar with Gaussian inputs and the horizon is set to 10000. As before, the experiment was simulated 100 independent times. We vary $k$ from 5 to 35 and measure the average prediction error from 5000 to 10000 ($N = 5000$ in the plot). We observe that the SLIP algorithm is robust with respect to parameter $k$.

\end{document}